\pgfplotsset{compat=1.16}
\definecolor{LightRed}{rgb}{1,.6,.6}
\definecolor{LightGreen}{rgb}{.6,1,.6}
\theoremstyle{plain}
\newtheorem{theorem}{Theorem}[section]
\newtheorem{proposition}[theorem]{Proposition}
\newtheorem{lemma}[theorem]{Lemma}
\newtheorem{corollary}[theorem]{Corollary}
\theoremstyle{definition}
\newtheorem{definition}[theorem]{Definition}
\newtheorem{assumption}[theorem]{Assumption}
\newtheorem{example}[theorem]{Example}
\theoremstyle{remark}
\icmltitlerunning{Minimizing $f$-Divergences by Interpolating Velocity Fields}
\begin{document}
\makeatletter
\DeclareFontFamily{U}{tipa}{}
\DeclareFontShape{U}{tipa}{m}{n}{<->tipa10}{}
\newcommand{\arc@char}{{\usefont{U}{tipa}{m}{n}\symbol{62}}}%

\newcommand{\arc}[1]{\mathpalette\arc@arc{#1}}

\newcommand{\arc@arc}[2]{%
  \sbox0{$\m@th#1#2$}%
  \vbox{
    \hbox{\resizebox{\wd0}{\height}{\arc@char}}
    \nointerlineskip
    \box0
  }%
}
\makeatother
\newtheorem{mytheorem}{Theorem}
\newtheorem{mylemma}{Lemma}
\newcommand{\argmax}{\mathop{\rm argmax}\limits}
\newcommand{\argmin}{\mathop{\rm argmin}\limits}
\newcommand{\TpPsi}{{T_{p(\boldx;\boldtheta)}\boldpsi(\boldx)}}
\newcommand{\phatp}{\hat{p}(X_p)}
\newcommand{\arcrocstar}{\arc{\mathrm{ROC}}(t^*)}

\newcommand{\hilight}[1]{{#1}}
\newcommand{\highlight}[2][yellow]{\mathchoice%
  {\colorbox{#1}{$\displaystyle#2$}}%
  {\colorbox{#1}{$\textstyle#2$}}%
  {\colorbox{#1}{$\scriptstyle#2$}}%
  {\colorbox{#1}{$\scriptscriptstyle#2$}}}%
\newcommand{\infHessLL}{\lambda_{\min}\left[\nabla_\bolddelta^2\ell(\bolddelta)\right]}
\newcommand{\hessianLLstar}{\nabla_\bolddelta^2 \ell(\bolddelta^*)}
\newcommand{\hessianLL}{\nabla_\bolddelta^2 \ell(\bolddelta)}
\newcommand{\calX}{{\mathcal{X}}}
\newcommand{\calY}{{\mathcal{Y}}}
\newcommand{\boldtheta}{{\boldsymbol{\theta}}}
\newcommand{\bolddelta}{{\boldsymbol{\delta}}}
\newcommand{\boldthetaP}{{\boldsymbol{\theta}}^{(p)}}
\newcommand{\boldthetaQ}{{\boldsymbol{\theta}}^{(q)}}
\newcommand{\boldthetaPtop}{{\boldsymbol{\theta}}^{(p)\top}}
\newcommand{\factorp}{{\phi}^P}
\newcommand{\factorq}{{\phi}^Q}
\newcommand{\boldalpha}{{\boldsymbol{\alpha}}}
\newcommand{\boldHh}{{\widehat{\boldH}}}
\newcommand{\boldeta}{{\boldsymbol{\eta}}}
\newcommand{\bolda}{{\boldsymbol{a}}}
\newcommand{\boldH}{{\boldsymbol{H}}}
\newcommand{\boldA}{{\boldsymbol{A}}}
\newcommand{\boldB}{{\boldsymbol{B}}}
\newcommand{\boldC}{{\boldsymbol{C}}}
\newcommand{\boldS}{{\boldsymbol{S}}}
\newcommand{\boldK}{{\boldsymbol{K}}}
\newcommand{\boldmu}{{\boldsymbol{\mu}}}
\newcommand{\boldJ}{{\boldsymbol{J}}}
\newcommand{\boldT}{{\boldsymbol{T}}}
\newcommand{\boldTheta}{{\boldsymbol{\Theta}}}
\newcommand{\boldf}{{\boldsymbol{f}}}
\newcommand{\boldepsilon}{{\boldsymbol{\epsilon}}}
\newcommand{\boldu}{{\boldsymbol{u}}}
\newcommand{\boldm}{{\boldsymbol{m}}}
\newcommand{\bolds}{{\boldsymbol{s}}}
\newcommand{\KLpq}{\mathrm{KL}[p,q]}
\newcommand{\KLqp}{\mathrm{KL}[q,p]}
\newcommand{\ChiPpq}{\chi^2_\mathrm{p}[p,q]}
\newcommand{\ChiNpq}{\chi^2_\mathrm{n}[p,q]}
\newcommand{\boldone}{{\boldsymbol{1}}}
\newcommand{\boldxi}{{\boldsymbol{\xi}}}
\newcommand{\boldSigma}{{\boldsymbol{\Sigma}}}
\newcommand{\boldv}{{\boldsymbol{v}}}
\newcommand{\boldM}{{\boldsymbol{M}}}
\newcommand{\boldW}{{\boldsymbol{W}}}
\newcommand{\boldk}{{\boldsymbol{k}}}
\newcommand{\boldb}{{\boldsymbol{b}}}
\newcommand{\boldbeta}{{\boldsymbol{\beta}}}
\newcommand{\boldDelta}{{\boldsymbol{\Delta}}}
\newcommand{\nnu}{\nsample}
\newcommand{\nsample}{n}
\newcommand{\subsetr}{\boldsymbol{r}}
\newcommand{\boldthetah}{{\widehat{\boldtheta}}}
\newcommand{\mathbbR}{\mathbb{R}}
\newcommand{\KL}{\mathrm{KL}}
\newcommand{\numparams}{n}
\newcommand{\boldhh}{{\widehat{\boldh}}}
\newcommand{\boldh}{{\boldsymbol{h}}}
\newcommand{\Hh}{{\widehat{H}}}
\newcommand{\boldxnu}{\boldY}
\newcommand{\boldPhi}{{\boldsymbol{\Phi}}}
\newcommand{\boldvarphi}{{\boldsymbol{\varphi}}}
\newcommand{\boldx}{{\boldsymbol{x}}}
\newcommand{\bolde}{\boldsymbol{e}}
\newcommand{\boldxp}{{\boldsymbol{x}}_{p}}
\newcommand{\boldxq}{{\boldsymbol{x}}_{q}}
\newcommand{\boldz}{{\boldsymbol{z}}}
\newcommand{\boldg}{{\boldsymbol{g}}}
\newcommand{\boldV}{{\boldsymbol{V}}}
\newcommand{\boldw}{{\boldsymbol{w}}}
\newcommand{\boldr}{{\boldsymbol{r}}}
\newcommand{\boldQ}{{\boldsymbol{Q}}}
\newcommand{\boldF}{{\boldsymbol{F}}}
\newcommand{\boldphi}{{\boldsymbol{\phi}}}
\newcommand{\boldzero}{{\boldsymbol{0}}}
\newcommand{\thetahat}{{\hat{\boldsymbol{\theta}}}}
\newcommand{\thetaShat}{{\hat{\boldsymbol{\theta}}_S}}
\newcommand{\thetaSchat}{{\hat{\boldsymbol{\theta}}_{S^c}}}
\newcommand{\zhat}{{\hat{\boldsymbol{z}}}}
\newcommand{\zSchat}{{\hat{\boldsymbol{z}}_{S^c}}}
\newcommand{\zShat}{{\hat{\boldsymbol{z}}_{S}}}
\newcommand{\nde}{\nsample'}
\newcommand{\boldxde}{\boldY'}
\newcommand{\boldX}{{\boldsymbol{X}}}
\newcommand{\boldY}{{\boldsymbol{Y}}}
\newcommand{\boldy}{{\boldsymbol{y}}}
\newcommand{\boldt}{{\boldsymbol{t}}}
\newcommand{\boldYnu}{{\boldsymbol{Y}}}
\newcommand{\boldYde}{{\boldsymbol{Y}}}
\newcommand{\boldpsi}{{\boldsymbol{\psi}}}
\newcommand{\hh}{{\widehat{h}}}
\newcommand{\boldI}{{\boldsymbol{I}}}
\newcommand{\PE}{{\widehat{PE}}}
\newcommand{\ratioh}{\widehat{\ratiosymbol}}
\newcommand{\ratiosymbol}{r}
\newcommand{\ratiomodel}{g}
\newcommand{\thetah}{{\widehat{\theta}}}
\newcommand{\mathbbE}{\mathbb{E}}
\newcommand{\pnu}{p_\mathrm{te}}
\newcommand{\pde}{p_\mathrm{rf}}
\newcommand{\refsection}{\boldS_\mathrm{rf}}
\newcommand{\tesection}{\boldS_\mathrm{te}}
\newcommand{\refY}{\boldY_\mathrm{rf}}
\newcommand{\teY}{\boldY_\mathrm{te}}
\newcommand{\nseg}{n}
\newcommand{\distP}{P}
\newcommand{\distQ}{Q}
\newcommand{\iid}{\stackrel{\mathrm{i.i.d.}}{\sim}}
\newcommand{\dx}{\mathrm{d}\boldx}
\newcommand{\dy}{\mathrm{d}\boldy}
\newcommand{\hatE}{\widehat{\mathbbE}}
\newcommand{\gxeta}{g(\boldx;\boldeta)}
\newcommand{\Zeta}{Z(\boldeta)}
\newcommand{\Zetahat}{\hat{Z}(\boldeta)}
\newcommand{\FPR}{\mathrm{FPR}}
\newcommand{\TPR}{\mathrm{TPR}}

\def\ratio{r}
\def\relratio{{\ratio}_{\alpha}}

\def\ci{\perp\!\!\!\perp} %
\newcommand\independent{\protect\mathpalette{\protect\independenT}{\perp}} %
\def\independenT#1#2{\mathrel{\rlap{$#1#2$}\mkern2mu{#1#2}}} 
\newcommand*\xor{\mathbin{\oplus}}

\newcommand{\vertiii}[1]{{\left\vert\kern-0.25ex\left\vert\kern-0.25ex\left\vert #1 
    \right\vert\kern-0.25ex\right\vert\kern-0.25ex\right\vert}}

\twocolumn[
\icmltitle{Minimizing $f$-Divergences by Interpolating Velocity Fields}

\icmlkeywords{Machine Learning, ICML}

\icmlsetsymbol{equal}{*}

\begin{icmlauthorlist}
\icmlauthor{Song Liu}{uob}
\icmlauthor{Jiahao Yu}{uob}
\icmlauthor{Jack Simons}{uob}
\icmlauthor{Mingxuan Yi}{uob}
\icmlauthor{Mark Beaumont}{uob}
\end{icmlauthorlist}

\icmlaffiliation{uob}{University of Bristol, Bristol, UK
}

\icmlcorrespondingauthor{Song Liu}{song.liu@bristol.ac.uk}

\vskip 0.3in
]
\printAffiliationsAndNotice{}
\begin{abstract}
Many machine learning problems can be seen as approximating a \textit{target} distribution using a \textit{particle} distribution by minimizing their statistical discrepancy. 
Wasserstein Gradient Flow can move particles along a path that minimizes the $f$-divergence between the target and particle distributions. To move particles, we need to calculate the corresponding velocity fields derived from a density ratio function between these two distributions. Previous works estimated such density ratio functions and then differentiated the estimated ratios. These approaches may suffer from overfitting, leading to a less accurate estimate of the velocity fields. Inspired by non-parametric curve fitting, 
we directly estimate these velocity fields using interpolation techniques. 
We prove that our estimators are consistent under mild conditions. 
We validate their effectiveness using novel applications on domain adaptation and missing data imputation. The code for reproducing our results can be found at \url{https://github.com/anewgithubname/gradest2}. This manuscript is an extended version of the ICML2024 version. 

\end{abstract}

\section{Introduction}

Many machine learning problems can be formulated as minimizing statistical divergences between distributions, e.g., Variational Inference \citep{Blei2017}, Generative Modeling \citep{Nowozin2016b, yi2023monoflow}, Domain Adaptation \citep{courty2017joint,yu2021divergence}, and Data Imputation \citep{muzellec2020missing}. 
Among many divergence minimization techniques, 
Particle-based gradient descent reduces the divergence between a set of particles (which define a distribution) and the target distribution by iteratively moving particles according to an update rule. 

One such algorithm is Stein Variational Gradient Descent (SVGD) \citep{LiuQ2016SVGD,Liu2017stein}. It minimizes the Kullback-Leibler (KL) divergence using the steepest descent algorithm and has achieved promising results in Bayesian inference. However, to compute the SVGD updates, we need unnormalized target density functions. If we only have samples from the target distribution—as is often the case in applications like domain adaptation and generative model training—    we cannot directly apply SVGD to minimize the $f$-divergences.  

Wasserstein Gradient Flow (WGF) describes the evolution of a marginal measure $q_t$ along the steepest descent direction of a functional objective in Wasserstein geometry where $x_t \sim q_t$ follows a probability flow ODE. 
It can be employed to minimize an $f$-divergence between a particle distribution and the target distribution. 
Particularly, such WGFs characterize the following ODE 
\citep{yi2023monoflow,gao2019deep}
\[\dx_t = \nabla (h\circ r_t)(\boldx_t)\mathrm{d}t, ~~~ t\in [0, \infty). \] 
Here $r_t := \frac{p}{q_t}$ is the ratio between the target density $p$ and particle density $q_t$ at time $t$, and $h$ is a known function which depends on the $f$-divergence. The gradient operator $\nabla$ computes the gradient of the composite function $h\circ r_t$. 
If we know $r_t$, simulating the above ODE would be straightforward using a discrete-time Euler method. 
However, the main challenge is that we do not know the ratio $r_t$ in practice. In the context we consider, we know neither the particle nor target density, which constitutes $r_t$.

\begin{figure}
    \centering
    \includegraphics[trim = {.7cm, 0cm, .7cm, 0cm}, width=.48\textwidth]{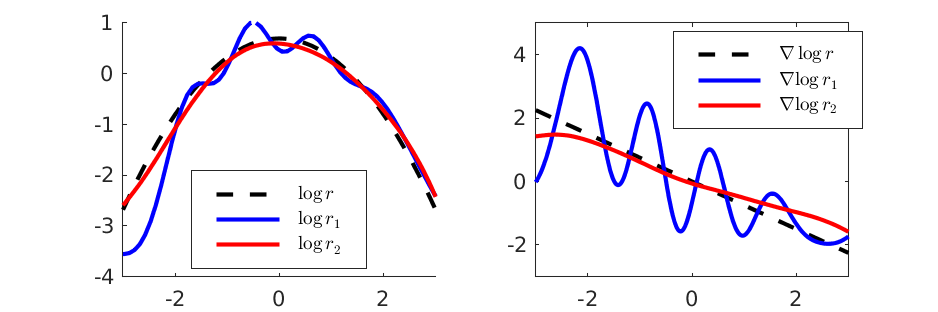}
    \caption{
    Estimating a log density ratio $ \log r$ using a flexible model (RBF kernel) leads to a overfitted estimate ({\color{blue}{$\log r_1$}}). 
    The overfitting consequently causes huge fluctuations in the derivative {\color{blue}$(\log r_1)'$}. Our proposed method provides a much more stable estimate {\color{red}$\log r_2$} and a more accurate estimate of {\color{red}$(\log r_2)'$}. 
    }\label{fig.illustration}
\end{figure}

To overcome this issue, recent works \citep{gao2019deep,ansarirefining2021,Simons2021Variational} first obtain an estimate $\hat{r}_t$ using a density ratio estimator \citep{Sugiyama2012}, then differentiate  $h\circ \hat{r}_t$ to obtain $\nabla (h\circ \hat{r}_t)(\boldx_t)$. However, like other estimation tasks, density ratio estimation can be prone to overfitting. The risk of overfitting is further exacerbated when employing flexible models such as kernel models or neural networks. 
Overfitting can be disastrous for gradient estimation: A wiggly fit of $r_t$ will cause huge fluctuations in gradient (see the blue fit in Figure \ref{fig.illustration}). 
Moreover, density ratio estimation lacks the inductive bias for the density ratio gradient estimation. 
In other words, while it might be good at estimating the ratio itself, it doesn't have any built-in assumptions to capture the gradient of the density ratio function accurately.

In this paper, we \textbf{directly approximate the velocity fields induced by WGF}, i.e., $\nabla (h\circ r_t)(\boldx_t)$.
We show that the backward KL velocity field, where $h = \log$, can be effectively estimated using Nadaraya-Watson (NW) interpolation if we know $\nabla \log p$, and this estimator is closely related to SVGD.  
We prove the estimation error of $\nabla \log  r_t(\boldx_t)$ vanishes as the kernel bandwidth approaches to zero.
This finding motivates us to propose a more general linear interpolation method to approximate $\nabla (h\circ r_t)(\boldx_t)$ for any general $h$ functions using only samples from $p$ and $q_t$.
Our estimators are based on the idea that, within the neighbourhood of a given point, the best linear approximation of $h \circ r_t$ has slope 
$\nabla (h \circ r_t)$.    
Under mild conditions, we show that our estimators are also consistent for estimating $\nabla (h\circ r_t)(\boldx_t)$ and achieve the optimal non-parametric regression rate. Finally, equipped with our proposed gradient estimator, we test WGF on two novel applications: \textit{domain adaptation} and \textit{missing data imputation} and achieve promising performance.

\section{Background}

\textbf{Notation}: $\mathbb{R}^d$ is the $d$-dimensional real domain. Vectors are lowercase bold letters, e.g., $\boldx := [x_1, \ldots, x_d]^\top$. $\boldx_{-j}$ is a subvector of $\boldx$ obtained by excluding the $j$-th dimension. Matrices are uppercase bold letters, e.g., $\boldX, \boldY$. $f \circ g$ is the composite function $f(g(\cdot))$. 
$f'$ is the derivative of a univariate function. $\partial_{i} f$ means the partial derivative with respect to the $i$-th input of $f$  and 
$\nabla f := [\partial_1 f, \partial_2 f, \cdots, \partial_d f]^\top$. 
$\nabla^\top f$ represents its transpose. 
$\nabla_\boldx f(\boldx,\boldy)$ represents the gradient of $f$ with respect to the input $\boldx$. 
$\nabla \boldf \in \mathbbR^{d \times m}$ represents the Jacobian of a vector-valued function $\boldf : \mathbbR^m \to \mathbbR^d$.
$\partial^2_{i} f$ means the second-order partial derivative with respect to the $i$-th input of $f$. 
$\nabla^2 f$ is the Hessian of $f$. 
$\|\cdot\|$ is the $\ell_2$ norm of a vector or the spectral norm of a matrix. 
$\lambda_{\min}\left[\boldX\right]$ represents the smallest eigenvalue of a matrix $\boldX$. 
$a \vee b$ is the greater value between two scalars $a$ and $b$.
$\mathcal{P}(\mathbb{R}^d)$ is the space of probability measures defined on $\mathbb{R}^d$ equipped with Wasserstein-2 metric or Wasserstein space for short. $\hat{\mathbbE}[\cdot]$ is the sample approximation of an expectation $\mathbbE[\cdot]$.

We begin by introducing WGF of $f$-divergence, an effective technique for minimizing $f$-divergence between the particle and target distribution. 

\subsection{Wasserstein Gradient Flows of $f$-divergence}

In general terms, a Wasserstein Gradient Flow is a curve in probability space \citep{ambrosio2005gradient}. By moving a probability measure along this curve, a functional objective (such as a statistical divergence) is reduced. 
In this work, we focus solely on using $f$-divergences as the functional objective.
Let $q_t: \mathbb{R}_+ \to \mathcal{P}(\mathbb{R}^d)$ be a curve in Wasserstein space. 
Consider an $f$-divergence defined by $D_f[p, q_t] := \int q_t(\boldx) f(r_t(\boldx)) \dx$, where $r_t:= \frac{p}{q_t}$. $f$ is a twice differentiable convex function with $f(1) = 0$.  
\begin{theorem}
[Corollary 3.3 in \citep{yi2023monoflow}]
\label{thm.gf}
The Wasserstein gradient flow of $D_f[p, q_t]$ characterizes the particle evolution via the ODE:
\begin{align*}
    \dx_t &= \nabla (h\circ r_t)(\boldx_t) \mathrm{d} t,~ h(r_t) = r_tf'(r_t) - f(r_t) .
\end{align*}
\end{theorem}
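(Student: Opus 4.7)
The plan is to derive this ODE from the general principle that a Wasserstein gradient flow of a functional $F[q]$ is induced by the velocity field $v_t = -\nabla \frac{\delta F}{\delta q_t}$, and that particles following $\dx_t = v_t(\boldx_t)\mathrm{d}t$ have marginal densities $q_t$ satisfying the continuity equation $\partial_t q_t + \nabla \cdot (q_t v_t) = 0$. Specialising $F[q] := D_f[p, q] = \int q(\boldx)\, f(p(\boldx)/q(\boldx))\, \dx$, the theorem reduces to computing the first variation of this integral with respect to $q$ and checking that $-\nabla \tfrac{\delta F}{\delta q} = \nabla (h \circ r)$.

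First I would recall the standard setup: for a functional $F[q]$ with Lagrangian density $L(q,\boldx)$, the first variation is $\frac{\delta F}{\delta q}(\boldx) = \frac{\partial L}{\partial q}(q(\boldx),\boldx)$, defined modulo an additive constant so that it pairs with mean-zero perturbations $\eta$ via $\lim_{\epsilon \to 0} \frac{F[q+\epsilon \eta]-F[q]}{\epsilon} = \int \tfrac{\delta F}{\delta q}(\boldx)\, \eta(\boldx)\, \dx$. I would then apply this with $L(q,\boldx) = q\, f(p/q)$, treating $p$ as fixed and differentiating in $q$:
\begin{align*}
\frac{\partial L}{\partial q} \;=\; f\!\left(\tfrac{p}{q}\right) + q\, f'\!\left(\tfrac{p}{q}\right)\cdot \left(-\tfrac{p}{q^2}\right) \;=\; f(r) - r\, f'(r) \;=\; -\,h(r),
\end{align*}
where $r = p/q$ and $h(r) = r f'(r) - f(r)$ is precisely the function appearing in the theorem. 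Consequently $\frac{\delta F}{\delta q_t}(\boldx) = -h(r_t(\boldx))$, so the Wasserstein gradient flow velocity field is $v_t(\boldx) = -\nabla\!\left(\tfrac{\delta F}{\delta q_t}\right)(\boldx) = \nabla (h \circ r_t)(\boldx)$.

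To finish, I would invoke the standard correspondence between the continuity equation and the characteristic ODE: the law $q_t$ of the solution to $\dx_t = \nabla(h \circ r_t)(\boldx_t)\,\mathrm{d}t$, with $\boldx_0 \sim q_0$, satisfies $\partial_t q_t + \nabla\!\cdot(q_t\, \nabla(h\circ r_t)) = 0$, which is exactly the Wasserstein gradient flow equation for $D_f[p,\cdot]$ established in the previous step. Thus particles following the displayed ODE trace out the WGF trajectory.

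The main obstacle I anticipate is the bookkeeping in the first variation: one must argue that the additive-constant ambiguity is harmless (it disappears upon taking $\nabla$), justify exchanging differentiation and integration under mild regularity on $f$, $p$, and $q_t$, and assume enough smoothness on the velocity field for the characteristic ODE to have a unique flow and for the push-forward measure to solve the continuity equation in the distributional sense. These are routine under the standing assumption that $f$ is twice differentiable and $r_t$ is sufficiently regular, but they are where essentially all the technical work sits; the algebraic identification $\frac{\partial}{\partial q}[q f(p/q)] = -h(p/q)$ is the only substantive calculation.
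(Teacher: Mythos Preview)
Your derivation is correct: the key computation $\frac{\partial}{\partial q}\bigl[q\,f(p/q)\bigr] = f(r) - r f'(r) = -h(r)$ is exactly the identification needed, and the rest is the standard continuity-equation/characteristic-ODE correspondence for Wasserstein gradient flows.

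However, there is nothing to compare against: the paper does not prove this theorem at all. It is stated in the background section as a citation of Corollary~3.3 in \citet{yi2023monoflow}, and no argument is given anywhere in the text or appendix. Your proposal therefore supplies a proof where the paper simply imports the result. The approach you take---compute the first variation of $q\mapsto\int q\,f(p/q)\,\dx$ and read off the velocity field as $-\nabla\tfrac{\delta F}{\delta q}$---is the standard route and is almost certainly what the cited reference does as well.
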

Simply speaking, particles evolve in Euclidean space according to the above ODE moves the corresponding $q_t$ along a curve where $D_f[p, q_t]$ always decreases with time. 

Theorem \ref{thm.gf} establishes a relationship between $f$ and a function $h :\mathbbR_+ \mapsto \mathbbR$. 
The gradient field of $h \circ r_t$ over time as $t \to \infty$ is referred to as the \textit{WGF velocity field}. 
Note that, in \citet{gao2019deep} and \citet{ansarirefining2021}, authors provided a similar theorem for the ``reversed'' $f$-divergence $\int p(\boldx) f(\frac{q_t(\boldx)}{p(\boldx)}) \mathrm{d}\boldx$, which is different from the definition of $f$-divergence that we use in our paper.
Some frequently used $f$-divergences and their corresponding $h$ functions are listed in Table \ref{table:f_div_ode}. 
Specifically, for the backward KL divergence we have $\nabla (h\circ r_t)(\boldx_t)\vert_{h = \log(\cdot) } = \nabla \log r_t(\boldx_t)$. 

\begin{table}[t]
\centering
\begin{tabular}{|c|c|c|c|}
\hline
\textbf{Name} & \textbf{Notation} & $f(r_t)$ & $h(r_t)$ \\ \hline
Forw. KL & $\mathrm{KL}[p,q_t]$ & $r_t\log(r_t)$ & $r_t$ \\ \hline
Back. KL & $\mathrm{KL}[q_t, p]$ & $-\log(r_t)$ & $\log r_t - 1$ \\ \hline
Pearson's $\chi^2$ & $\chi_\mathrm{p}^2[p,q_t]$ & $\frac{1}{2} (r_t - 1)^2$ & $\frac{1}{2}r_t^2 - \frac{1}{2}$ \\ \hline
Neyman's $\chi^2$ & $\chi_\mathrm{n}^2[p,q_t]$ & $\frac{1}{2r_t} - \frac{1}{2}$ & $-\frac{1}{r_t} + \frac{1}{2}$ \\
\hline
\end{tabular}
\caption{Some $f$-divergences, their definitions and $h$ functions computed according to Theorem \ref{thm.gf}.}
\label{table:f_div_ode}
\vspace*{-5mm}
\end{table}

In reality, we move particles by simulating the above ODE using the forward Euler method:
We draw particles from an initial distribution $q_0$ and iteratively update them for time $t = 0, 1 \dots T$ according to the following rule:
\begin{align}
\label{eq.wgf}
    \boldx_{t+1} := \boldx_{t} + \eta \nabla (h\circ r_t)(\boldx_t)
\end{align}
where $\eta$ is a small step size. There is a slight abuse of notation, and we reuse $t$ for discrete-time indices \footnote{From now on, we will only discuss discrete-time algorithms.}.

Although \eqref{eq.wgf} seems straightforward, we normally do not have access to $r_t$, so the update in \eqref{eq.wgf} cannot be readily performed. In previous works, such as \citep{gao2019deep,Simons2021Variational}, $r_t$ is estimated using density ratio estimators and the WGF is simulated using the estimated ratio. Although these estimators achieved promising results, the density ratio estimators are not designed for usage in WGF algorithms. For example, a small density ratio estimation error could lead to huge deviations in gradient estimation, as we demonstrated in Figure \ref{fig.illustration}. Others \citep{wang2022projected} propose to estimate the gradient flow using Kernel Density Estimation (KDE) on densities $p$ and $q_t$ separately (See Section \ref{sec.kde.app}), then compute the log ratio and its gradient. However, KDE tends to perform 
poorly in high dimensional settings (see e.g., \citep{scott1991feasibility}). 

\section{Direct Velocity Field Estimation by Interpolation}
In this work, we consider directly estimating the velocity field, i.e., directly modelling and estimating  $\nabla (h\circ r_t)$. We are encouraged by the recent successes in Score Matching \citep{Hyvaerinen2005,hyvarinen2007,vincent2011connection,song2020sliced},
which is a direct estimator of a log density gradient. It works by minimizing the squared differences between the true log density gradient and the model gradient.
However, such a technique cannot be easily adapted to estimate $\nabla (h\circ r_t)$, even for $h(r) = \log r$ (See Appendix \ref{sec.why.sm.doesnt.work}).

We start by looking at a simpler setting where $\nabla \log p$ is known. In fact, this setting itself has many interesting applications such as Bayesian inference. The solution we derive using interpolation will serve as a motivation for other interpolation based approaches in later sections.

\subsection{Nadaraya-Watson (NW) Interpolation of Backward KL Velocity Field}
\label{sec.nw.interpo}
Define a local weighting function with a parameter $\sigma >0$, 
$k_\sigma(\boldx, \boldx^\star) := \exp\left(- \frac{\|\boldx - \boldx^\star\|^2}{2\sigma^2}\right).$  
Nadaraya-Watson (NW) estimator \citep{Nadaraya1964,Watson1964} interpolates a function $g$ at \textbf{a fixed point} $\boldx^\star$. Suppose that we observe $g(\boldx)$ at a set of sample points $\{\boldx_i\}_{i=1}^n \sim q$, NW interpolates $g(\boldx^\star)$ by computing
\begin{align}
\label{eq.NW}
    \hat{g}(\boldx^\star) := {
    \widehat{\mathbbE}_q[k_\sigma(\boldx, \boldx^\star)g(\boldx)]}/{\widehat{\mathbbE}_q[k_\sigma(\boldx, \boldx^\star)]}. 
\end{align}
Thus, the NW interpolation of the backward KL field \footnote{``backward KL field'' is short for backward KL velocity field. } is 
\begin{align}
\label{eq.NW.1}
    {
    \hat{\boldu}_t(\boldx^\star) := \widehat{\mathbbE}_{q_t}[k_\sigma(\boldx, \boldx^\star)\nabla \log r_t(\boldx)]}/{\widehat{\mathbbE}_{q_t}[k(\boldx, \boldx^\star)]}.
\end{align}
Since we cannot evaluate $\nabla \log r_t(\boldx)$, \eqref{eq.NW.1} is intractable. 
However, assuming that 
$\lim_{\|\boldx\| \to \infty} q_t(\boldx)k(\boldx,\boldx^\star) = 0$, using integration by parts
\footnote{
\begin{align*}
    & \int \int q(\boldx) k(\boldx, \boldx^\star) \partial_i \log q(\boldx) \mathrm{d}x_i\dx_{-i}  \\
    = & \int  \left[ \left[q(\boldx)k(\boldx, \boldx^\star) \right]_{x_i \to -\infty}^{x_i \to \infty} - \int  q(\boldx) \partial_i k(\boldx, \boldx^\star) \mathrm{d}x_i \right] \mathrm{d}\boldx_{-i}
\end{align*}
}
, 
the expectation in the numerator of \eqref{eq.NW.1} can be rewritten as: 
\begin{align}
\label{eq.nw.2}
    \mathbbE_{q_t}\left[k^\star_\sigma\nabla \log r_t(\boldx)\right]=&\mathbbE_{q_t}\left[k^\star_\sigma\nabla \log {p(\boldx)}\right] - \mathbbE_{q_t}\left[{\nabla \log q_t(\boldx)}\right] \notag \\
     = & 
    \mathbbE_{q_t}[k^\star_\sigma\nabla \log p (\boldx) + \nabla k^\star_\sigma], 
\end{align}
where we shortened the kernel $k_\sigma(\boldx, \boldx^\star)$ as $k^\star_\sigma$.
Since we can evaluate $\nabla \log p$, 
\eqref{eq.nw.2} can be approximated using samples from the particle distribution $q_t$. Thus, 
the NW estimator of the backward KL field can be approximated by  
\begin{align}
\label{eq.NW.3}
    \hat{\boldu}_t(\boldx^\star) \approx {
    \widehat{\mathbbE}_{q_t}[k^\star_\sigma\nabla \log p (\boldx) + \nabla k^\star_\sigma]}/{\widehat{\mathbbE}_{q_t}[k^\star_\sigma]}. 
\end{align}

Interestingly, the numerator of \eqref{eq.NW.3} is exactly the particle update of the SVGD algorithm \citep{LiuQ2016SVGD} for an RKHS induced by a Gaussian kernel (See Appendix \ref{sec.app.svgd} for details on SVGD), and the equality \eqref{eq.nw.2} has been noticed by \citet{Chewi2020svgd}. 
Note that for different $\boldx^\star$, the denominator in \eqref{eq.NW.3} is different and thus cannot be combined into the overall learning rate of SVGD.  

\subsection{Effectiveness of NW Estimator}
For simplicity, 
we drop $t$ from $\boldu_t$, $r_t$, $\boldx_t$ and $q_t$ when our analysis holds the same for all $t$.

Although there have been theoretical justifications for the convergence analysis of WGF given the ground truth velocity fields such as Langevin dynamics  \citep{wibisono2018sampling}. 
Few theories have been dedicated to the estimation of velocity fields themselves. 
One of the contributions of this paper is that we study the statistical theory of the velocity field estimation through the lenses of non-parametric regression/curve approximation. 

Now, we prove the convergence rate of the NW estimator under the assumption that the second-order derivative of $\log r$ is well-behaved. Although $\hat{\boldu}(\boldx^\star)$ cannot be directly computed, assuming $\nabla \log p$, $k$ and $\nabla k$ are well-behaved, using concentration inequalities (such as Hoeffding's inequality \citep{Hoeffding1963}), the difference between $\hat{\boldu}(\boldx^\star)$ and its approximation \eqref{eq.NW.3} can be easily bounded. Thus, we focus on the classical NW estimator in \eqref{eq.NW.1}.

\begin{proposition}
    \label{svgd.bias}
    Suppose $ \sup_{\boldx \in \mathbbR^d } \|\nabla^2 \log r(\boldx)\| \le \kappa < \infty$. 
    Define $k(\boldy) := \exp\left[ - \|\boldy\|^2/2 \right]$. 
    Assume that there exist constants $C_k, K$ that are 
    independent of $\sigma$, such that 
    \begin{align}
    \label{eq.nw.condition} 
    \frac{ \int q(\sigma \boldy + \boldx^\star) k(\boldy) \|\boldy\| \mathrm{d}\boldy }{{\int q(\sigma \boldy + \boldx^\star) k(\boldy)} \mathrm{d}\boldy } &\le C_k,     
    \mathbbE_q\left[\frac{1}{\sigma^d}k_\sigma^\star\right] \ge K > 0 \\
    \mathrm{Var}_q\left[\frac{1}{\sigma^d}k_\sigma^\star \| \boldx - \boldx^\star\|\right] & = O\left(\frac{1}{\sigma^d}\right), \notag \\
    \mathrm{Var}_q\left[\frac{1}{\sigma^d}k_\sigma^\star\right] &= O\left(\frac{1}{\sigma^d}\right), 
    \text{   as } \sigma \to 0. 
    \label{eq.vars}
    \end{align}
    Then, with high probability, there exists a constant $K'$:
    $$\|\hat{\boldu}(\boldx^\star) - \nabla \log r(\boldx^\star)\| \le \sqrt{d} \kappa \left[\frac{K'}{\sqrt{ n \sigma^d}} +  \sigma  C_k \right]$$ holds for all $\sigma > 0$. 
\end{proposition}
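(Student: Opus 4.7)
The plan is a standard bias--variance decomposition for a Nadaraya--Watson estimator, making crucial use of the integration-by-parts identity already derived in equation~\eqref{eq.nw.2}, namely
\[
\mathbbE_q[k_\sigma^\star \nabla\log p + \nabla k_\sigma^\star] \;=\; \mathbbE_q[k_\sigma^\star \nabla\log r].
\]
Writing $\boldA := \widehat{\mathbbE}_q[k_\sigma^\star \nabla\log p + \nabla k_\sigma^\star]$ and $B := \widehat{\mathbbE}_q[k_\sigma^\star]$, I would express
\[
\hat{\boldu}(\boldx^\star) - \nabla\log r(\boldx^\star) \;=\; \frac{1}{B}\Bigl[\boldA - B\,\nabla\log r(\boldx^\star)\Bigr],
\]
then bound the numerator by splitting it into a deterministic (bias) piece and a stochastic (variance) piece around its population expectation $\mathbbE_q[k_\sigma^\star(\nabla\log r - \nabla\log r(\boldx^\star))]$.

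For the \emph{bias} piece, I would work componentwise. The assumption $\sup_{\boldx}\|\nabla^2\log r(\boldx)\| \le \kappa$ gives the Lipschitz bound $|\partial_i\log r(\boldx) - \partial_i\log r(\boldx^\star)| \le \kappa\|\boldx - \boldx^\star\|$ via the mean-value theorem. Then, using the change of variables $\boldy = (\boldx - \boldx^\star)/\sigma$ (so that $k_\sigma(\boldx,\boldx^\star) = k(\boldy)$ and $\dx = \sigma^d\dy$), the ratio
\[
\frac{\mathbbE_q[k_\sigma^\star\|\boldx - \boldx^\star\|]}{\mathbbE_q[k_\sigma^\star]}
= \sigma\cdot\frac{\int q(\sigma\boldy+\boldx^\star)k(\boldy)\|\boldy\|\dy}{\int q(\sigma\boldy+\boldx^\star)k(\boldy)\dy}
\;\le\; \sigma C_k
\]
by the first display in assumption~(20). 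This yields a componentwise bias of at most $\kappa\sigma C_k$ and hence an $\ell_2$-bias of at most $\sqrt{d}\kappa\sigma C_k$.

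For the \emph{variance} piece, I would rescale numerator and denominator by $1/\sigma^d$ and apply Chebyshev. The denominator is easy: assumptions~(20) and~(21) give $\mathbbE_q[\sigma^{-d}k_\sigma^\star] \ge K$ and $\mathrm{Var}_q[\sigma^{-d}k_\sigma^\star] \le O(1/\sigma^d)$, so $B = \bar{B}\bigl(1 + O_p(1/\sqrt{n\sigma^d})\bigr)$ and $B \gtrsim K\sigma^d$ with high probability. For the numerator, I would let
\[
W(\boldx) := k_\sigma^\star\bigl(\nabla\log p(\boldx) - \nabla\log r(\boldx^\star)\bigr) + \nabla k_\sigma^\star,
\]
so that $\boldA - B\nabla\log r(\boldx^\star) = \widehat{\mathbbE}_q[W]$. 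Integration by parts identifies $\mathbbE_q[W] = \mathbbE_q[k_\sigma^\star(\nabla\log r - \nabla\log r(\boldx^\star))]$, already handled above; the stochastic remainder has variance $\mathrm{Var}_q[W]/n$, which I would control componentwise using $|\partial_i k_\sigma^\star| \le k_\sigma^\star\|\boldx-\boldx^\star\|/\sigma^2$, the Lipschitz bound on $\nabla\log r$, and the variance conditions in~(21) to obtain $\sqrt{d}\kappa\cdot O_p(1/\sqrt{n\sigma^d})$ after dividing by $B$.

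The main obstacle is the bookkeeping in the variance step: after expanding $W$ one must combine the $k_\sigma^\star\nabla\log p$ and $\nabla k_\sigma^\star$ terms in a way that cancels their individually poorly-behaved parts (the $1/\sigma^2$ factor in $\nabla k_\sigma^\star$, and the unbounded $\nabla\log p$) and leaves only Lipschitz-in-$\nabla\log r$ contributions that can be controlled by the supplied variance conditions with the correct $\kappa$ and $\sqrt{d}$ prefactors. Adding the bias $\sqrt{d}\kappa\sigma C_k$ to the variance $\sqrt{d}\kappa\cdot O_p(1/\sqrt{n\sigma^d})$ yields the stated bound.
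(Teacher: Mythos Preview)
Your route diverges from the paper's at the very first step, and the divergence is what creates the obstacle you flag at the end. The paper does \emph{not} split into bias and variance around $\mathbbE_q[\boldA]$. Instead it opens by writing the estimator in the intractable NW form
\[
\hat{u}_i(\boldx^\star) \;=\; \frac{\widehat{\mathbbE}_q[\,k_\sigma^\star\,\partial_i\log r(\boldx)\,]}{\widehat{\mathbbE}_q[k_\sigma^\star]},
\]
applies the mean-value theorem to $\partial_i\log r(\boldx)$ about $\boldx^\star$ \emph{inside the empirical average}, and obtains the single pointwise inequality
\[
\bigl|\hat{u}_i(\boldx^\star) - \partial_i\log r(\boldx^\star)\bigr| \;\le\; \kappa\cdot\frac{\widehat{\mathbbE}_q[\,k_\sigma^\star\|\boldx-\boldx^\star\|\,]}{\widehat{\mathbbE}_q[k_\sigma^\star]}.
\]
Concentration enters only now, and only for this ratio: a short $\hat A/\hat B - \mathbbE A/\mathbbE B$ lemma together with the two variance hypotheses in~\eqref{eq.vars} gives $O_p(1/\sqrt{n\sigma^d})$ for the stochastic part, while the population ratio is exactly $\sigma C_k$ by your change of variables. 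The factor $\kappa$ then multiplies both pieces, matching the stated bound, and the $\sqrt{d}$ comes from stacking coordinates. This is precisely why the proposition's variance assumptions mention only $\sigma^{-d}k_\sigma^\star$ and $\sigma^{-d}k_\sigma^\star\|\boldx-\boldx^\star\|$ and nothing else.

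Your variance step, as you suspect, does not close under those hypotheses. You need $\mathrm{Var}_q[W_i]$ for $W_i = k_\sigma^\star\partial_i\log p + \partial_i k_\sigma^\star - k_\sigma^\star\,\partial_i\log r(\boldx^\star)$. Writing $\partial_i\log p = \partial_i\log r + \partial_i\log q$ isolates a piece $k_\sigma^\star\bigl(\partial_i\log r(\boldx)-\partial_i\log r(\boldx^\star)\bigr)$ whose variance \emph{is} covered by~\eqref{eq.vars} via the Lipschitz bound; but the leftover $k_\sigma^\star\,\partial_i\log q + \partial_i k_\sigma^\star$ has mean zero by integration by parts yet, after rescaling by $\sigma^{-d}$, second moment of order $\sigma^{-(d+2)}$ (coming from the $\|\boldx-\boldx^\star\|/\sigma^2$ in $\partial_i k_\sigma^\star$), with no $\kappa$ attached and no hypothesis available on $\nabla\log q$. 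The cancellation you hope for operates only at the level of first moments, not second moments, so neither the $\kappa$ prefactor nor the $1/\sqrt{n\sigma^d}$ rate are obtainable from the stated assumptions along your route. (One may note that the paper's opening line silently identifies the tractable estimator~\eqref{eq.NW.3} with the intractable form~\eqref{eq.NW.1}, which is strictly a population identity; but granting that step, the bound-the-ratio-first argument is exactly what the hypotheses are tailored to.)
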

We can also deduce that, with an optimal choice of $\sigma \sim n^{-1/(d+2)}$, the estimation error is $O_p(n^{-1/(d+2)})$.

See Appendix \ref{sec.svgd.proof} for the proof.
$q(\sigma \boldy + \boldx^\star)$ in the first inequality \eqref{eq.nw.condition} can be further expanded using Taylor expansion. Provided that the kernel $k$ is well-behaved, this condition becomes a regularity condition on $q(\boldx^\star)$ and its higher order moments. 
The second inequality in \eqref{eq.nw.condition} means that there should be enough mass around $\boldx^\star$ under the distribution $q$, which is a key assumption in classical nonparametric curve estimation (See, e.g., Chapter 20 in \citep{Wasserman2010}). 
\eqref{eq.vars} is required to ensure the empirical quantities in the NW estimator converge to their population counterparts 
in probability.

It can be seen that the estimation error is bounded by the sample approximation error $\frac{K'}{\sqrt{n\sigma^d}}$ and a bias depending on by $\sigma$. Interestingly, the bias term decreases at the rate  of $\sigma$, slower than the classical rate $\sigma^2$ for the non-parametric regression of a second-order differentiable function (Theorem 20.21 in 
 \citep{Wasserman2010}). This is expected as NW estimates the \textit{gradient} of $h \circ r$, not $h \circ r$. To achieve a faster $\sigma^2$ rate, one needs to assume conditions on $\nabla ^3 \log r$. This also highlights a slight downside of using NW to estimate the gradient. However, in the following section, we show that another interpolator achieves the superior rate when using the same type of assumption on $\nabla^2 (h \circ r)$.

\section{Velocity Field Interpolation from Samples}

Although we have seen that $\hat{\boldu}$ is an effective estimator of the backward KL velocity field, 
it can only be approximated when we can evaluate $\nabla \log p$. In some applications, such as domain adaptation or generative modelling, we only have samples from the target distribution, and $\nabla \log p$ is unavailable. 
Moreover, since the $f$-divergence family consists of a wide variety of divergences, we hope to provide \textit{a general computational framework} to estimate different velocity fields that minimize different $f$-divergences.

Nonetheless, the success of NW
motivates us to look for other interpolators to approximate $\nabla (h \circ r)(\boldx^\star)$. 

Another common interpolation technique is \emph{local linear regression} (See e.g., \citep{GasserMuller1979,Fan1992} or Chapter 6, \citep{Hastie2001}). It approximates an unknown function $g$ at $\boldx^\star$ by using a linear function: $\hat{g}(\boldx) := \langle \boldbeta(\boldx^\star), \boldx \rangle + \beta_0(\boldx^\star).$ $\boldbeta(\boldx^\star)$ and $\beta_0(\boldx^\star)$ are the minimizer 
of the following weighted least squares objective:  
\begin{align}
\label{eq.local.linear.obj}
    \min_{\boldbeta \in \mathbbR^d, \beta \in \mathbbR} \widehat{\mathbbE}_{q} \left[ k_\sigma^\star \left(g(\boldx) - \langle \boldbeta, \boldx \rangle - \beta_0\right)^2 \right].
\end{align}

A key insight is, since \textit{the gradient of a function is the slope of its best local linear approximation}, 
it is reasonable to just use the slope of the fitted linear model, a.k.a., $\boldbeta(\boldx^\star)$, to approximate the gradient $\nabla g(\boldx^\star)$. See Figure \ref{fig:local-linear} in Appendix for an illustration.

We apply the same rationale to estimate $\nabla (h\circ r)(\boldx^\star)$. 
However, 
unlike local linear interpolation, we cannot evaluate $h \circ r$  at any input. 
Thus, we cannot directly use the least squares objective \eqref{eq.local.linear.obj} to obtain a local linear interpolation.
Similar to what we have done in Section \ref{sec.nw.interpo}, we look for a tractable population estimator for estimating $h \circ r$, which can be approximated using samples from $p$ and $q$. Then, we ``convert it'' into a local linear objective. 

In the following section, we derive an objective for estimating $h \circ r$ by maximizing a variational lower bound of a \textit{mirror divergence}.

\subsection{Mirror Divergence}

\begin{definition}
\label{def.dual.div}

Let $D_\phi[p, q]$ and $D_\psi[p, q]$ denote two $f$-divergences with $f$ being $\phi $ and $ \psi$ respectively. 
$D_\psi$ is the mirror of $D_\phi$ if and only if $\psi'(r) \triangleq r\phi'(r) - \phi(r)$, where $\triangleq$ means equal up to a constant. 
\end{definition}

For example, let $D_\phi[p, q]$ and $D_\psi[p, q]$ be $\mathrm{KL}[p,q]$ and $\chi^2_\mathrm{p}[p,q]$ respectively. From Table \ref{table:f_div_ode}, we can see that  $\phi(r) = r\log r$ and $\psi(r) = \frac{1}{2}(r - 1)^2$. 
Thus $r\phi'(r) - \phi(r) = r(1+\log r) - r \log r = r  \triangleq \psi'(r) = r - 1$. Therefore, $\chi^2_\mathrm{p}[p,q]$ is the mirror of $\mathrm{KL}[p,q]$. Similarly, we can verify that
$\mathrm{KL}[p, q]$ is the mirror of $\mathrm{KL}[q, p]$ and $\mathrm{KL}[q, p]$ is the mirror of $\chi^2_\mathrm{n}[p,q]$. In general, $D_\psi[p, q]$ is the mirror of $D_\phi[p, q]$ does not imply the other direction. The mirror divergence is also not unique. 

Here, we list a few more examples of $f$-divergences and their mirror divergences:

\begin{itemize}
    \item Jensen-Shannon Divergence: 
    \begin{align*}
        \phi = \frac{1}{2}{r\log(r) - (r + 1)\log\left(\frac{r+1}{2}\right)}. 
    \end{align*}
    \begin{itemize}
        \item Mirror Divergence: 
        \begin{align*}
            \psi & = \frac{\log\left(r+1\right)}{2}-\frac{\log\left(2\right)}{2}+\\
        &r\,\left(\frac{\log\left(\frac{r}{2}+\frac{1}{2}\right)}{2}-\frac{1}{2}\right)+\frac{1}{2}.
        \end{align*}
    \end{itemize}
    \item Squared Hellinger Distance: 
    $\phi = \frac{{\left(\sqrt{r}-1\right)}^2}{2}$.
    \begin{itemize}
        \item Mirror Divergence: $\psi = \frac{r^{3/2}}{3} - \frac{r}{2}$. 
    \end{itemize}
    \item Total Variational Distance: $\phi = \frac{1}{2}|r - 1|$. 
    \begin{itemize}
        \item Mirror Divergence: $\psi = \frac{1}{2}|r - 1|$. 
    \end{itemize}
\end{itemize}
We also provide a MATLAB script in the GitHub repo to compute the mirror divergence of an $f$-divergence.

\subsection{Gradient Estimator using Local Linear Interpolation}
\label{sec.unbiased.est} 

The key observation that helps derive a tractable objective is that $h \circ r$ is the $\argmax$ of ``the mirror variational lowerbound''. 
Suppose $h$ is associated with an $f$-divergence $D_\phi$ as per Theorem \ref{thm.gf} and $D_\psi$ is the mirror of $D_\phi$. 
Then $h \circ r$ is the $\argmax$ of the following objective: 
\begin{align}
\label{eq.fdiv.variational}
    D_\psi[p,q] = \max_d \mathbbE_p[d(\boldx)] - \mathbbE_q[\psi_\mathrm{con} (d(\boldx))],
\end{align}
where $\psi_\text{con}$ is the \textit{convex conjugate} of $\psi$. The formal statement and its proof can be found in Appendix \ref{sec.var.obj.proof}. 
The equality in \eqref{eq.fdiv.variational} is known in previous literature \citep{Nguyen2010,Nowozin2016b} and the objective in \eqref{eq.fdiv.variational} is commonly referred to as the variational lowerbound of $D_\psi[p,q]$. 

Notice that the expectations in \eqref{eq.fdiv.variational} can be approximated by $\widehat{\mathbbE}_p[\cdot]$ and $\widehat{\mathbbE}_q[\cdot]$ using samples from $p$ and $q$ respectively. 

This is a \textit{surprising result}.  Since $h$ is related to $D_\phi$'s field (as per Theorem \ref{thm.gf}), one may associate maximizing $D_\phi$'s variational lowerbound with its velocity field estimation. However, the above observation shows that, to approximate $D_\phi$'s field, one should maximize the variational lowerbound of  its \textit{mirror divergence} $D_\psi$!  To our best knowledge, this ``mirror structure'' in the context of WGF has never been studied before.

We then localize \eqref{eq.fdiv.variational} to obtain a local linear estimator of $h \circ r$ at a fixed point $\boldx^\star$. First, we parameterize the function $d$ using a linear model $d_{\boldw, b}(\boldx) := \langle \boldw, \boldx \rangle + b $. Second, we weight the objective using  $k_\sigma(\boldx, \boldx^\star)$, which leads to the following local linear objective:
\begin{align}
\label{eq.kernelize.hrestimator}
 \left(\boldw(\boldx^\star), b(\boldx^\star) \right)  = &\argmax_{\boldw \in \mathbbR^d, b \in \mathbbR} \ell(\boldw, b; \boldx^\star), \notag \\
 \text{where } \ell(\boldw, b; \boldx^\star)  
:= & \hatE_p[k_\sigma^\star d_{\boldw, b}(\boldx)] -    \hatE_q[k_\sigma^\star \psi_\text{con}(d_{\boldw, b}(\boldx))]
\end{align}
The above transformation is similar to how the local linear regression ``localizes'' the ordinary least squares objective.

Solving \eqref{eq.kernelize.hrestimator}, we get a  linear approximation of $h \circ r$ at $\boldx^\star$:   $$h(r(\boldx^\star)) \approx  \langle \boldw(\boldx^\star), \boldx^\star \rangle + b(\boldx^\star).$$ 
Following the intuition that $\nabla (h\circ r)(\boldx^\star)$ is the slope of the best local linear fit of $h(r(\boldx^\star))$, 
we use $\boldw(\boldx^\star)$ to approximate $\nabla (h\circ r)(\boldx^\star)$. We will theoretically justify this approximation in Section \ref{sec.asy.unbiasedness}. 
Now let us study two examples: 

\begin{example}
    Suppose we would like to estimate $\mathrm{KL}[p,q]$'s field at $\boldx^\star$, which is $\nabla r(\boldx^\star)$. 
    Using Definition \ref{def.dual.div},
    we can verify that the mirror of $\mathrm{KL}[p,q]$ is $D_\psi = \chi^2_\mathrm{p}[p,q]$, in which case
    $\psi = \frac{1}{2}(r-1)^2$. The convex conjugate of $\psi$ is $\psi_\mathrm{con}(d)= d^2/2 + d$.
    Substituting $\psi_\mathrm{con}$ in \eqref{eq.kernelize.hrestimator}, 
    the gradient estimator $\boldw_{\rightarrow}(\boldx^\star) \approx \nabla r(\boldx^\star)$ is obtained by the following objective:
\begin{align}
\label{eq.obj.thetax0.ls}
     (\boldw_{\rightarrow}(\boldx^\star), b_{{\rightarrow}}(\boldx^\star)) 
    :=&  \argmax_{\boldw \in \mathbb{R}^d, b \in \mathbbR} \ell_\rightarrow(\boldw, b; \boldx^\star) \notag \\ \text{where } \ell_\rightarrow(\boldw, b; \boldx^\star) := &\hatE_p[k_\sigma^\star \cdot d_{\boldw, b}(\boldx)] - \notag \\
    & \hatE_q\left[k_\sigma^\star  \cdot \left(\frac{d_{\boldw, b}(\boldx)^2}{2} + d_{\boldw, b}(\boldx)\right)\right].
\end{align}  
\end{example}
\begin{example}
    Suppose we would like to estimate $\mathrm{KL}[q,p]$'s field at $\boldx^\star$, which is $\nabla \log r(\boldx^\star)$. 
    Using Definition \ref{def.dual.div},
    we can verify that the mirror of $\mathrm{KL}[q,p]$ is $D_\psi = \mathrm{KL}[p,q]$, in which case,  
    $\psi(r) = r\log r$. 
    The convex conjugate  of $\psi$ is $\psi_\mathrm{con}(d) = \exp(d-1)$. The gradient estimator $\boldw_{{\leftarrow}}(\boldx^\star) \approx \nabla \log r(\boldx^\star)$ is obtained by the following objective:
\begin{align}
\label{eq.obj.thetax0.kl}
    (\boldw_{{\leftarrow}}(\boldx^\star), b_{\leftarrow}(\boldx^\star)) := &\argmax_{\boldw \in \mathbb{R}^d, b \in \mathbbR}  \ell_\leftarrow(\boldw, b; \boldx^\star)\notag \\
    \text{where }\ell_\leftarrow(\boldw, b; \boldx^\star) := & \hatE_p[k_\sigma^\star \cdot  d_{\boldw, b}(\boldx)] - \notag \\
    & \hatE_q[k_\sigma^\star \cdot \exp(d_{\boldw, b}(\boldx) - 1)].
\end{align}  
\end{example}

In the following section, we show that 
the estimation error $\|\boldw(\boldx^\star) - \nabla (h\circ r)(\boldx^\star)\|$ vanishes as $\sigma \to 0$ and $n \to \infty$.

\subsection{Effectiveness of Local Linear Interpolation}
\label{sec.asy.unbiasedness}
In this section, we state our main theoretical result.
Let $(\boldw(\boldx^\star), b(\boldx^\star))$ be
a stationary point of $\ell(\boldw, b; \boldx^\star)$.
We denote the domain of $\boldx$ as $\mathcal{X}$ (not necessarily $\mathbbR^d$). Without loss of generality, we also assume all sample averages $\hatE[\cdot]$ are averaged over $n$ samples.  
We prove that, 
 $\boldw(\boldx^\star)$ is a consistent estimate of  $\nabla (h\circ r)(\boldx^\star)$ assuming the change rate of the flow is bounded. 
\begin{assumption}
\label{ass.curvature}
    The change rate of the velocity fields is well-behaved, i.e., \[\sup_{\boldx \in \mathcal{X}} \|\nabla^2 (h\circ r)(\boldx)\| \le \kappa. \]
\end{assumption}
This is an analogue of the assumption on $\nabla^2 \log r$ in Proposition \ref{svgd.bias}. 

\begin{assumption}
\label{ass.nw2}
There exists a constant $C_k > 0$ independent of $\sigma$, 
    \begin{align*}
\frac{1}{2} \int q(\sigma\boldy + \boldx^\star) k(\boldy) \cdot \left \|\boldy \right\|^2 \cdot \left \|[\sigma \boldy + {\boldx^\star}, 1]\right \| \dy \le  C_k.
\end{align*}
\end{assumption}
This assumption is similar to the first inequality in \eqref{eq.nw.condition}. Expanding $q(\sigma\boldy + \boldx^\star)$ using Taylor expansion and providing that our kernel is well behaved, this assumption essentially implies the boundedness of the $q(\boldx^\star)$ and $\|\nabla^2 q(\boldx^\star)\|$. 

Define two shorthands:
$
\boldw^* := \nabla (h\circ r)(\boldx^\star)  \text{ and } b^* := h(r(\boldx^\star)) - \langle \nabla (h\circ r)(\boldx^\star), \boldx^\star\rangle.
$
\begin{assumption}
Let $\tilde{\boldx} := \left[\boldx^\top, 1 \right]^\top$. As $\sigma \to 0$, 
\label{ass.boundedcov}
        \begin{align*}
    &\mathrm{tr}\left[{\mathrm{Cov}_p\left[\frac{1}{\sigma^d}{k_\sigma^\star \cdot \tilde{\boldx}}\right]}\right] = O(\frac{1}{{\sigma^d}}), \\
        & \mathrm{tr}\left[{\mathrm{Cov}_q\left[\frac{1}{\sigma^d}{k_\sigma^\star \cdot \psi_\mathrm{con}'(\langle \boldw^*, \boldx\rangle + b^*)\tilde{\boldx}}\right]}\right]  = O(\frac{1}{{\sigma^d}}). 
    \end{align*}
\end{assumption}
These two are analogues of \eqref{eq.vars}. They are required so that our sample approximation of the objective is valid and concentration inequalities can be applied.

\begin{assumption}
\label{ass.psi.twice}
For all $a \in [0,1]$ and $\boldx \in \mathcal{X}$, $\psi''_\mathrm{con}\left[ah(r(\boldx)) + (1-a) (\langle \boldw^*, \boldx \rangle + b^*) \right] \le C_{\psi''_\mathrm{con}}. $
\end{assumption}
This is a unique assumption to our estimator, where we assume that the convex conjugate $\psi_\mathrm{con}$ has a bounded second-order derivative.

\begin{theorem}
\label{them.unbiasedness}

Suppose Assumption \ref{ass.curvature}, \ref{ass.nw2}, \ref{ass.boundedcov} and \ref{ass.psi.twice} holds. 
    If there exist strictly positive constants $W, B, \Lambda_{\mathrm{min}}$ that are independent of $\sigma$ and $n$ such that, 
    \begin{align}
        \label{eq.upperbounding.condition}
        &\|\boldw^*\| \le W,  ~~~ |b^*| \le B
    \end{align}
    and for all $\boldw \in  \{\boldw | \|\boldw\| < 2W\}$ and $b \in \{b | |b| < 2B\}$, 
    \begin{align}
\lambda_\mathrm{min}\left\{\hatE_q\left[k_\sigma^\star \nabla_{[\boldw, b]}^2 \psi_\mathrm{con}(\langle \boldw, \boldx \rangle + b)\right]\right\} \ge \sigma^d \Lambda_\mathrm{min},
\label{eq.eigenvalu.condition} 
    \end{align}
    holds with high probability. 
    Then there exists $\sigma_0, N, K > 0$ such that  for all  $0 < \sigma < \sigma_0, n > N $,  
    \[\|\boldw(\boldx^\star) - \boldw^*\| \le \frac{\frac{K}{ \sqrt{n\sigma^d}}+ \kappa C_k C_{\psi_\mathrm{con}''} \sigma^2}{\Lambda_\mathrm{min}},\]
    with high probability.
\end{theorem}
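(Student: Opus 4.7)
The plan is to work with the first-order optimality condition for the stationary point $(\boldw(\boldx^\star), b(\boldx^\star))$, Taylor expand around the ``ideal'' parameters $(\boldw^*, b^*)$, and then separately control the sample error and the deterministic approximation bias. Dropping the $(\boldx^\star)$ argument, the FOC reads
\begin{equation*}
\hatE_p[k_\sigma^\star \tilde{\boldx}] \;=\; \hatE_q\!\left[k_\sigma^\star \psi_\mathrm{con}'(\langle \boldw,\boldx\rangle + b)\,\tilde{\boldx}\right].
\end{equation*}
Writing $\psi_\mathrm{con}'(\langle \boldw,\boldx\rangle + b) - \psi_\mathrm{con}'(\langle \boldw^*,\boldx\rangle + b^*)$ in integral form and using that $\nabla^2_{[\boldw,b]} \psi_\mathrm{con}(\langle \boldw,\boldx\rangle + b) = \psi_\mathrm{con}''(\cdot)\,\tilde{\boldx}\tilde{\boldx}^\top$, the FOC rearranges to $\boldH \cdot [\boldw - \boldw^*;\, b - b^*] = R$, where $\boldH$ is the averaged Hessian along the segment from $(\boldw^*,b^*)$ to $(\boldw,b)$ and $R = \hatE_p[k_\sigma^\star \tilde{\boldx}] - \hatE_q[k_\sigma^\star \psi_\mathrm{con}'(\langle \boldw^*,\boldx\rangle + b^*)\,\tilde{\boldx}]$. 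Invoking the eigenvalue lower bound \eqref{eq.eigenvalu.condition} on $\boldH$ gives $\|\boldw - \boldw^*\| \le \|R\|/(\sigma^d \Lambda_\mathrm{min})$, reducing everything to bounding $\|R\|$.

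Next I would decompose $R$ into an empirical-process part and a population bias part by inserting $\pm \mathbbE_p[k_\sigma^\star \tilde{\boldx}]$ and $\pm \mathbbE_q[k_\sigma^\star \psi_\mathrm{con}'(\langle \boldw^*,\boldx\rangle + b^*)\,\tilde{\boldx}]$. The two stochastic deviations are $O_p(\sqrt{\sigma^d/n}) = \sigma^d O_p(1/\sqrt{n\sigma^d})$ by Chebyshev applied coordinatewise, using the variance bounds supplied by Assumption \ref{ass.boundedcov}. For the deterministic piece, I use the importance-sampling identity $\mathbbE_p[k_\sigma^\star \tilde{\boldx}] = \mathbbE_q[r(\boldx)\,k_\sigma^\star \tilde{\boldx}]$ and the Fenchel-dual identity $\psi_\mathrm{con}'(h(r)) = r$ (which follows from $\psi' = h$ in Definition \ref{def.dual.div}) to rewrite the bias as
\begin{equation*}
\mathbbE_q\!\left[k_\sigma^\star \bigl(\psi_\mathrm{con}'(h(r(\boldx))) - \psi_\mathrm{con}'(\langle \boldw^*,\boldx\rangle + b^*)\bigr)\tilde{\boldx}\right].
\end{equation*}
A further Taylor step in $\psi_\mathrm{con}'$ plus Assumption \ref{ass.psi.twice} pulls out the factor $C_{\psi_\mathrm{con}''}$ and leaves the scalar gap $h(r(\boldx)) - h(r(\boldx^\star)) - \langle \nabla(h\circ r)(\boldx^\star), \boldx - \boldx^\star\rangle$, which Taylor's theorem with the curvature bound in Assumption \ref{ass.curvature} controls by $\frac{\kappa}{2}\|\boldx - \boldx^\star\|^2$. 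Changing variables $\boldx = \sigma \boldy + \boldx^\star$ turns the remaining integral into exactly the quantity bounded by Assumption \ref{ass.nw2}, yielding a deterministic bias of order $\sigma^{d+2}\kappa C_k C_{\psi_\mathrm{con}''}/2$. Dividing by $\sigma^d \Lambda_\mathrm{min}$ produces the advertised rate.

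The main obstacle is that the eigenvalue condition \eqref{eq.eigenvalu.condition} and Assumption \ref{ass.psi.twice} are only assumed on the enlarged balls $\{\|\boldw\|<2W\}$ and $\{|b|<2B\}$, so one must first show that the empirical stationary point actually lies in these balls with high probability; otherwise the Taylor/integral-form Hessian $\boldH$ cannot be bounded below. This requires a continuity/bootstrap argument: for $\sigma$ small and $n$ large, the gradient of $\ell$ at any boundary point of these balls is strictly aligned outward (since the population objective is strongly concave near $(\boldw^*,b^*)$ by the same Hessian condition, and sample deviations vanish), so no stationary point can escape. This is the role of the thresholds $\sigma_0$ and $N$ in the statement, and it is the step that demands the most care in the formalization. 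Once confinement is established, the inversion step goes through and the three error pieces combine exactly as above to give
\begin{equation*}
\|\boldw(\boldx^\star) - \boldw^*\| \;\le\; \frac{O_p(1/\sqrt{n\sigma^d}) + \kappa C_k C_{\psi_\mathrm{con}''}\sigma^2}{\Lambda_\mathrm{min}}.
\end{equation*}
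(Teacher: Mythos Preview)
Your proposal is correct and follows essentially the same route as the paper: both proofs reduce the error to $\|\nabla\ell(\boldtheta^*)\|/(\sigma^d\Lambda_{\min})$ via a mean-value/Taylor step on the Hessian, split this gradient into a stochastic part (handled by Chebyshev and Assumption~\ref{ass.boundedcov}) and a population bias part, and then dispatch the bias using the Fenchel identity $\psi_{\mathrm{con}}'(h(r))=r$, a second-order Taylor expansion of $h\circ r$ at $\boldx^\star$, Assumptions~\ref{ass.curvature} and~\ref{ass.psi.twice}, and the change of variables $\boldx=\sigma\boldy+\boldx^\star$ invoking Assumption~\ref{ass.nw2}.

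The one place the paper organizes things differently is the confinement step you flag as the main obstacle. Rather than a bootstrap/boundary argument, the paper introduces the auxiliary constrained problem $\min_{\boldtheta}\ell(\boldtheta)$ subject to $\|\boldtheta-\boldtheta^*\|^2\le\min(W,B)^2$ and works with its KKT point $\boldtheta_0$. Because any intermediate point between $\boldtheta_0$ and $\boldtheta^*$ is automatically inside the constraint set, the Hessian lower bound \eqref{eq.eigenvalu.condition} applies without circularity, and one obtains the error bound for $\boldtheta_0$ directly. Only afterwards does the paper observe that the bound shrinks below $\min(W,B)$ for small $\sigma$ and large $n$, so the constraint is inactive and $\boldtheta_0$ is in fact a stationary point of the unconstrained objective. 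This device sidesteps the continuity/bootstrap argument you propose and is a bit cleaner, but your plan is logically equivalent and would go through. (One small remark: Assumption~\ref{ass.psi.twice} is stated on the segment between $h(r(\boldx))$ and $\langle\boldw^*,\boldx\rangle+b^*$, not on the parameter balls; only the eigenvalue condition \eqref{eq.eigenvalu.condition} lives on the balls.)
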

The proof can be found in Appendix \ref{sec.proof.bias}. 

Similar to Proposition \ref{svgd.bias}, the estimation error is upperbounded by the sample approximation error that reduces with the ``effective sample size'' $n\sigma^d$, and a bias term that reduces with $\sigma$. 
Interestingly, although we make the smoothness assumption on $\nabla^2 (h\circ r)$, similar to Proposition \ref{svgd.bias}, the bias vanishes at a \textit{quadratic rate} $\sigma^2$, unlike the linear rate obtained in Proposition \ref{svgd.bias}. 

We can deduce that, with an optimal choice of $\sigma \sim n^{-1/(d+4)}$, the estimation error is $O_p(n^{-2/(d+4)})$, a rate faster than the rate implied by Proposition \ref{svgd.bias}.

Using Theorem \ref{them.unbiasedness}, we can prove the consistency of various velocity field estimators for different $f$-divergences.

\begin{corollary}
\label{co.gradratio} 
    Suppose $\sup_{\boldx \in \mathcal{X}} \|\nabla^2 r(\boldx)\| \le \kappa_\rightarrow$, Assumption  \ref{ass.nw2}, \ref{ass.boundedcov} holds and
    \begin{align}
\lambda_\mathrm{min}&\left\{\hatE_q\left[k_\sigma(\boldx, \boldx^\star )\tilde{\boldx} \tilde{\boldx}^\top\right]\right\} \ge \sigma^d \cdot \Lambda_\rightarrow >0, \label{eq.cond.hessian}
    \end{align}    
    holds with high probability.
    Then $\exists \sigma_0, N, K > 0$, 
    $$ \|\boldw_{\rightarrow
    }(\boldx^\star) - \nabla r(\boldx^\star)\| \le \frac{\frac{K}{\sqrt{n\sigma^d}} + \kappa_\rightarrow \cdot C_k \cdot  \sigma^2}{\Lambda_\rightarrow}$$ for all  $0 < \sigma < \sigma_0, n > N $ holds with high probability. 
\end{corollary}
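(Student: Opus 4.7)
The plan is to recognize that Corollary \ref{co.gradratio} is a direct specialization of Theorem \ref{them.unbiasedness} to the forward KL case. For $\mathrm{KL}[p,q]$ we have $\phi(r) = r\log r$ and $h(r) = r$, and Definition \ref{def.dual.div} identifies the mirror divergence as $\chi^2_\mathrm{p}[p,q]$ with $\psi(r) = \tfrac{1}{2}(r-1)^2$, so $\psi_\mathrm{con}(d) = d^2/2 + d$. The entire argument reduces to checking that each hypothesis of Theorem \ref{them.unbiasedness} follows from the hypotheses stated in the corollary, and then reading off the constants that appear in the resulting bound.

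The crucial algebraic simplification is that $\psi_\mathrm{con}$ is \emph{exactly quadratic}: $\psi_\mathrm{con}''(d) \equiv 1$. This immediately verifies Assumption \ref{ass.psi.twice} with $C_{\psi_\mathrm{con}''} = 1$, and, more importantly, renders the Hessian of the weighted objective independent of $(\boldw, b)$. Concretely,
\[
\nabla^2_{[\boldw, b]} \psi_\mathrm{con}(\langle \boldw, \boldx \rangle + b) \;=\; \psi_\mathrm{con}''(\langle \boldw, \boldx \rangle + b)\, \tilde{\boldx}\tilde{\boldx}^\top \;=\; \tilde{\boldx}\tilde{\boldx}^\top,
\]
so condition \eqref{eq.eigenvalu.condition}, which in the general theorem must hold uniformly over $\boldw \in \{\|\boldw\| < 2W\}$ and $b \in \{|b| < 2B\}$, collapses to the single $(\boldw, b)$-free condition \eqref{eq.cond.hessian} with $\Lambda_\mathrm{min} = \Lambda_\rightarrow$. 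Next I will identify $\boldw^* = \nabla(h \circ r)(\boldx^\star) = \nabla r(\boldx^\star)$, take $W = \kappa_\rightarrow$ from the hypothesis $\sup_\boldx \|\nabla r(\boldx)\| \le \kappa_\rightarrow$, and obtain the companion bound on $b^* = r(\boldx^\star) - \langle \nabla r(\boldx^\star), \boldx^\star\rangle$ from local boundedness of $r$ on $\mathcal{X}$. Assumptions \ref{ass.nw2} and \ref{ass.boundedcov} carry over verbatim, since they are already assumed in the corollary.

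Finally, interpreting the curvature constant of Assumption \ref{ass.curvature}, which for $h = \mathrm{id}$ becomes a bound on $\sup_\boldx \|\nabla^2 r(\boldx)\|$, as $\kappa_\rightarrow$ (the only constant playing that role in the statement), I substitute $\kappa = \kappa_\rightarrow$, $C_{\psi_\mathrm{con}''} = 1$, and $\Lambda_\mathrm{min} = \Lambda_\rightarrow$ into the conclusion of Theorem \ref{them.unbiasedness}, yielding
\[
\|\boldw_\rightarrow(\boldx^\star) - \nabla r(\boldx^\star)\| \;\le\; \frac{O_p\!\left(\tfrac{1}{\sqrt{n\sigma^d}}\right) + \kappa_\rightarrow \cdot C_k \cdot \sigma^2}{\Lambda_\rightarrow}
\]
for all sufficiently small $\sigma$ and sufficiently large $n$. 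The main obstacle is purely bookkeeping; the only genuinely non-trivial point is that the eigenvalue condition \eqref{eq.eigenvalu.condition}, which in the general theorem is a uniform statement over a whole neighborhood of $(\boldw^*, b^*)$, becomes a single spectral bound on the weighted second-moment matrix $\hatE_q[k_\sigma^\star \tilde{\boldx}\tilde{\boldx}^\top]$ precisely because $\psi_\mathrm{con}$ is quadratic. No new probabilistic or analytic techniques beyond those already used in Theorem \ref{them.unbiasedness} are required.
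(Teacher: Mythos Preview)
Your proposal is correct and follows essentially the same route as the paper: recognize that $\psi_\mathrm{con}(d)=d^2/2+d$ has $\psi_\mathrm{con}''\equiv 1$, so Assumption~\ref{ass.psi.twice} holds with $C_{\psi_\mathrm{con}''}=1$ and the Hessian in \eqref{eq.eigenvalu.condition} reduces to $\hatE_q[k_\sigma^\star\tilde{\boldx}\tilde{\boldx}^\top]$, whence \eqref{eq.cond.hessian} implies \eqref{eq.eigenvalu.condition} for \emph{every} $W,B>0$; then apply Theorem~\ref{them.unbiasedness}. The paper adds one small remark you omit: since \eqref{eq.obj.thetax0.ls} is an unconstrained concave quadratic, its maximizer is automatically a stationary point of $\ell$, which is what Theorem~\ref{them.unbiasedness} actually requires; and because \eqref{eq.eigenvalu.condition} holds for all $W,B$, one may simply pick any $W\ge\|\boldw^*\|$ and $B\ge|b^*|$ without needing a separate boundedness argument for $r$.
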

See Appendix \ref{proof.col.ulsif} for the proof. 

\begin{corollary}
\label{co.gradlogratio}
    Suppose $\sup_{\boldx \in \mathcal{X}}\|\nabla^2 \log r(\boldx)\| \le \kappa_\mathrm{\leftarrow}$, and Assumption \ref{ass.nw2}, \ref{ass.boundedcov} holds. 
    Assume  
    \[\sup_{\boldx \in \mathcal{X}}\|\boldx\| \le C_\mathcal{X}, \sup_{\boldx \in \mathcal{X}}\log r(\boldx) - 1 \le C_{\log r}, \]  and there exists $B, W < \infty$, so that \[\|\nabla \log r(\boldx^\star)\| < W, |\log r(\boldx^\star) - \langle \nabla \log r(\boldx^\star), \boldx^\star \rangle | < B.\] Additionally, 
    \begin{align} 
    \lambda_\mathrm{min}&\left[\hatE_q[k_\sigma(\boldx, \boldx^\star )\tilde{\boldx} \tilde{\boldx}^\top]\right] \ge \sigma^d \cdot \Lambda_\leftarrow >0, \label{eq.cond.eigen}
    \end{align}
    holds with high probability. 
    Then $\exists \sigma_0, N, K > 0$ and for all  $0 < \sigma < \sigma_0, n > N, $  
    \begin{align*} 
        &\|\boldw_{\mathrm{\leftarrow}}(\boldx^\star) - \nabla \log r(\boldx^\star)\| \\
        \le&  \frac{1}{(\Lambda_\leftarrow \cdot   \exp(-2WC_\mathcal{X} - 2B -1))} \cdot \frac{K}{\sqrt{n\sigma^d}} + \\ 
       & \frac{\kappa_\leftarrow \cdot C_k \cdot \left [\exp(W C_\mathcal{X} + B -1) \vee  \exp(C_{\log r} - 1) \right] \cdot  \sigma^2 }{ \Lambda_\leftarrow \cdot   \exp(-2WC_\mathcal{X} - 2B -1)},
    \end{align*}
    holds with high probability. 
\end{corollary}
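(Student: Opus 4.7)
The plan is to derive Corollary \ref{co.gradlogratio} as a direct specialization of Theorem \ref{them.unbiasedness} to the backward KL setting, where $h(r) = \log r$, by checking the hypotheses of the theorem and tracking the constants that arise from the specific form of $\psi_\mathrm{con}$. First I would identify $\psi_\mathrm{con}$: since we are estimating $\nabla \log r(\boldx^\star)$, Definition \ref{def.dual.div} identifies the mirror of $\mathrm{KL}[q,p]$ as $\mathrm{KL}[p,q]$ with $\psi(r) = r \log r$, whose convex conjugate is $\psi_\mathrm{con}(d) = \exp(d - 1)$. Hence $\psi_\mathrm{con}''(d) = \exp(d-1)$ as well, and the local objective \eqref{eq.kernelize.hrestimator} reduces to $\ell_\leftarrow$ from \eqref{eq.obj.thetax0.kl}, with $\boldw_\leftarrow(\boldx^\star)$ as its stationary point.

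Next I would verify Assumption \ref{ass.psi.twice} by bounding $\psi_\mathrm{con}''$ along the convex combination. For any $a \in [0,1]$ and $\boldx \in \mathcal{X}$, the combination $a \log r(\boldx) + (1-a)(\langle \boldw^*, \boldx \rangle + b^*)$ is at most $\max(C_{\log r},\, W C_\mathcal{X} + B)$ by the hypotheses $\|\boldx\| \le C_\mathcal{X}$, $\|\boldw^*\| \le W$, $|b^*| \le B$, and $\log r \le C_{\log r}$. Applying $\exp(\cdot - 1)$ gives
\[
C_{\psi_\mathrm{con}''} \le \exp(W C_\mathcal{X} + B - 1) \vee \exp(C_{\log r} - 1),
\]
which will appear in the bias term of the final bound.

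Then I would verify the eigenvalue condition \eqref{eq.eigenvalu.condition}. A direct computation gives $\nabla_{[\boldw,b]}^2 \psi_\mathrm{con}(\langle \boldw,\boldx\rangle + b) = \exp(\langle \boldw,\boldx\rangle + b - 1)\,\tilde{\boldx} \tilde{\boldx}^\top$. For $\|\boldw\| < 2W$ and $|b| < 2B$, Cauchy--Schwarz together with $\|\boldx\| \le C_\mathcal{X}$ yields the uniform lower bound $\exp(\langle \boldw,\boldx\rangle + b - 1) \ge \exp(-2W C_\mathcal{X} - 2B - 1)$. Pulling this scalar outside the expectation and invoking \eqref{eq.cond.eigen} shows
\[
\lambda_\mathrm{min}\!\left\{\hatE_q[k_\sigma^\star \nabla_{[\boldw,b]}^2 \psi_\mathrm{con}(\langle \boldw,\boldx\rangle+b)]\right\} \ge \sigma^d \, \Lambda_\leftarrow \, \exp(-2W C_\mathcal{X} - 2B - 1)
\]
with high probability, so we may take $\Lambda_\mathrm{min} = \Lambda_\leftarrow \cdot \exp(-2W C_\mathcal{X} - 2B - 1)$. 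With all hypotheses verified, plugging the identified $C_{\psi_\mathrm{con}''}$ and $\Lambda_\mathrm{min}$ into the conclusion of Theorem \ref{them.unbiasedness} reproduces the two-part bound in the statement, where the first summand is the stochastic $O_p(1/\sqrt{n\sigma^d})$ term divided by $\Lambda_\mathrm{min}$ and the second is $\kappa_\leftarrow C_k C_{\psi_\mathrm{con}''} \sigma^2$ divided by $\Lambda_\mathrm{min}$.

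The main obstacle is the exponential nature of $\psi_\mathrm{con}$: because $\psi_\mathrm{con}''$ is unbounded on $\mathbb{R}$, neither the upper bound required by Assumption \ref{ass.psi.twice} nor the lower bound required by \eqref{eq.eigenvalu.condition} is automatic, and both must be extracted from the boundedness hypotheses on $\boldx$, $\boldw^*$, $b^*$, and $\log r$. The arithmetic of propagating these bounds in a compatible way, in particular keeping the enlarged search set $\{\|\boldw\| < 2W, |b| < 2B\}$ consistent with the bounds used for $\boldw^*, b^*$, is the only non-routine piece; once these bounds are in hand, the rest of the proof is a mechanical substitution into Theorem \ref{them.unbiasedness}.
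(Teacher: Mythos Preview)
Your proposal is correct and follows essentially the same route as the paper's own proof: identify $\psi_{\mathrm{con}}(d)=\exp(d-1)$, verify Assumption~\ref{ass.psi.twice} by bounding the convex combination at its endpoints to obtain $C_{\psi''_{\mathrm{con}}}=\exp(WC_{\mathcal{X}}+B-1)\vee\exp(C_{\log r}-1)$, verify the eigenvalue condition~\eqref{eq.eigenvalu.condition} via the uniform lower bound $\exp(\langle\boldw,\boldx\rangle+b-1)\ge\exp(-2WC_{\mathcal{X}}-2B-1)$ on the constraint set together with~\eqref{eq.cond.eigen}, and then substitute into Theorem~\ref{them.unbiasedness}. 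The paper merely swaps the order of the two verifications and phrases the endpoint bound via monotonicity of $\psi''_{\mathrm{con}}$, which is equivalent to your convex-combination argument.
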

See Appendix \ref{proof.co.logratio} for the proof. 
Note that due to the assumption that $\|\boldx\|$ is bounded, Corollary \ref{co.gradlogratio} can only be applied to density ratio functions with bounded input domains. However, this does include important examples such as images, where pixel brightnesses are bounded within $[0,1]$.
Using the same proof techniques, it is possible to derive more Corollaries for other $f$-divergence velocity fields. We leave them as a future investigation.

\subsection{Model Selection via Local Linear Interpolation}
\label{sec.model.selection}
Although Theorem \ref{them.unbiasedness} says the estimation bias disappears as $\sigma \to 0$, 
when we only have a finite number of samples, 
the choice of the kernel bandwidth $\sigma$ controls the bias-variance trade-off of the local estimation. 
Thus we propose a model selection criterion. The details of the procedure is provided in Appendix \ref{sec.mosel}. 

The high level idea is: 
Suppose we have testing samples from $p$ and $q$. A good choice of $\sigma$ would result in a good approximation of $h \circ r$ on \textit{testing points}, thus the best approximation of $h \circ r$ would maximize the variational lower bound \eqref{eq.fdiv.variational}. Therefore, we only need to evaluate \eqref{eq.fdiv.variational} on testing samples to determine the optimality of $\sigma$. 

Our local linear estimator offers a unique advantage of model selection because it is formulated as a non-parametric curve fitting problem. In contrast, SVGD lacks a systematic approach and has to resort to the ``median trick''.

\section{Experiments}
\label{sec.exp}

\subsection{Reducing KL Divergence: SVGD vs. NW vs. Local Linear Estimator}
\label{sec.svgd.vs.nsvgd}
\begin{figure*}
    \centering
\includegraphics[width=.98\textwidth]{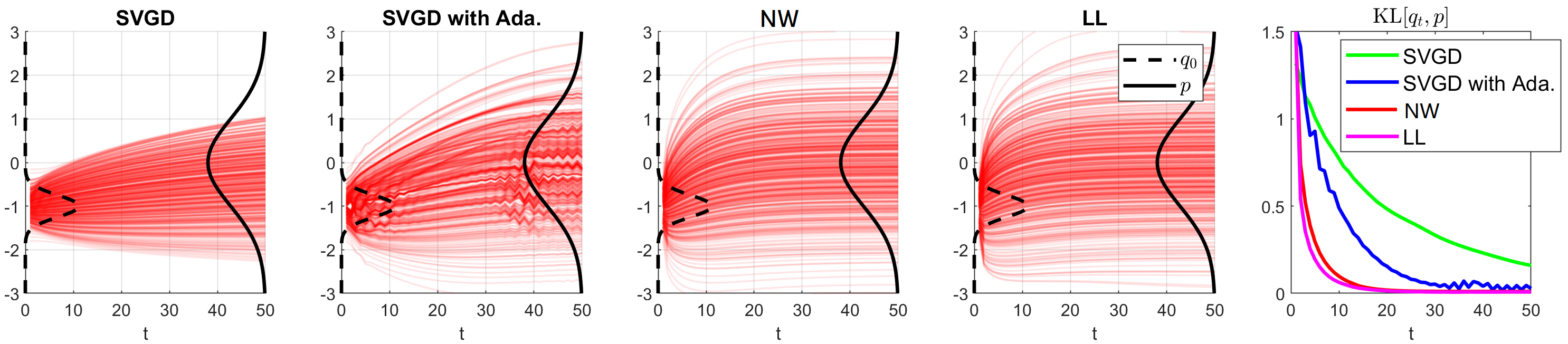}
    \caption{Particle Trajectories of SVGD, SVGD with AdaGrad, NW, LL. Approximated $\mathrm{KL}[q_t, p]$ with different methods.}
    \label{fig.svgd}
\end{figure*}

In this experiment, we investigate the performance of SVGD, NW and Local Linear (LL) estimator through the task of  
 minimizing $\mathrm{KL}[q_t,p]$. 
We let SVGD, NW and LL fit the target distribution $p = \mathcal{N}(0, 1)$. 500 iid initial particles are drawn from $q_0 = N(-1, 0.25^2)$. For all methods, we use naive gradient descent to update particles with a fixed step size 0.1. We also consider a variant of SVGD where AdaGrad \citep{duchi2011adaptive} is applied to adjust the step size dynamically.  
For SVGD and SVGD with AdaGrad, we use the MATLAB code
provided by \citet{LiuQ2016SVGD} with its default heuristics.
We plot the trajectories of particles of all three methods in Figure \ref{fig.svgd}. 

Although all three algorithms move particles toward the target distribution,  the naive SVGD does not spread the particle mass quickly enough to cover the target distribution when using the same step size.
This situation is much improved by applying the adaptive learning rate method (AdaGrad). 
In comparison, NW and LL both converge fast. After 20 iterations, all particles have arrived at the target positions. 
Since all methods are motivated by minimizing $\mathrm{KL}[q_t, p]$, we plot the $\mathrm{KL}[q_t,p]$ approximated by Donsker and Varahan Lower Bound \citep{DonskerVarahan1976} for all four methods. The plot of $\mathrm{KL}[q_t, p]$ agrees with our qualitative assessments:  AdaGrad SVGD can reduce the KL significantly faster than the vanilla SVGD with naive gradient descent. After 20 iterations, the KL divergence for both NW and LL particles reaches zero, indicating that the particles have fully converged to the target distribution. 
LL achieves a performance comparable to NW. This is a remarkable result as NW, and SVGD has access to the true $\nabla \log p$, but LL only has samples from $p$. We also compare NW, LL and SVGD performance at different sample sizes ($n = 100, n = 250$). The results can be found in the Appendix \ref{sec.svgd.diff.n}. 

In the next sections, we will showcase the performance of LL in forward/backward KL minimization problems. 

\subsection{Joint Domain Adaptation}

In domain adaptation, we want to use source domain samples to help a prediction task in a target domain. This addresses situations where the training data for a method may differ from the real data when deployed.
We assume that source samples $\mathcal{D}_q := \left\{ (\boldx_q^{(i)}, y_q^{(i)})\right\}_{i=1}^{n_q}$ are drawn from a joint distribution $\mathbb{Q}_{XY}$ and target samples $\mathcal{D}_p := \left\{ (\boldx_p^{(i)}, y_p^{(i)})\right\}_{i=1}^{n_p}$ are drawn from a different joint distribution $\mathbb{P}_{XY}$. However, $y_p$ is missing from the target set. Thus, we want to predict missing labels in $\mathcal{D}_p$ with the help of $\mathcal{D}_q$.  
\citet{Courty15,courty2017joint} propose to 
find an optimal map that aligns the distribution $\mathbb{Q}_{XY}$ and $\mathbb{P}_{XY}$, then train a classifier on the aligned source samples.
Inspired by this method, we propose to align samples by minimizing $\mathrm{KL}[q_{t},p]$, where $p$ is the  density of the target $\mathbb{P}_{XY}$ and $q_t$ is a particle-label pair distribution whose samples are $\mathcal{D}_{q_t} := \left\{ (\boldx_t^{(i)}, y_q^{(i)})\right\}_{i=1}^{n_q}$.  
To minimize $\mathrm{KL}[q_{t},p]$, we evolve $\boldx_{t}$ according to the backward KL field and $\boldx_{t = 0}$ is initialized to be the source input $\boldx_q$. In words, we transport source input samples so that the transported and target  samples are aligned in terms of minimizing the backward KL divergence. 
After $T$ iterations, we can train a classifier using transported source samples $\{(\boldx_{T}, y_q)\}$ to predict target labels. 

One slight issue is that we do not have labels in the target domain but performing WGF requires joint samples $(X,Y)$. To solve this, we adopt the same approach used in \citet{courty2017joint}, replacing $y_p$ with a proxy $\hat{g}(\boldx_p)$, where $\hat{g}$ is a prediction function trained to minimize an empirical transportation cost (See Section 2.2 in \citet{courty2017joint})\footnote{If we know some labels $y_p$ from the target domain, instead of using the proxy $\hat{g}(\boldx_p)$, we can use pairs ${(\boldx_p, y_p)}$ to align the source and target samples.}.   
We demonstrate our approach in a toy example, in Figure \ref{fig:wgfda}. 

\begin{figure}[H]
    \centering
    \includegraphics[width=.47\textwidth]{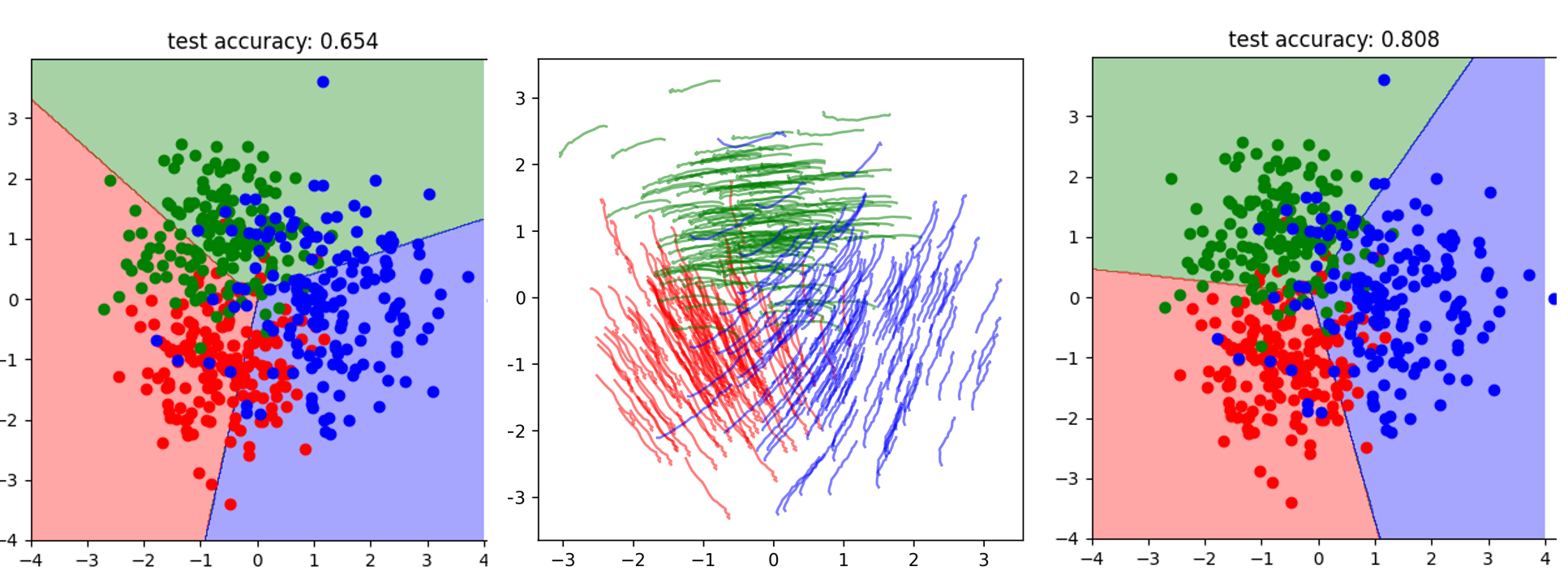}
    \caption{Left: the source classifier (represented by colored areas) misclassifies many testing points (colored dots). Middle: WGF moves particles to align the source and target samples. Lines are trajectories of sample movements in each class. Right: the retrained classifier on the transported source samples gives a much better prediction. }
    \label{fig:wgfda}
\end{figure}

Table \ref{tab:domain_transfer} compares the performance of adapted classifiers on a real-world 10-class classification dataset named ``office-caltech-10'', where images of the same objects are taken from four different domains (amazon, caltech, dslr and webcam). We reduce the dimensionality by projecting all samples to a 100-dimensional subspace using PCA. 
We compare the performance of 
the \textbf{base} (the source RBF kernel SVM classifier),
the Joint Distribution Optimal Transport \citep{courty2017joint} (\textbf{JDOT}), an RBF kernel SVM trained on the WGF transported source samples $\{(\boldx_{T}, y_q)\}$ (\textbf{WGF}) and an SVM trained on MMDFlow \citep{hagemann2024posterior} transported source samples (\textbf{MMD}). The classification accuracy on the entire target sets are reported. 
It can be seen that in some cases, reusing the source classifiers in the target domain does lead to catastrophic results (e.g. amazon to dslr, caltech to dslr). However, we can avoid such performance decline by using any joint distribution-based domain adaptation. 
It can be seen that both \textbf{WGF} and \textbf{MMD} achieve superior performance compared to \textbf{JDOT} while \textbf{WGF} has the best performance in most adaptation cases.

\begin{table}[t]
\centering
\begin{tabular}{|c|c|c|c|c|}
\hline
\textbf{$\mathcal{D}_q \to \mathcal{D}_p$} & \textbf{base} & \textbf{JDOT} & \textbf{WGF} & \textbf{MMD} \\
\hline
amz. $\to$ dslr & 27.39\% & 65.61\% & \textbf{78.34}\% & \textbf{78.34}\% \\
amz. $\to$ web. & 61.69\% & 67.80\% & 84.07\% & \textbf{89.15}\% \\
amz. $\to$ cal. & 81.66\% & 63.58\% & \textbf{82.72}\% & 82.19\% \\
dslr $\to$ amz. & 70.35\% & 72.96\% & \textbf{85.91}\% & 76.30\% \\
dslr $\to$ web. & 94.92\% & 76.61\% & \textbf{95.25}\% & 86.10\% \\
dslr $\to$ cal. & 58.95\% & 72.31\% & \textbf{79.25}\% & 69.01\% \\
web. $\to$ amz. & 75.78\% & 75.37\% & \textbf{91.34}\% & 89.46\% \\
web. $\to$ dslr & 94.27\% & 73.25\% & 98.73\% & \textbf{100.00}\% \\
web. $\to$ cal. & 67.05\% & 63.49\% & \textbf{78.09}\% & 75.16\% \\
cal. $\to$ amz. & 83.40\% & 84.55\% & \textbf{91.13}\% & 89.04\% \\
cal. $\to$ dslr & 26.11\% & 69.43\% & \textbf{84.71}\% & \textbf{84.71}\% \\
cal. $\to$ web. & 63.39\% & 74.58\% & \textbf{80.34}\% & 79.32\% \\
\hline
\end{tabular}
\caption{Domain adaptation of different domains in Office-Caltech-10 dataset.}
\label{tab:domain_transfer}
\end{table}

\subsection{Missing Data Imputation}
\begin{figure*}[t]
    \centering
    \includegraphics[width=.95\textwidth]{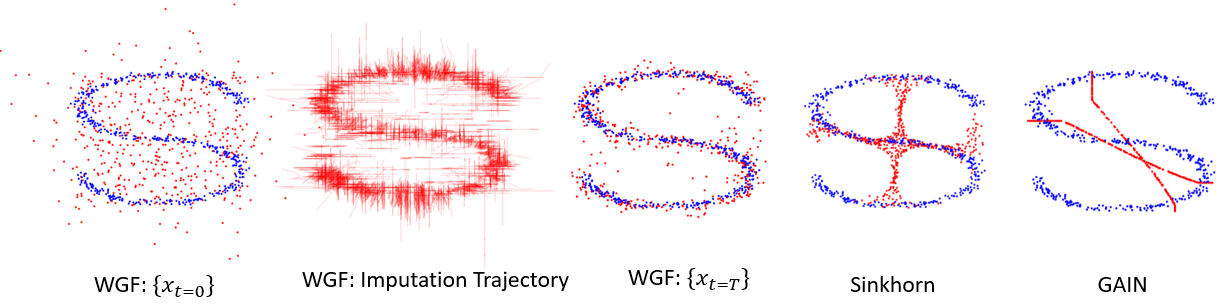}
    \caption{Comparison of imputation methods. Fully observed samples are plotted in blue, and imputed samples in red. The leftmost plot shows the initial particles in the WGF impute. The second left plot visualizes the imputation trajectories of different particles. The third left plot is the final output after 100 WGF iterations. }
    \label{fig.imp}
\end{figure*}
In missing data imputation, we are given a joint dataset $\tilde{\mathcal{D}} := \{ (\tilde{\boldx}^{(i)}, \boldm^{(i)}) \}$, where $\boldm \in \{0,1\}^d$ is a mask vector and $m_j^{(i)} = 0$ indicates the $j$-th dimension of $\tilde{\boldx}^{(i)}$ is missing. The task is to ``guess'' the missing values in $\tilde{\boldx}$ vector. In recent years, GAN-based missing value imputation, e.g., \textbf{GAIN} \citep{yoon2018gain}, has gained significant attention \citep{zhang_systematic_2023}. 
Let $\boldx$ be an imputation of $\tilde{\boldx}$. The basic idea is that if $\boldx$ is a perfect imputation of $\tilde{\boldx}$, then a classifier cannot predict $m_j$ given $(\boldx, \boldm_{-j})$. For example, given a perfectly imputed image, one cannot tell which pixels are imputed and which pixels are observed. 
Therefore, in their approach, a generative network is trained to minimize the aforementioned classification accuracy. In fact, this method teaches the generator to \textit{break the dependency between} $\boldx$ and $\boldm$.
Inspired by this idea, we propose to impute $\{\tilde{\boldx}\}$ by iteratively updating particles $\{\boldx_t\}$. The initial particle $\boldx_{t=0}$ is set to be
\begin{align*}
    x_{0,j} = \begin{cases}
        \tilde{x}_j & m_j = 1\\
        \text{Sample from e.g., } \mathcal{N}(0,1) & m_j = 0.
    \end{cases}
\end{align*}
After that, the particles $\{\boldx_t\}$ are evolved according to the forward KL field that minimizes $\mathrm{KL}[p_{X_tM}, p_{X_t}p_{M}]$, i.e., the mutual information between $X_t$ (particles) and $M$ (mask). 
Note that we only update missing dimensions, i.e.,  
\begin{align*}
    x_{t+1,j} := \begin{cases}
        x_{t, j} & m_j = 1\\
        x_{t,j} + \eta \partial_{x_j} r_t(\boldx_t, \boldm) & m_j = 0. 
    \end{cases}
\end{align*}
Samples from $p_{X_tM}$ are available to us since we observe the pairs $\{(\boldx, \boldm)\}$ and samples from $p_{X_t}p_{M}$ can be constructed as $\{(\boldx, \boldm')\}$, where $\boldm'$ is a random sample of $M$ given the missing pattern. 

In the first experiment, we test the performance of various imputers on an ``S''-shaped dataset, where samples are Missing Completely at Random (MCAR) \citep{rubin_inference_1976}. The results are plotted in Figure \ref{fig.imp}. We compare our imputed results (\textbf{WGF}) with Optimal Transport-based imputation method (\textbf{Sinkhorn}) \citep{muzellec2020missing}, and GAN-based imputation method (\textbf{GAIN}) \citep{yoon2018gain}.
$\sigma$ is chosen by automatic model selection described in Section \ref{sec.mosel}. It can be seen that our imputer, based on minimizing $\mathrm{KL}[p_{X_tM},p_{X_t}p_{M}]$  nicely recovers the ``S''-shape after 100 particle update iterations. However, Sinkhorn and GAIN imputation methods struggle to accurately restore the ``S''-shaped pattern.

\begin{figure}[t]
\centering
    \includegraphics[width=.4\textwidth]{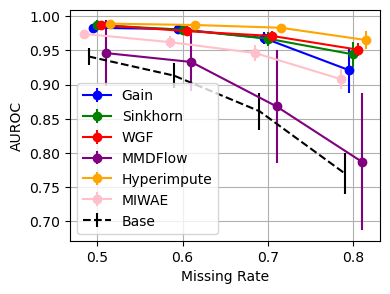}
    \caption{AUROC of a linear SVM classifier on the imputed Breast Cancer dataset. \textbf{Base} indicates the performance of a baseline imputer where we impute the missing values with Gaussian noises. }
    \label{fig.breast}
\end{figure}

In the second experiment, we test the performance of our algorithm on a real-world Breast Cancer classification dataset \citep{misc_breast_cancer_14} in Figure \ref{fig.breast}. 
This is a 30-dimensional binary classification dataset and we artificially create missing values by following the MCAR paradigm with different missing rates. 
Since the dataset is a binary classification dataset, 
we compare the performance of linear SVM classifiers trained on imputed datasets. The performance is measured by the Area Under the ROC Curve (AUROC) on hold-out testing sets. 
In addition to \textbf{Sinkhorn} and \textbf{GAIN}, we validate the performance of our method with three additional algorithms: MMD flow using negative distance kernel (\textbf{MMDFlow}) \citep{hagemann2024posterior}, a model selection-based method \textbf{HyperImpute} \citep{Jarrett2022HyperImpute}, and an auto encoder-based approach (\textbf{MIWAE}) \citep{pmlr-v97-mattei19a}.
The result shows that SVM trained on the dataset imputed by our method achieves comparable performance to datasets imputed by the other benchmark methods. Our method is robust against the choice of $\sigma$. 
Details and the selection of hyperparameters can be found in Section \ref{sec.missing.exp}.

\section{Limitations}
All $f$-divergence fields are defined using the density ratio function $p/{q_t}$. However, the ratio function is not defined when the input domains of $p$ and $q_t$ are non-overlapping. This problem is particularly noticeable when working on high-dimensional, real-world datasets (such as images). This so-called ``density chasm problem'' \cite{rhodes2020telescoping} will also affect the velocity field estimation as assumptions required by our estimators e.g., Assumption \ref{ass.curvature} or \eqref{eq.upperbounding.condition} depends on the density ratio. One possible solution to this issue is to build a ``bridge'' using interpolations of two distributions, estimate the density ratios of ``neighbouring'' interpolations, and then combine these estimates. However, this will significantly increase the computational cost as we need to estimate the density ratio gradients of many pairs of distributions. Developing an efficient gradient estimator using target and particle distribution interpolation is an interesting future task. Our assumptions in Proposition \ref{svgd.bias} and Theorem \ref{them.unbiasedness} indicate that the algorithm's effectiveness depends on the boundedness of $\sup \|\nabla^2 (h \circ r)\|$. This assumption is not necessarily true for WGF of some divergence over the entire real domain (e.g. See Appendix G.). This restriction suggests in some applications, these flows are non-ideal choices.

Another potential limitation of the proposed estimator is its applicability to high-dimensional datasets (such as the ImageNet dataset \citep{Li2009}). Although some preliminary investigations show that the proposed estimator does work reasonably well on some high-dimensional generative tasks (see Section \ref{sec.wgf}), it is known that local regression tends to perform poorly on high-dimensional datasets (see, e.g., \citep{stone1980,Stone1982}). Instead of performing WGF updates directly on the high-dimensional datasets, we can consider performing the updates in a low-dimensional feature space (see Section \ref{sec.wgf.fea}). Some preliminary investigations have shown promising results (see Section \ref{sec.exp.fea}).   

Finally, our method does not have a generator in the form of a neural network (like the one used by \citep{gao2019deep}), so we can only rely on the current particle set to estimate the vector field. This means that the sample size must be fixed before the flow. Thus, our method is better suited for applications such as domain adaptation and data imputation, where the number of samples to be transported is fixed before running the flow.

\section{Conclusion}
In recent years, it has been discovered that by iteratively updating a set of particles, WGF can approximate a target distribution by reducing the $f$-divergences between corresponding distributions. 
This paper addresses the important problem of estimating the velocity fields induced by various $f$-divergence WGFs. 
We propose novel interpolation-based estimators for different $f$-divergences fields and prove their validity. We demonstrate the effectiveness of these estimators through two novel applications: domain adaptation and missing data imputation. Our results show that even without access to the density ratio function, the velocity fields can be efficiently estimated from samples of the target and particle distributions.

\section*{Acknowledgments}
We thank four anonymous reviewers for their feedback on our paper. 
JS was supported by the EPSRC
Centre for Doctoral Training in Computational Statistics
and Data Science, grant number EP/S023569/1.

\section*{Impact Statement}
This paper presents work whose goal is to advance the field of Machine Learning. There are many potential societal consequences of our work, none of which we feel must be specifically highlighted here.

\bibliography{imdb}
\bibliographystyle{plainnat}

\appendix

\onecolumn

\section{Proof of Proposition \ref{svgd.bias}}
\label{sec.svgd.proof}
\begin{proof} 
    \begin{align}
    \left | \hat{u}_i(\boldx^\star) - \partial_i \log r(\boldx^\star) \right |
    = & \left |  \frac{\widehat{\mathbbE}_q\left[k_\sigma(\boldx, \boldx^\star) \partial_i \log r(\boldx)\right]}{\hatE_q[k_\sigma(\boldx, \boldx^\star)]} - \partial_i \log r(\boldx^\star) \right | \notag \\
    =& \left | \frac{\hatE_q[k_\sigma(\boldx, \boldx^\star)\left(\partial_i \log r(\boldx^\star) + \langle \nabla\partial_i \log r(\bar{\boldx}), \boldx - \boldx^\star\rangle\right)]}{\hatE_q[k_\sigma(\boldx, \boldx^\star)]} - \partial_i \log r(\boldx^\star)  \right| \notag \\
    = &  \left | \frac{\hatE_q[k_\sigma(\boldx, \boldx^\star) \langle \nabla \partial_i \log r(\bar{\boldx}), \boldx - \boldx^\star\rangle ]}{\hatE_q[k_\sigma(\boldx, \boldx^\star)]}\right| \notag \\
    \le & \frac{\hatE_q[k_\sigma(\boldx, \boldx^\star) \|\nabla \partial_i \log r(\bar{\boldx})\| \cdot \| \boldx - \boldx^\star\| ]}{\hatE_q[k_\sigma(\boldx, \boldx^\star)]} \notag \\
    \le & \kappa \cdot \frac{\hatE_q[k_\sigma(\boldx, \boldx^\star) \| \boldx - \boldx^\star\| ]}{\hatE_q[k_\sigma(\boldx, \boldx^\star)]} \notag \\
    \le & \kappa \cdot \left(\frac{  \hatE_q[k_\sigma(\boldx, \boldx^\star) \| \boldx - \boldx^\star\| ]}{\hatE_q[k_\sigma(\boldx, \boldx^\star)]} - \frac{ \mathbbE_q[k_\sigma(\boldx, \boldx^\star) \| \boldx - \boldx^\star\| ]}{\mathbbE_q[k_\sigma(\boldx, \boldx^\star)]} + \frac{  \mathbbE_q[k_\sigma(\boldx, \boldx^\star) \| \boldx - \boldx^\star\| ]}{\mathbbE_q[k_\sigma(\boldx, \boldx^\star)]}\right) \label{eq.nw.proof}
\end{align}
The second line is due to the mean value theorem and $\bar{\boldx}$ is a point in between $\boldx$ and $\boldx^\star$ in a coordinate-wise fashion. The second inequality is due to the operator norm of a matrix is always greater than a row/column norm.  

Since $\mathrm{Var}_q[\frac{1}{\sigma^d}k_\sigma(\boldx, \boldx^\star) \| \boldx - \boldx^\star\|] = O(\frac{1}{{\sigma^d}})$ due to the assumption, using Chebyshev inequality $$\hatE_q \left[\frac{1}{\sigma^d}k_\sigma(\boldx, \boldx^\star) \| \boldx - \boldx^\star\|\right] - \mathbbE_q \left[\frac{1}{\sigma^d}k_\sigma(\boldx, \boldx^\star) \| \boldx - \boldx^\star\|\right] = O_p(\frac{1}{\sqrt{n\sigma^d}}).$$ Similarly, due to the assumption that 
$\mathrm{Var}_q[\frac{1}{\sigma^d}k_\sigma(\boldx, \boldx^\star)] = O(\frac{1}{{\sigma^d}})$, we have $$\hatE_q \left[\frac{1}{\sigma^d}k_\sigma(\boldx, \boldx^\star) \right] - \mathbbE_q \left[\frac{1}{\sigma^d}k_\sigma(\boldx, \boldx^\star) \right] = O_p(\frac{1}{\sqrt{n\sigma^d}}).$$

\begin{lemma}
\label{lem.aaba}
Suppose $|\mathbbE A| \le A_1 < \infty, |\mathbbE B| > B_0 > 0$. $\hatE A  - \mathbbE{A} = O_p(\frac{1}{\sqrt{n\sigma^d}})$ and $\hatE B  - \mathbbE{B} = O_p(\frac{1}{\sqrt{n\sigma^d}})$
\begin{align*}
    \frac{\hatE A }{\hatE B } 
    - \frac{\mathbbE{A}}{{\mathbbE{B}}} = O_p\left(\frac{1}{\sqrt{n\sigma^d}}\right).
\end{align*}
\end{lemma}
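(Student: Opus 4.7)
The plan is to apply the standard ``delta-method'' style decomposition for quotients. Specifically, I would introduce the abbreviations $A = \mathbbE A$, $B = \mathbbE B$, $\hat A = \hatE A$, $\hat B = \hatE B$, and then use the algebraic identity
\[
\frac{\hat A}{\hat B} - \frac{A}{B} = \frac{(\hat A - A)\,B - A\,(\hat B - B)}{\hat B \cdot B}.
\]
This cleanly separates the stochastic fluctuations (living in $\hat A - A$ and $\hat B - B$) from the deterministic quantities in the denominator, and reduces the problem to two subtasks: (i) bounding the numerator in $O_p$ terms and (ii) showing the denominator is bounded away from zero with high probability.

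For the numerator I would use the triangle inequality together with the two hypotheses. The term $|A(\hat B - B)|$ is at most $A_1 \cdot O_p(1/\sqrt{n\sigma^d}) = O_p(1/\sqrt{n\sigma^d})$ by the bound $|\mathbbE A| \le A_1$. The term $|(\hat A - A) B|$ equals $|B|$ times an $O_p(1/\sqrt{n\sigma^d})$ quantity; since $B$ is a fixed deterministic constant (not depending on $n$) in this lemma, multiplying by it preserves the $O_p$ order. Summing the two pieces yields numerator $=O_p(1/\sqrt{n\sigma^d})$.

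For the denominator I would show that $|\hat B|$ is bounded below by $B_0/2$ with probability tending to one. Since $\hat B - B = O_p(1/\sqrt{n\sigma^d})$ and we operate in the regime where $1/\sqrt{n\sigma^d} \to 0$, for every $\epsilon > 0$ there exists an $N$ such that $P(|\hat B - B| > B_0/2) < \epsilon$ for $n > N$, on which event $|\hat B| \ge |B| - B_0/2 > B_0/2$ by the reverse triangle inequality. Therefore $|\hat B \cdot B| \ge B_0^2/2$ with high probability, and dividing the $O_p(1/\sqrt{n\sigma^d})$ numerator bound by this strictly positive deterministic constant preserves the rate.

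The only subtlety I anticipate is the denominator concentration step, since one must be explicit about the regime in which $O_p(1/\sqrt{n\sigma^d}) = o_p(1)$ so that $\hat B$ is guaranteed not to vanish. Once that is justified, the remainder is a routine triangle-inequality computation, and no deeper probabilistic machinery (e.g.\ Slutsky or the full delta method) is needed.
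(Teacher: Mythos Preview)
Your proposal is correct and follows essentially the same route as the paper: the paper also rewrites the difference as $\frac{\hatE A \,\mathbbE B - \mathbbE A\, \hatE B}{\hatE B\,\mathbbE B}$, splits via the triangle inequality into an $(\hatE A-\mathbbE A)$ piece and an $(\hatE B-\mathbbE B)$ piece, and handles the random denominator by arguing that $|\hatE B|>|\mathbbE B|/2$ with probability $1-\epsilon$ for all large $n$. Your anticipated subtlety about needing $1/\sqrt{n\sigma^d}\to 0$ for the denominator step is exactly the one the paper relies on as well.
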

\textbf{Sketch of proof}

    The argument below resembles the proof technique used in A.2 in \citet{givens23a}. Here, we provide a sketch argument. 
    
    $\left|\frac{\hatE A }{\hatE B } 
    - \frac{\mathbbE{A}}{{\mathbbE{B}}} \right| = \left|\frac{\hatE A  \mathbbE B - \mathbbE A \hatE B }{\hatE B \mathbbE{B}}  \right| \le \left|\frac{\hatE A   - \mathbbE A }{\hat{\mathbbE}{B}}  \right| +  \left|\frac{\hatE B   - \mathbbE B }{\hatE B \mathbbE{B}}  \right| \cdot  |\mathbbE A|  $. 
    
    First, due to the boundedness of $|\mathbbE B|$, 
    for any $\epsilon$, $\exists N, \forall n > N$,  $|\hatE B | > |\mathbbE B/2| > B_0/2$ with a probability $1 - \epsilon$. Thus, $\hatE A  - \mathbbE{A} = O_p(\frac{1}{\sqrt{n\sigma^d}})$ implies 
    $\left|\frac{\hatE A   - \mathbbE A }{\hat{\mathbbE}{B}}  \right| = O_p(\frac{1}{\sqrt{n\sigma^d}})$. 
    
    Second, due to the boundedness of $|\mathbbE A|$, and the boundedness of $|\hatE B|$ argued above, $\hatE A  - \mathbbE{A} = O_p(\frac{1}{\sqrt{n\sigma^d}})$ implies $ \left|\frac{\hatE B   - \mathbbE B }{\hatE B \mathbbE{B}}  \right| \cdot  |\mathbbE A| = O_p(\frac{1}{\sqrt{n\sigma^d}})$. 
    
    Hence, $    \frac{\hatE A }{\hatE B } 
    - \frac{\mathbbE{A}}{{\mathbbE{B}}} = O_p\left(\frac{1}{\sqrt{n\sigma^d}}\right).$

Due to Lemma \ref{lem.aaba} and \eqref{eq.nw.proof}, we have with high probability that 

\begin{align}
    \left | \hat{u}_i(\boldx^\star) - \partial_i \log r(\boldx^\star) \right| \le& 
    \kappa \cdot \left( \frac{K}{\sqrt{n\sigma^d}} +   \frac{  \mathbbE_q[k_\sigma(\boldx, \boldx^\star) \| \boldx - \boldx^\star\| ]}{\mathbbE_q[k_\sigma(\boldx, \boldx^\star)]}\right) \notag \\
    \le & 
    \kappa \cdot \left( \frac{K}{\sqrt{n\sigma^d}} +   \frac{  \sigma \int q(\boldx^\star + \sigma\boldy )k(\boldy) \| \boldy\| \dy}{\int q(\boldx^\star + \sigma\boldy) k(\boldy) \dy}\right) 
        ~~~~~~~~~~~~~ \boldy := (\boldx - \boldx^\star)/\sigma
    \notag \\
    \le& \kappa \cdot \left( \frac{K}{\sqrt{n\sigma^d}}+ \sigma  C_k\right),  
    \notag 
\end{align}
where $K$ is constant. 

\end{proof}

\section{Visualization of Gradient  Estimation using Local Linear Fitting}

\begin{figure}
    \centering
        \begin{tikzpicture}
    \begin{axis}[
        xlabel={$\boldx$},
        ylabel={$g(\boldx)$},
        xmin=-1, xmax=.8,
        ymin=-.15, ymax=.5,
        grid=both,
        minor tick num=1,
        major grid style={line width=.2pt,draw=gray!50},
        minor grid style={line width=.2pt,draw=gray!25},
        width=8cm,
        height=5cm,
    ]
    
    \addplot[smooth, domain=-1:1, blue, line width=1.5pt] {x^2 + x^3};
    
    \pgfmathsetmacro\xZero{0.25}
    \pgfmathsetmacro\yZero{(\xZero)^2 + \xZero^3}
    \node[label={right:{$(\boldx^\star, \hat{g}(\boldx^\star))$}},circle,fill,inner sep=2pt,red] at (axis cs:\xZero,\yZero) {};
    
    \node[label={right:{$\hat{g}$}},inner sep=2pt,red] at (axis cs:0, -.1) {};
    
    \pgfmathsetmacro\derivAtXZero{2*\xZero + 3*(\xZero)^2}
    
    \addplot+[no marks, red, domain=0.0:0.5] { \derivAtXZero * (x - \xZero) + \yZero };
    
    \end{axis}
    \end{tikzpicture}
    \caption{Fit $g$ locally at $\boldx^\star$ using a linear function $\hat{g}$. Its ``slope'' is a natural estimator of $\nabla_\boldx g(\boldx^\star)$. }
    \label{fig:local-linear}
\end{figure}
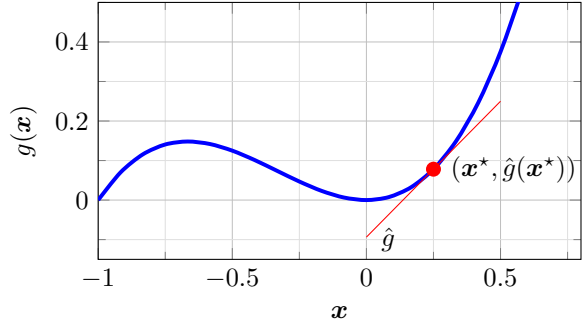

\section{Variational Objective for Estimating $h \circ r$}
\label{sec.var.obj.proof}
\begin{proposition}\label{def.hrx.estimator}
    The maximum in \eqref{eq.fdiv.variational} is attained if and only if $d = h \circ r$.
\end{proposition}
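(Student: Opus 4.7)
The plan is to reduce the statement to the classical Fenchel--Young inequality applied pointwise, and then invoke the mirror relation from Definition \ref{def.dual.div} to identify the optimizer with $h \circ r$.

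First I would use the change of measure $r = p/q$ to rewrite the objective in \eqref{eq.fdiv.variational} as
\begin{equation*}
\mathbbE_p[d(\boldx)] - \mathbbE_q[\psi_\mathrm{con}(d(\boldx))] = \mathbbE_q\bigl[r(\boldx)\,d(\boldx) - \psi_\mathrm{con}(d(\boldx))\bigr].
\end{equation*}
Next, Fenchel--Young for the convex conjugate pair $(\psi,\psi_\mathrm{con})$ gives the pointwise bound $r\,d \le \psi(r) + \psi_\mathrm{con}(d)$, with equality iff $d \in \partial\psi(r)$. Because $\psi$ is differentiable (it is a twice differentiable $f$-divergence generator by the setting of Theorem \ref{thm.gf}), $\partial\psi(r) = \{\psi'(r)\}$, so the equality condition becomes $d = \psi'(r)$. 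Integrating the pointwise bound against $q$ yields
\begin{equation*}
\mathbbE_q\bigl[r(\boldx)\,d(\boldx) - \psi_\mathrm{con}(d(\boldx))\bigr] \le \mathbbE_q[\psi(r(\boldx))] = D_\psi[p,q],
\end{equation*}
so the supremum equals $D_\psi[p,q]$ and is attained iff $d(\boldx) = \psi'(r(\boldx))$ holds $q$-almost everywhere.

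Finally, I would invoke the mirror relation $\psi'(r) \triangleq r\phi'(r) - \phi(r) = h(r)$ from Definition \ref{def.dual.div} together with Theorem \ref{thm.gf}, so that the maximizer $d = \psi' \circ r$ coincides with $h \circ r$ (modulo the additive constant absorbed by the $\triangleq$ convention, which corresponds to the well-known freedom to shift an $f$-divergence generator by a linear function without changing the divergence).

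The main obstacle I anticipate is the ``only if'' direction, i.e., uniqueness of the maximizer. For this I would appeal to the strict convexity of $\psi$ on its domain: the Fenchel--Young inequality is strict whenever $d(\boldx) \ne \psi'(r(\boldx))$, so any set of positive $q$-measure on which $d$ disagrees with $\psi' \circ r$ would introduce a strict gap in the integrated bound. Together with the differentiability of $\psi$, this pins down the maximizer uniquely up to $q$-null sets (and up to the constant inherent in the $\triangleq$ convention).
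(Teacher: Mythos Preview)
Your proposal is correct and follows essentially the same route as the paper: identify the maximizer of \eqref{eq.fdiv.variational} as $d=\psi'\circ r$ via the standard Fenchel--Young argument, then apply the mirror identity $\psi'(r)=r\phi'(r)-\phi(r)=h(r)$ from Definition~\ref{def.dual.div} and Theorem~\ref{thm.gf}. The paper simply cites ``the maximizing argument'' for the first step, whereas you spell out the pointwise Fenchel--Young inequality and the strict-convexity reasoning for uniqueness; your treatment is more self-contained but not a different method.
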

\begin{proof}
    Due to the maximizing argument, the maximum of \eqref{eq.fdiv.variational} is attained if and only if $d = \psi'$. The definition of mirror divergence indicates that $\psi' = r\phi'(r) - \phi(r)$. Theorem \ref{thm.gf} states that $h \circ r = r\phi'(r) - \phi(r) = \psi'$, thus the maximum is attained if and only if $d = h \circ r$. 
\end{proof}

\section{Proof of Theorem \ref{them.unbiasedness}, Estimation Error Bound of $\boldw(\boldx^\star)$}
\label{sec.proof.bias}

\begin{proof}
In this section, to simplify notations, we denote $\boldtheta$ as the parameter vector that combines both $\boldw$ and $b$, i.e., 
$\boldtheta: = [\boldw, b]^\top$.
Specifically, we define \[\boldtheta^* = \left[{\boldw^*}^\top, b^*\right]^\top := \left[ \nabla^\top (h \circ r)(\boldx^\star), h(r(\boldx^\star)) - \langle \nabla (h\circ r)(\boldx^\star), \boldx^\star \rangle  \right]^\top.\] 
Let us denote the negative objective function in \eqref{eq.kernelize.hrestimator} as $\ell(\boldtheta)$ 
and consider a constrained optimization problem: 
\begin{align}
\label{eq.obj.constrained}
    \min_\boldtheta \ell(\boldtheta) \text{ subject to:} \|\boldtheta - \boldtheta^*\|^2 \le \min(W, B)^2. 
\end{align}
This convex optimization has a Lagrangian $\ell(\boldtheta) + \lambda ( \|\boldtheta - \boldtheta^*\|^2 - \min(W, B)^2)$, where $\lambda \ge 0$ is the Lagrangian multiplier. 
According to KKT condition, the optimal solution $\boldtheta_0$ of \eqref{eq.obj.constrained} satisfies $\nabla \ell(\boldtheta_0) + 2\lambda (\boldtheta_0 - \boldtheta^*)= \boldzero$. 

We apply mean value theorem to $g(\boldtheta) := \langle \boldtheta_0 - \boldtheta^*,  \nabla \ell(\boldtheta) + 2\lambda (\boldtheta - \boldtheta^*)\rangle$: 
\begin{align*}
    \underbrace{\langle \boldtheta_0 - \boldtheta^*,  \nabla \ell(\boldtheta_0) + 2\lambda  (\boldtheta_0 - \boldtheta^*)}_{ g(\boldtheta_0) = 0, \text{ KKT condition}} \rangle 
    = \underbrace{\langle \boldtheta_0 - \boldtheta^*,  \nabla \ell(\boldtheta^*)\rangle}_{g(\boldtheta^*)} + \langle \underbrace{(\boldtheta_0 - \boldtheta^*)^\top \left[\nabla^2 \ell(\bar{\boldtheta})+ 2\lambda \boldI \right]}_{\nabla g(\bar{\boldtheta})}, (\boldtheta_0 - \boldtheta^*)\rangle,
\end{align*}
where $\bar{\boldtheta}$ is a point between $\boldtheta^*$ and $\boldtheta_0$ in an elementwise fashion. 
Since $\bar{\boldtheta}$ is in a hyper cube with $\boldtheta_0$ and $\boldtheta^*$ as opposite corners, it is in the constrain set of \eqref{eq.obj.constrained}. 
Let us rearrange terms:
\begin{align*}
    -\langle \boldtheta_0 - \boldtheta^*,  \nabla \ell(\boldtheta^*)\rangle = \langle \boldtheta_0 - \boldtheta^*,  \left[\nabla^2 \ell(\bar{\boldtheta}) + 2\lambda \boldI \right] (\boldtheta_0 - \boldtheta^*)\rangle \ge \langle \boldtheta_0 - \boldtheta^*,  \nabla^2 \ell(\bar{\boldtheta}) (\boldtheta_0 - \boldtheta^*)\rangle, 
\end{align*}
For all $\boldtheta$ in the constraint set, $\|\boldw\| = \|\boldw - \boldw^* + \boldw^*\| \le \|\boldw - \boldw^* \| + \|\boldw^*\| \le \|\boldtheta - \boldtheta^* \| + \|\boldw^*\| \le 2W$ and $|b| = |b - b^* + b^*| \le |b - b^* | + |b^*| \le \|\boldtheta - \boldtheta^*\| + |b^*| \le 2B$.
Under our assumption \eqref{eq.eigenvalu.condition}, the lowest eigenvalue of $\nabla^2\ell(\boldtheta)$ for all $\boldtheta$ in the constrain set of \eqref{eq.obj.constrained} is always lower bounded by $\sigma^d \Lambda_\mathrm{min}$. Therefore, 
\begin{align*}
    -\langle \boldtheta_0 - \boldtheta^*,  \nabla \ell(\boldtheta^*)\rangle \ge \sigma^d \cdot \Lambda_\mathrm{min}\| \boldtheta_0 - \boldtheta^*\|^2. 
\end{align*}
Using Cauchy–Schwarz inequality
\begin{align*}
    \| \boldtheta_0 - \boldtheta^* \|  \|\nabla \ell(\boldtheta^*)\| \ge \sigma^d \cdot \Lambda_\mathrm{min}\| \boldtheta_0 - \boldtheta^*\|^2. 
\end{align*}
Assume $\| \boldtheta_0 - \boldtheta^* \|$ is not zero (if it is, our estimator is already consistent).  
\begin{align}
    \label{eq.upperbound.locallinear.kl}
     \| \boldtheta_0 - \boldtheta^*\|  \le & \frac{1}{\sigma^d\Lambda_\mathrm{min}}\|\nabla \ell(\boldtheta^*)\| \notag \\
    \le &\frac{1}{\sigma^d\Lambda_\mathrm{min}}\|\nabla \ell(\boldtheta^*) - \mathbbE \nabla \ell(\boldtheta^*) + \mathbbE \nabla \ell(\boldtheta^*) \| \notag \\
    \le & \frac{1}{\sigma^d\Lambda_\mathrm{min}}\left(\|\nabla \ell(\boldtheta^*)  - \mathbbE \nabla \ell(\boldtheta^*)\| + \|\mathbbE \nabla \ell(\boldtheta^*) \| \right)
\end{align}
Since $\mathrm{tr}\left[{\mathrm{Cov}_p\left[\frac{1}{\sigma^d}{k(\boldx, \boldx^\star)\cdot \tilde{\boldx}}\right]}\right] = O(\frac{1}{{\sigma^d}})$ and $\mathrm{tr}\left[{\mathrm{Cov}_q\left[\frac{1}{\sigma^d}{k(\boldx, \boldx^\star)\cdot \psi_\mathrm{con}(
\langle \boldw^*, \boldx\rangle + b^*)\tilde{\boldx}}\right]}\right] = O(\frac{1}{{\sigma^d}})$ by assumption, due to multi-dimensional version of Chebyshev's inequality,  $\sigma^d\|\frac{1}{\sigma^d}\left(\nabla \ell(\boldtheta^*) - \mathbbE \nabla \ell(\boldx^*) \right) \| = \sigma^d O_p(\frac{1}{\sqrt{n\sigma^d}})$. Therefore, 
\begin{align*}
    \| \boldtheta_0 - \boldtheta^*\| \le \frac{1}{\sigma^d\Lambda_\mathrm{min}}\left( \sigma^d \cdot \frac{K}{\sqrt{n\sigma^d}} + \|\mathbbE \nabla \ell(\boldtheta^*) \| \right),
\end{align*}
with high probability and $K$ is a constant.

Now we proceed to bound $\|\mathbbE \nabla \ell (\boldtheta^*)\|$. 
\begin{lemma}
\label{lem.gradl.bound}
    $\|\mathbbE \nabla \ell(\boldtheta^*)\| \le \mathbbE_q\left[k_\sigma(\boldx, \boldx^\star)\frac{1}{2}  \left \|\boldx- \boldx^\star \right \|^2 
\|\tilde{\boldx}\|\right]\kappa \cdot C_{\psi_\mathrm{con}''}$  
\end{lemma}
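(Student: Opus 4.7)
The plan is to compute $\mathbbE[\nabla_\boldtheta \ell(\boldtheta^*)]$ explicitly, collapse the two involved expectations (one under $p$, one under $q$) into a single $q$-expectation using the mirror-divergence identity $\psi_\mathrm{con}'\circ h=\mathrm{id}$, and then control the resulting integrand by a Taylor expansion of $h\circ r$ around $\boldx^\star$ together with the boundedness of $\psi_\mathrm{con}''$ from Assumption \ref{ass.psi.twice}.

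First I would compute the gradient. Since $\nabla_\boldtheta d_{\boldw,b}(\boldx)=\tilde{\boldx}$, the expected gradient of $\ell$ (the negated objective of \eqref{eq.kernelize.hrestimator}) at $\boldtheta^*$ equals $-\mathbbE_p[k_\sigma^\star \tilde{\boldx}]+\mathbbE_q[k_\sigma^\star\,\psi_\mathrm{con}'(\langle\boldw^*,\boldx\rangle+b^*)\tilde{\boldx}]$. The key observation is that $\psi_\mathrm{con}'=(\psi')^{-1}$ by the convex-conjugate identity, and $\psi'(r)=h(r)$ by Definition \ref{def.dual.div} combined with Theorem \ref{thm.gf}, so $\psi_\mathrm{con}'(h(r))=r$. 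Using the change of measure $\mathbbE_p[\cdot]=\mathbbE_q[r(\boldx)\,\cdot]$, I would rewrite $\mathbbE_p[k_\sigma^\star\tilde{\boldx}]$ as $\mathbbE_q[k_\sigma^\star\,\psi_\mathrm{con}'(h(r(\boldx)))\tilde{\boldx}]$, so that the expected gradient collapses to the single $q$-expectation of $k_\sigma^\star\,\bigl(\psi_\mathrm{con}'(\langle\boldw^*,\boldx\rangle+b^*)-\psi_\mathrm{con}'(h(r(\boldx)))\bigr)\tilde{\boldx}$. Without this identity, the two unrelated $p$-- and $q$--expectations cannot be compared pointwise.

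The remainder is a two-step local expansion. Applying the mean value theorem to $\psi_\mathrm{con}'$ at the arguments $\langle\boldw^*,\boldx\rangle+b^*$ and $h(r(\boldx))$ produces an intermediate point of the convex-combination form appearing in Assumption \ref{ass.psi.twice}, at which $\psi_\mathrm{con}''$ is bounded by $C_{\psi_\mathrm{con}''}$. By the definitions of $\boldw^*$ and $b^*$, the affine quantity $\langle\boldw^*,\boldx\rangle+b^*$ is exactly the first-order Taylor polynomial of $h\circ r$ at $\boldx^\star$, so Taylor's theorem with the Lagrange remainder together with Assumption \ref{ass.curvature} gives $|h(r(\boldx))-\langle\boldw^*,\boldx\rangle-b^*|\le \tfrac{1}{2}\kappa\|\boldx-\boldx^\star\|^2$. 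Pushing the triangle inequality inside the $q$-expectation and bounding the vector factor by $\|\tilde{\boldx}\|$ then yields the advertised bound $\kappa\, C_{\psi_\mathrm{con}''}\,\mathbbE_q[k_\sigma^\star\,\tfrac{1}{2}\|\boldx-\boldx^\star\|^2\,\|\tilde{\boldx}\|]$. The main obstacle is spotting that $\psi_\mathrm{con}'(h(r))=r$ and using it to recast the gradient bias as a pointwise local approximation error of $h\circ r$ at $\boldx^\star$; once this reformulation is in place, the rest is one invocation of the mean value theorem followed by one Taylor expansion with remainder.
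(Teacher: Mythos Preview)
Your proposal is correct and follows essentially the same route as the paper: compute $\mathbbE\nabla\ell(\boldtheta^*)$, use the identity $\psi_\mathrm{con}'(h(r))=r$ to cancel the $p$-expectation against a piece of the $q$-expectation, then combine the mean value theorem on $\psi_\mathrm{con}'$ with a second-order Taylor expansion of $h\circ r$ at $\boldx^\star$ and Assumptions \ref{ass.curvature} and \ref{ass.psi.twice}. The only cosmetic difference is ordering: the paper first Taylor-expands $\langle\boldw^*,\boldx\rangle+b^*$ around $h(r(\boldx))$ and then applies the mean value theorem to $\psi_\mathrm{con}'$, whereas you first collapse into a single $q$-expectation via the mirror identity and then apply MVT followed by Taylor; the ingredients and the final bound are identical.
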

\begin{proof}
The expression of $\mathbbE\nabla \ell(\boldtheta^*)$ is:
    \begin{align}
    \label{eq.nabla_ell_kliep}
        \mathbbE \nabla \ell (\boldtheta^*) := - \mathbbE_p\left[{k_\sigma(\boldx, \boldx^\star)} \cdot \tilde{\boldx}\right] + \mathbbE_q\left[k_\sigma(\boldx, \boldx^\star) \psi'_\mathrm{con} (\langle \boldw^*, \boldx \rangle + b^*) \cdot  \tilde{\boldx} \right]. 
    \end{align}
Due to Taylor's theorem, $\langle \boldw^*, \boldx \rangle + b^* = h(r(\boldx))  - \frac{1}{2} \left(\boldx- \boldx^\star\right)^\top \nabla^2 (h \circ r)(\bar{\boldx}) \left(\boldx- \boldx^\star\right)$  
where $\bar{\boldx}$ is a point in between $\boldx$ and $\boldx^\star$ in an elementwise fashion. 
Thus, applying the mean value theorem on $\psi'_\mathrm{con}$,
\begin{align*}
    \psi_\mathrm{con}'(\langle \boldw^*, \boldx \rangle + b^*) &= 
\psi_\mathrm{con}'\left[h(r(\boldx))  - \frac{1}{2} \left(\boldx- \boldx^\star\right)^\top \nabla^2 (h \circ r)(\bar{\boldx}) \left(\boldx- \boldx^\star\right)\right] \\
&= \psi_\mathrm{con}'\left[h(r(\boldx)) 
\right] - \frac{1}{2} \left(\boldx- \boldx^\star\right)^\top \nabla^2 (h \circ r)(\bar{\boldx}) \left(\boldx- \boldx^\star\right) \psi_\mathrm{con}''(y),
\end{align*}
where $y$ is a scalar in between $h(r(\boldx))$ and $h(r(\boldx)) - \frac{1}{2} \left(\boldx- \boldx^\star\right)^\top \nabla^2 (h \circ r)(\bar{\boldx}) \left(\boldx- \boldx^\star\right)$ or equivalently, in between $h(r(\boldx))$ and $\langle \boldw^*, \boldx \rangle + b^*$. 

Theorem \ref{thm.gf} states that $h(r) = r\phi' + \phi$,
then, by the definition of the mirror divergence, $\psi' = h(r)$. Moreover, due to the maximizing argument, $\psi'_\mathrm{con}$ is the input argument of $\psi$ (i.e., $r$)
and $\psi'$ is the input argument of $\psi_\mathrm{con}$. 
Thus, $\psi_\mathrm{con}'(h(r)) = \psi'_\mathrm{con} (\psi'(r)) = r$.
Let us write 
\begin{align*}
\mathbbE \nabla \ell(\boldtheta^*) &= -\mathbbE_p[k_\sigma(\boldx, \boldx^\star) \tilde{\boldx}] + \mathbbE_q[k_\sigma(\boldx, \boldx^\star) \underbrace{\psi_\mathrm{con}'(h(r(\boldx)))}_{r}\tilde{\boldx}] 
\\ &~~ - \mathbbE_q\left[k_\sigma(\boldx, \boldx^\star)\frac{1}{2} \left(\boldx- \boldx^\star\right)^\top \nabla^2 (h \circ r)(\bar{\boldx}) \left(\boldx- \boldx^\star\right)\psi_\mathrm{con}''(y)\tilde{\boldx}\right]\\
&= - \mathbbE_q\left[k_\sigma(\boldx, \boldx^\star)\frac{1}{2} \left(\boldx- \boldx^\star\right)^\top \nabla^2 (h \circ r)(\bar{\boldx}) \left(\boldx- \boldx^\star\right)\psi_\mathrm{con}''(y)\tilde{\boldx}\right], 
\end{align*}
We can derive a bound for $\|\mathbbE \nabla \ell(\boldtheta^*)\|$:
\begin{align*}
\|\mathbbE \nabla \ell(\boldtheta^*)\| &\le \mathbbE_q\left[k_\sigma(\boldx, \boldx^\star)\cdot \frac{1}{2} \left |\left(\boldx- \boldx^\star\right)^\top \nabla^2 (h \circ r)(\bar{\boldx}) \left(\boldx- \boldx^\star\right)\right| \cdot |\psi_\mathrm{con}''(y)| \cdot \|\tilde{\boldx}\|\right] \\
&\le \mathbbE_q\left[k_\sigma(\boldx, \boldx^\star)\frac{1}{2}  \left \|\boldx- \boldx^\star \right \|^2 
\left \|\nabla^2 (h \circ r)(\bar{\boldx}) \right\|
\|\tilde{\boldx}\|\right]C_{\psi_\mathrm{con}''}\\
&\le \mathbbE_q\left[k_\sigma(\boldx, \boldx^\star)\frac{1}{2}  \left \|\boldx- \boldx^\star \right \|^2 
\|\tilde{\boldx}\|\right]\kappa \cdot C_{\psi_\mathrm{con}''}\\
& \le  \sigma^{d+2} \cdot \kappa \cdot C_{\psi_\mathrm{con}''} \frac{1}{2} \int q(\sigma\boldy + \boldx^\star) \cdot k(\boldy) \cdot \left \|\boldy \right\|^2 \cdot \left \|[\sigma \boldy + {\boldx^\star}, 1]\right \|  \dy 
 \le  \sigma^{d+2} \cdot \kappa \cdot C_{\psi_\mathrm{con}''} C_k
\end{align*}
\end{proof}
Finally, due to Lemma \ref{lem.gradl.bound} and Assumption \ref{ass.nw2}, 
we can see that with high probability 
\[\|\boldtheta^* - \boldtheta_0\| \le  \frac{ \frac{K\sigma^d}{\sqrt{n\sigma^d}} +  \kappa \cdot \sigma^{d+2} \cdot C_k \cdot C_{\psi_\mathrm{con}''}}{\sigma^d \Lambda_\mathrm{min}} \le \frac{\frac{K}{\sqrt{n\sigma^d}} + \kappa \cdot \sigma^{2} \cdot C_k \cdot C_{\psi_\mathrm{con}''}}{\Lambda_\mathrm{min}}. \]

Since $n\sigma^d \to \infty, \sigma \to 0$,  $
\|\boldtheta^* - \boldtheta_0\|\to 0$ with high probability. There always exists $\sigma_0$ and $N$, such that for all $\sigma < \sigma_0$ and $n > N$, $\min (W,B) >  \frac{\frac{K}{\sqrt{n\sigma^d}} + \kappa \cdot C_k \cdot C_{\psi_\mathrm{con}''}\cdot \sigma^2}{\Lambda_\mathrm{min}}$. When it happens,  $\boldtheta_0$ must be the interior of the constrain set of \eqref{eq.obj.constrained}. i.e., the constraints in \eqref{eq.obj.constrained} are not active. It implies $\boldtheta_0$ must be the stationary point of $\ell(\boldtheta)$ as long as $\sigma$ is sufficiently small and $n$ is sufficiently large. 

\end{proof}

\section{Proof of Corollary \ref{co.gradratio}}
\label{proof.col.ulsif}
\begin{proof}
Since \eqref{eq.obj.thetax0.ls} has a unconstrained quadratic objective, its maximizers are stationary points.

We can see that Assumption \ref{ass.curvature} holds. 
To apply Theorem \ref{them.unbiasedness}, we still need to show that Assumption \ref{ass.psi.twice} holds and 
$W$ and $B$ exist.
In this case, $\psi_\mathrm{con}(d) = d^2/2 + d$, 
so $\psi_\mathrm{con}'' = 1$. Thus
Assumption \ref{ass.psi.twice} holds automatically for every $C_{\psi_\mathrm{con}''} 
\ge 1$.
Additionally,
\[
\nabla_{[\boldw, b]}^2 \psi_\mathrm{con}(\langle \boldw, \boldx \rangle + b) = \hatE_q[k_\sigma(\boldx, \boldx^\star)\psi''_\mathrm{con}(\langle \boldw, \boldx\rangle + b)\tilde{\boldx}\tilde{\boldx}^\top] = \hatE_q[k_\sigma(\boldx, \boldx^\star)\tilde{\boldx}\tilde{\boldx}^\top].
\] 
Therefore, \eqref{eq.cond.hessian}  implies the minimum eigenvalue assumption \eqref{eq.eigenvalu.condition} holds for every $W >0, B>0$. 
Thus, we can choose any $W > 0$ and $B>0$ that satisfies \eqref{eq.upperbounding.condition}.
Noticing that $h(r(\boldx)) = r(\boldx)$, applying Theorem \ref{them.unbiasedness} gives the desired result. 
\end{proof}

\section{Proof of Corollary \ref{co.gradlogratio}}
\label{proof.co.logratio}
\begin{proof}
    Assumption \ref{ass.curvature}, \ref{ass.nw2} and \eqref{eq.upperbounding.condition} are already satisfied. 
    Let us verify the eigenvalue condition \eqref{eq.eigenvalu.condition}.  In this case, $\psi_\mathrm{con}''(d) = \exp(d-1)$. Thus $\exp(\langle \boldw, \boldx\rangle + b -1) \le \exp(2WC_\mathcal{X}+2B-1) < \infty$ for $\|\boldw\|\le 2W, |b| \le 2B$. Moreover, because $\tilde{\boldx}\tilde{\boldx}^\top$ is positive semi-definite, due to \eqref{eq.cond.eigen}, 
    \begin{align*}
\lambda_\mathrm{min}\left[\hatE_q[k_\sigma(\boldx, \boldx^\star ) \exp(\langle\boldw, \boldx\rangle + b -1 )\tilde{\boldx}\tilde{\boldx}^\top]\right] \ge &\sigma^2 \cdot \lambda_\mathrm{min} \left[\hatE_q[k_\sigma(\boldx, \boldx^\star )\tilde{\boldx} \tilde{\boldx}^\top]\right]  \cdot \exp(-2WC_\mathcal{X} - 2B-1) \\
>&\sigma^2 \cdot \Lambda_\leftarrow  \cdot \exp(-2WC_\mathcal{X} - 2B -1) >0,
    \end{align*}
    for all $\boldw, b$ that $ \|\boldw\|\le 2W, |b| \le 2B$.
So \eqref{eq.eigenvalu.condition} holds. 
Finally, let us verify Assumption \ref{ass.psi.twice}. 
Since $\psi_{\mathrm{con}''}$ is a strictly monotone increasing function, $\sup_{a \in [0,1]} \psi_{\mathrm{con}''}(ax_0 + (1-a)y_0)$ is obtained either at $x_0$ or $y_0$. 
We only need to verify that $\psi_{\mathrm{con}''}(h(r(\boldx)))$ and $\psi_{\mathrm{con}''}(\langle \boldw^*, \boldx\rangle + b^*)$ are both bounded for all $\boldx$.
Both $h(r(\boldx))$ and $\langle \boldw^*, \boldx\rangle + b^*$ can be bounded using our assumptions. Thus, for a \[C_{\psi_\mathrm{con}''} = \exp(W \cdot C_\mathcal{X} + B -1) \vee  \exp(C_{\log r} - 1) \]  Assumption \ref{ass.psi.twice} holds. 
     Applying Theorem \ref{them.unbiasedness} completes the proof. 
\end{proof}

\section{$\|\nabla^2 r(\boldx)\|$ for Different $p$ and $q$}
\begin{figure*}[t]
    \centering
    \subfigure[$p,q$ with different means. The further $p,q$ are apart, the larger the $\sup_{\boldx}\|\nabla^2 r(\boldx)\|$ is.]{
        \includegraphics[width = .9\textwidth]{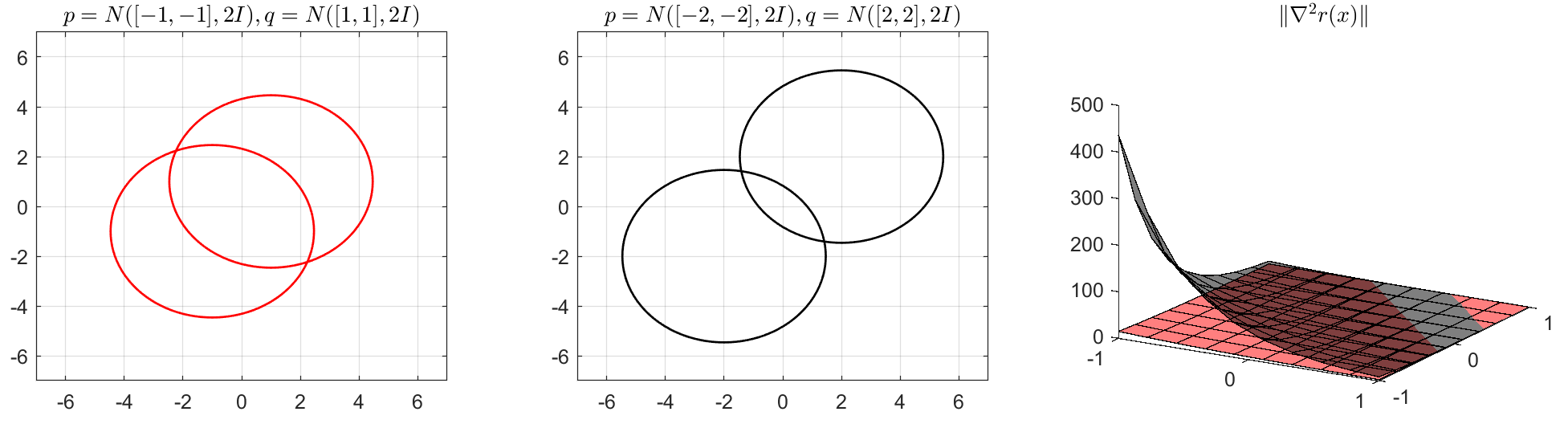}
    }
    \subfigure[$p,q$ with different variances. The less overlapped $p,q$ are, the larger the $\sup_{\boldx} \|\nabla^2 r(\boldx)\|$ is.]{
        \includegraphics[width = .9\textwidth]{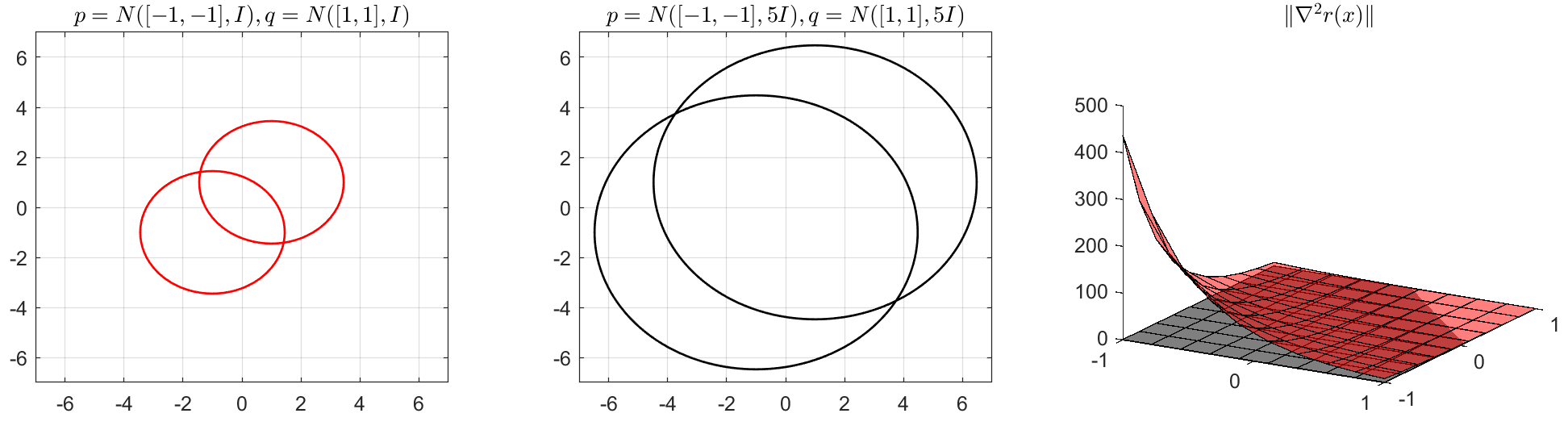}
    }
    \caption{Visualizing $\nabla^2 (h\circ r)(\boldx)$ for $\mathrm{D}_{\phi} = \mathrm{KL}[p,q]$ with two different settings of $p$ and $q$. }
    \label{fig:enter-label.appendix}
\end{figure*}
See Figure \ref{fig:enter-label.appendix}.

\section{Finite-sample Objectives}
\subsection{Practical Implementation}

In our experiments, we observe that \eqref{eq.kernelize.hrestimator} can be efficiently minimized by using gradient descent with adaptive learning rate schemes, e.g., Adam \citep{kingma2014adam}. 
One computational advantage of the local linear model is that the computation for each $\boldx^\star$ is independent from the others. This property allows us to parallelize the optimization. 
Even using a single CPU/GPU, we can easily write highly vectorized code to 
compute the gradient of \eqref{eq.kernelize.hrestimator} with respect to $\boldw$ and $b$ for a large particle set $·\{\boldx_i^\star\}_{i=1}^n$.

Suppose 
$\boldX_p \in \mathbbR^{n_p \times d}$ and $\boldX_q \in \mathbbR^{n_q \times d}$ are the matrices whose rows are $\boldx_p^{(i)}$ and $\boldx_q^{(i)}$ respectively.
$\boldK_p \in \mathbbR^{n \times n_p}$, $\boldK_q \in \mathbbR^{n \times n_q}$ are the kernel matrices between $\{\boldx^\star\}$ and $\boldX_p$, $\{\boldx^\star \}$ and $\boldX_q$ respectively.
$\boldW \in \mathbbR^{n \times d}$ and $\boldb \in \mathbbR^n$ are the parameters whose rows are $\boldw(\boldx^\star)$ and $b(\boldx^\star)$ respectively. Then the gradient of \eqref{eq.kernelize.hrestimator} with respect to $\boldW$, $\boldb$ can be expressed as 
\begin{align*}
 \boldK_p \boldX_p /n_p 
- \boldK_q \odot \psi_\text{con}'( \boldW\boldX_q^\top + \boldb)  \boldX_q  /n_q, 
\\
 \boldK_p \boldone_{n_p} /n_p - \boldK_q \odot \psi_\text{con}'( \boldW\boldX_q^\top + \boldb) \boldone_{n_q} /n_q, 
\end{align*}
where $\boldone_{n_p}$ and $\boldone_{n_q}$ are the vectors of ones with length $n_p$ and $n_q$ respectively. 
$\psi_\mathrm{con}'$ is evaluated element-wise.
$\odot$ is the element-wise product and the vector $\boldb$ is broadcast to a matrix with $n_q$ columns.

\section{Experiment Details in Section \ref{sec.exp}}
\label{sec.expsettings}

\subsection{Model Selection}
\label{sec.mosel}
Let
$\mathcal{D}_p:= \left\{\boldx_p^{(i)}\right \}_{i=1}^{n_p}, \mathcal{D}_q := \left \{\boldx_q^{(i)}\right\}_{i=1}^{n_q}$ be training sets from $p$ and $q$ respectively and 
$\tilde{\mathcal{D}}_{p} = \left\{\tilde{\boldx}_{p}^{(i)}\right\}_{i = 1}^{\tilde{n}_{p}}$ and $\tilde{\mathcal{D}}_{q} = \left\{\tilde{\boldx}_{q}^{(i)}\right\}_{i = 1}^{\tilde{n}_{q}}$ be testing sets. 
We can fit a local linear model \textit{at each testing point using the training sets}, i.e., 
\begin{align*}
    \left(\hat{\boldw}_\sigma(\tilde{\boldx}), \hat{b}_\sigma(\tilde{\boldx})\right) &:= \argmax_{\boldw \in \mathbbR^d, b \in \mathbbR} \underbrace{\ell\left(\boldw, b; \tilde{\boldx}, \mathcal{D}_{p},\mathcal{D}_{q}\right)}_{\text{tranining objective}}.    
\end{align*}
The dependency on $\sigma$ comes from the smoothing kernel in the training objective. 
We can tune $\sigma$ by evaluating the variational lower bound \eqref{eq.fdiv.variational} approximated using testing samples: 
\begin{align}
    \label{eq.testingloss}
    \tilde{\ell}\left(\sigma; \tilde{\mathcal{D}}_{p}, \tilde{\mathcal{D}}_{q}\right)
    &:= \underbrace{  \tilde{\mathbbE}_p
    \left[
    \hat{d}_\sigma\left(\boldx\right)
    \right] -  \tilde{\mathbbE}_q
    \left[\psi_\text{con}(\hat{d}_\sigma\left(\boldx\right))\right]}_{\text{testing criterion}}, 
\end{align}
where $\tilde{\mathbbE}_p [\hat{d}(\boldx)] := \frac{1}{\tilde{n}_p} \sum_{i=1}^{\tilde{n}_p} \hat{d}(\tilde{\boldx}_p^{(i)}), 
$ i.e., the sample average over the testing points. 
$\hat{d}_\sigma(\tilde{\boldx}) := \langle \hat{\boldw}_\sigma\left(\tilde{\boldx}\right), \tilde{\boldx}\rangle + \hat{b}_\sigma\left(\tilde{\boldx}\right)$, is the interpolation of $d(\tilde{\boldx})$ using training samples. 
The best choice of $\sigma$ should maximize the above testing criterion. 

In our experiments, we construct training and testing sets using cross validation and choose a list of candidate $\sigma$ for the model selection. This procedure is parallel to selecting $k$ in $k$-nearest neighbors to minimize the testing error. In our case, \eqref{eq.testingloss} is the ``negative testing error''.

\subsection{Missing Data Imputation}
\label{sec.missing.exp}
Before running the experiments, we first pre-process data in the following way:
\begin{enumerate}
    \item Suppose $\boldX_{\text{true}}$ is the original data matrix, i.e. without missing values. We introduce missingness to $\boldX_{\text{true}}$, and call the matrix with missing values $\tilde{\boldX}$, following MCAR paradigm. Denote the corresponding mask matrix as $\boldM$, where $m_{j}^{(i)}=0$ if ${\tilde{x}}^{(i)}_{j}$ is missing, and $m_{j}^{(i)}=1$ otherwise.
    \item Calculate column-wise mean $\bar{\tilde{\boldx}}$ and standard deviation $\tilde{\bolds}$ (excluding missing values)  of 
$\tilde{\boldX}$.
    \item Standardize $\tilde{\boldX}$ by taking $\tilde{\boldX} = \frac{\tilde{\boldX} - \bar{\tilde{\boldx}}}{\tilde{\bolds}}$, where the vectors $\bar{\tilde{\boldx}}$ and $\tilde{\bolds}$ are broadcasted to the same dimensions as the matrix $\tilde{\boldX}$. Note that the division here is element-wise.
\end{enumerate}

Denote $\boldX^{t}$ as the imputed data of $\tilde{\boldX}$ at iteration $i$, where $\boldX^0=\tilde{\boldX} \odot \boldM + \mathbf{Z} \odot (1-\boldM)$, and $\mathbf{Z} \sim \mathcal{N}(\mathbf{0},\mathrm{diag}(\mathbf{1}))$.

We performed two experiments on both toy data (``S''-shape) and real world data (UCI Breast Cancer \footnote{Available at https://archive.ics.uci.edu/ml/machine-learning-databases/breast-cancer-wisconsin/wdbc.data.} data).

Let $N_{\text{WGF}}$ be the number of iterations WGF is performed. In each iteration, let $N_{\text{GradEst}}$ be the number gradient descent steps for gradient estimation. In the missing data experiments, we set the hyperparameters to be:
\begin{itemize}
    \item ``S''-shape data: $T_{\text{WGF}} = 100$, $T_{\text{GradEst}} = 2000$, $\sigma$ is chosen by model selection described in Section \ref{sec.mosel}.
    \item UCI Breast Cancer data: $T_{\text{WGF}} = 1000$, $T_{\text{GradEst}} = 100$, $\sigma = \text{median}  (\sqrt{\frac{\text{pairwise distance of} \boldX}{2}})$ .
\end{itemize}

\begin{figure}[t]
\centering
    \includegraphics[width=.4\textwidth]{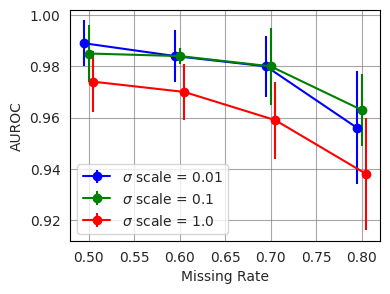}
    \caption{AUROC of a linear SVM classifier on the imputed Breast Cancer dataset, with various scales of $\sigma$ from 0.01 to 1. }
    \label{fig.imp.sigma}
\end{figure}

Across experiments, we observe that our method is robust against the choice of the bandwidth $\sigma$. We demonstrate this via a new experiment that tests the imputation performance with different selections of $\sigma$. Figure \ref{fig.imp.sigma} demonstrates this experiment on the UCI Breast Cancer data. We can see that the imputation performance is still comparable with the original results (and baseline methods) when the scale of $\sigma$ varies from 0.01 to 1.

\subsection{Wasserstein Gradient Flow}
In this experiment, we first expand MNIST digits into $32 \times 32$ pictures then adds a small random noise $\epsilon \sim \mathcal{N}(\boldzero, 0.001^2 \boldI)$ to each picture so that computing the sample mean and covariance will not cause numerical issues. For both forward and backward KL WGF, we use a kernel bandwidth that equals to $1/5$ of the pairwise distances in the particle dataset, as it is too computationally expensive to perform cross validation at each iteration. After each update, we clip pixel values so that they are in between $[0, 1]$. It is done using pyTorch \verb|torch.clamp| function.

To reduce computational cost, at each iteration, we randomly select 4000 samples from the original dataset and 4000 particles from the particle set. We use these samples to estimate 
the WGF updates.

\section{Discussion:Kernel Density Gradient Estimation}
\label{sec.kde.app}

The Kernel Density Estimator (KDE) of $p$ is \begin{align*}
    \hat{p}(\boldy) := \frac{1}{n_p} \sum_{i=1}^{n_p} k(\boldx_p^{(i)}, \boldy) /  Z, 
\end{align*}
where $Z$ is a normalization constant to ensure that $\int \hat{p}(\boldx) \dx = 1$. Thus, 
\[
    \nabla_\boldy \log \hat{p} (\boldy):= \frac{1}{n_p} \sum_{i=1}^{n_p} \nabla_\boldy k(\boldx_p^{(i)}, \boldy) /  \frac{1}{n_p} \sum_{i=1}^{n_p} k(\boldx_p^{(i)}, \boldy). 
\]
The normalizing constant $Z$ is cancelled. 

\section{Discussion: Why Score Matching does not Work on Log Ratio Gradient Estimation}
\label{sec.why.sm.doesnt.work}

For Score Matching (SM), the estimator of $\hat{p} := \argmin_f \int p \|\nabla \log p - \nabla \log f\|^2 \dx$, where the objective function is commonly refered to as \emph{Fisher Divergence}. To use SM in practice, the objective function is further broken down to 
\begin{align}
    \label{eq.sm.obj}
    \int p \|\nabla \log p - \nabla \log f\|^2 \dx =  \int p \|\nabla \log f\|^2 \dx + 2 \sum_{i=1}^d \int p \partial_i^2 \log f \dx + C, 
\end{align}
where we used the dimension-wise integration by parts and $C$ is a constant. 

Since our target is to estimate $\nabla \log p$, we can directly model $\nabla \log p$ as $\boldg: \mathbbR^d \to \mathbbR^d$. The objective becomes 
\begin{align*}
    \int p \|\boldg\|^2 \dx + 2 \sum_{i=1}^d \int p \partial_i g_i \dx + C. 
\end{align*}

SM can be used to estimate $\nabla \log p$. One might assume that SM can also be used for estimating $\nabla \log r$, where $r := \frac{p}{q}$. Let us replace $\nabla \log p$ with $\nabla \log r$ in \eqref{eq.sm.obj}, 
\begin{align}
    \label{eq.sm.obj.ratio}
    \int p \|\nabla \log r - \nabla \log f\|^2 \dx &=  \int p \|\nabla \log f\|^2 \dx - 2 \sum_{i=1}^d \int p \partial_i \log r \partial_i \log f \dx  + C \notag \\
    &= \int p \|\nabla \log f\|^2 \dx + 2 \sum_{i=1}^d \int q \partial_i  r \partial_i \log f \dx  + C \notag \\
    &= \int p \|\nabla \log f\|^2 \dx + 2 \sum_{i=1}^d  \int r \cdot \partial_i (q \partial_i  \log f) \dx  + C, \\
    &= \int p \|\nabla \log f\|^2 \dx + 2 \sum_{i=1}^d  \int r \cdot q \cdot \partial^2_i  \log f \dx + 2 \sum_{i=1}^d  \int r \cdot \partial_i q \cdot \partial_i  \log f \dx  + C \label{eq.sm.obj.ratio.2}
\end{align}
and to get \eqref{eq.sm.obj.ratio} we applied integration by parts, where we assumed $q\cdot r \cdot \partial_i \log f \to 0$ as $x_i \to \infty$. In \eqref{eq.sm.obj.ratio.2}, the third term is not tractable due to the lack of information about $r$ and $q$. Changing the objective to $\int p \|\nabla \log r - \nabla \log f\|^2 \dx$ would also yield an intractable objective for a similar reason.

\section{Discussion: Gradient Flow Estimation in Feature Space}
\label{sec.wgf.fea}

One of the issues of local estimation is the \emph{curse of dimensionality}: Local approximation does not work well in high dimensional spaces. 
However, since the $f$-divergence gradient flow is always associated with the density ratio function, we can utilize special structures in  density ratio functions to estimate $\nabla (h\circ r)(\boldx^\star)$ more effectively.

\subsection{Density Ratio Preserving Map}

Let \(\boldsymbol{s}(\boldsymbol{x})\) be a measurable function, where \(\boldsymbol{s}: \mathbb{R}^d \rightarrow \mathbb{R}^m, m \le d\).
Consider two random variables, \(X_p\) and \(X_q\), each associated with probability density functions \(p\) and \(q\), respectively.
Define \(p^\circ\) and \(q^\circ\) as the probability density functions of the random variables \(S_p := \boldsymbol{s}(X_p)\) and \(S_q := \boldsymbol{s}(X_q)\).

\begin{definition}
\label{def.drpreservingmap}
    $\bolds(\boldx)$ is a density ratio preserving map if and only if it
satisfies the following equality \[r(\boldx) = r^\circ(\bolds(\boldx)),   \text{where } r^\circ(\bolds(\boldx)) := \frac{p^\circ(\bolds(\boldx))}{q^\circ(\bolds(\boldx))}, ~~~ \forall \boldx \in \mathcal{X}.\] 
\end{definition}
We can leverage the density ratio preserving map to reduce the dimensionality of gradient flow estimation. 
Suppose $\bolds(\boldx)$ is a known density ratio preserving map. 
Define $\boldz := \bolds(\boldx)$ and $\boldz^\star := \bolds(\boldx^\star)$.
We can see that
\begin{align}
\label{eq.dimred}
    \underbrace{\nabla (h\circ r)(\boldx^\star)}_{\mathbbR^d \mapsto \mathbbR^d} = \nabla 
    (h \circ r^\circ)
    (\bolds(\boldx^\star)) = \underbrace{\nabla^\top \bolds(\boldx^\star)}_{\mathbbR^d \mapsto \mathbbR^{d \times m},\text{ known}} \underbrace{\nabla (h \circ r^\circ) (\boldz^\star)}_{\mathbbR^d \mapsto \mathbbR^{\color{red}m}}.
\end{align}
If we can evaluate $\nabla \bolds(\boldx^\star)$, we only need to estimate  
an $m$-dimensional gradient  $\nabla (h\circ r^\circ)(\boldz^\star)$,
which is potentially easier than estimating the original $d$-dimensional gradient $\nabla (h\circ r)(\boldx^\star)$ using a local linear model. 

While Definition \ref{def.drpreservingmap} might suggest that 
$\bolds$ is a very specific function, the requirement for 
$\bolds$ to preserve the density ratio is quite straightforward. Specifically, 
$\bolds$
must be \textit{sufficient in expressing the density ratio function.} This requirement is formalized in the following proposition:

\begin{proposition}\label{lem.dim.red}
Consider a function \(\boldsymbol{s}: \mathbb{R}^d \to \mathbb{R}^m\). If there exists a function \(g: \mathbb{R}^m \to \mathbb{R}_+\) such that  \(r(\boldsymbol{x}) =  g(\boldsymbol{s}(\boldsymbol{x})), \forall \boldsymbol{x} \in \mathcal{X}\) holds, then \(\boldsymbol{s}\) is a density ratio preserving map. Additionally, it follows that \( g \circ \bolds = r = r^\circ \circ \bolds  \implies g = r^\circ.\)
\end{proposition}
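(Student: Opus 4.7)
The plan is to reduce both claims to a single change-of-variables computation: push $q$ forward through $\bolds$ and check what density $p$ pushes to. Since $p(\boldx) = r(\boldx) q(\boldx) = g(\bolds(\boldx)) q(\boldx)$ by hypothesis, and $g(\bolds(\boldx))$ depends on $\boldx$ only through $\bolds(\boldx)$, I expect $g$ to turn out to be exactly the Radon--Nikodym derivative of $p^\circ$ with respect to $q^\circ$, i.e., $r^\circ$.

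First I would pick an arbitrary measurable set $A \subseteq \mathbb{R}^m$ and compute the probability $P^\circ(A) = \Pr(S_p \in A)$ two ways. On one hand, by definition of the pushforward,
\begin{align*}
P^\circ(A) &= \int \mathbf{1}[\bolds(\boldx) \in A]\, p(\boldx)\, \dx = \int \mathbf{1}[\bolds(\boldx) \in A]\, g(\bolds(\boldx))\, q(\boldx)\, \dx,
\end{align*}
where the second equality uses the assumption $r(\boldx) = g(\bolds(\boldx))$ and $p = r \cdot q$. On the other hand, the integrand on the right depends on $\boldx$ only through $\bolds(\boldx)$, so I can rewrite it as an expectation under $S_q = \bolds(X_q)$:
\begin{align*}
\int \mathbf{1}[\bolds(\boldx) \in A]\, g(\bolds(\boldx))\, q(\boldx)\, \dx = \mathbb{E}_{S_q}\bigl[\mathbf{1}[S_q \in A]\, g(S_q)\bigr] = \int_A g(\boldz)\, q^\circ(\boldz)\, \mathrm{d}\boldz.
\end{align*}

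Combining these, $\int_A p^\circ(\boldz)\, \mathrm{d}\boldz = \int_A g(\boldz)\, q^\circ(\boldz)\, \mathrm{d}\boldz$ for every measurable $A$, so $p^\circ(\boldz) = g(\boldz) q^\circ(\boldz)$ almost everywhere. Dividing (on the support of $q^\circ$) gives $g(\boldz) = r^\circ(\boldz)$, which is the second conclusion of the proposition. Substituting back into the hypothesis yields $r(\boldx) = g(\bolds(\boldx)) = r^\circ(\bolds(\boldx))$ for every $\boldx \in \mathcal{X}$, matching Definition \ref{def.drpreservingmap} and establishing that $\bolds$ is a density ratio preserving map.

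There is no real obstacle here; the only mild subtlety is justifying the ``only through $\bolds(\boldx)$'' step cleanly, which is really just the abstract change-of-variables / pushforward formula $\mathbb{E}_{X_q}[\varphi(\bolds(X_q))] = \mathbb{E}_{S_q}[\varphi(S_q)]$ applied to $\varphi(\boldz) = \mathbf{1}[\boldz \in A]\, g(\boldz)$. The identification $g = r^\circ$ is also only ``almost everywhere with respect to $q^\circ$,'' but since we only ever use $g$ evaluated at points $\bolds(\boldx)$ with $\boldx$ in the support of $q$, this is sufficient for all subsequent use in \eqref{eq.dimred}.
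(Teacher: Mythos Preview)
Your argument is correct and in fact more direct than the paper's. You verify $p^\circ = g\cdot q^\circ$ by a single pushforward computation: write $p(\boldx)=g(\bolds(\boldx))q(\boldx)$, integrate over $\bolds^{-1}(A)$, and invoke the change-of-variables identity $\mathbbE_{X_q}[\varphi(\bolds(X_q))]=\mathbbE_{S_q}[\varphi(S_q)]$. The paper instead recasts both statements as KL-minimization problems: it observes that $r(\boldx)=g(\bolds(\boldx))$ is equivalent to $g$ minimizing $\mathrm{KL}[p,\,q\cdot(g\circ\bolds)]$ subject to normalization, that $r^\circ$ minimizes $\mathrm{KL}[p^\circ,\,q^\circ g]$, and then argues via the law of the unconscious statistician that these two constrained optimizations are literally the same problem, hence share their minimizer. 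Both proofs hinge on the same LOTUS step; yours reaches the conclusion without the variational detour, while the paper's framing echoes the variational-lower-bound machinery used elsewhere in the work. Your remark about ``almost everywhere with respect to $q^\circ$'' is a fair caveat that applies equally to the paper's version.
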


The proof can be found in Section~\ref{sec.density.ratio.pres.proof}.

Proposition \ref{lem.dim.red} implies that we can identify the density ratio preserving map $\bolds$ by simply learning the ratio function $r$ and using the trained feature transform function as $\bolds$.
For instance, in the context of a neural network used to estimate \(r\), \(\boldsymbol{s}\) could correspond to the functions represented by the penultimate layer of the network. 
After identifying $\bolds$,
we can simply translate a  high dimensional gradient flow estimation into a low dimensional problem according to \eqref{eq.dimred}.

In practice, we find this method works well. However, this approach still requires us to estimate a high dimensional density ratio function $r$ to obtain a feature map $\bolds$.

In the next section, we propose an algorithm of learning $\bolds$ from data without estimating a high dimensional density ratio function.  

\subsection{Finding Density Ratio Preserving Map }
\begin{algorithm}[t]
\caption{Searching for a Density Ratio Preserving Map $\bolds$ }
\begin{algorithmic}[1]
    \STATE \textbf{Inputs:} $\mathcal{D}_p$, $\mathcal{D}_q$ and an initial guess of $\hat{\bolds}$. 
    \WHILE{$\hat{\bolds}$ not converged}
        \FOR{each $\boldx \in \mathcal{D}_p \cup \mathcal{D}_q$}
            \STATE $\boldz := \hat{\bolds}(\boldx)$
            \STATE  $\left(\hat{\boldw}(\boldz), \hat{b}(\boldz)\right) := \argmin_{\boldw\in \mathbbR^{\color{red}m}, b\in \mathbbR} \ell(\boldw, b; \boldz, \mathcal{D}_{{\color{red}{p^\circ}}},\mathcal{D}_{\color{red}{q^\circ}})\mid_{\color{red}\bolds = \hat{\bolds}}$ 
            \STATE $\hat{d}(\boldz) :=
    \langle  \hat{\boldw}\left(\boldz\right), \boldz\rangle + \hat{b}\left(\boldz\right)$
        \ENDFOR
        \STATE $\hat{\bolds} := \argmax_{\bolds\in \mathbb{S}} \widehat{\mathbbE}_p\left[\hat{d}\left(\bolds\left(\boldx\right)\right)\right]
    - \widehat{\mathbbE}_q\left[
    \psi_\mathrm{con}\left({ \hat{d}\left(\bolds\left(\boldx\right)\right)}\right)\right]$
    \ENDWHILE
    \STATE \textbf{Output:} $\hat{\bolds}$.
\end{algorithmic}
\label{alg.search.s}
\end{algorithm}
\begin{theorem}\label{thm.dim.red}
    Suppose $h$ is associated with an $f$-divergence $D_\phi$ according to Theorem \ref{thm.gf} and $D_\psi$ is the mirror of $D_\phi$. 
    If $r(\boldx) = g^*(\bolds^*(\boldx))$, then $\bolds^*$ must be an $\arg \sup$ of the following objective:
    \begin{align}
    \label{eq.dim.red}
        \sup_{\bolds} \mathbbE_{p} \left[ h(r^\circ(\bolds(\boldx))) \right] - \mathbbE_{q} \left[ \psi_\mathrm{con}(h(r^\circ(\bolds(\boldx)))) \right]. 
    \end{align}
\end{theorem}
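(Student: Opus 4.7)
The plan is to reduce Theorem \ref{thm.dim.red} to the combination of Proposition \ref{lem.dim.red} (which identifies $\bolds^*$ as a density ratio preserving map and pins down $g^* = r^\circ$) and the variational characterization of the mirror divergence $D_\psi[p,q]$ established in \eqref{eq.fdiv.variational} and Proposition \ref{def.hrx.estimator}. The idea is that the objective in \eqref{eq.dim.red} is simply the mirror variational lower bound \eqref{eq.fdiv.variational} evaluated at the restricted discriminator $d_\bolds(\boldx) := h(r^\circ(\bolds(\boldx)))$, so the global supremum $D_\psi[p,q]$ is an upper bound on \eqref{eq.dim.red}, and I need only exhibit one choice of $\bolds$ attaining this upper bound.

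First I would observe that, by hypothesis, $r(\boldx) = g^*(\bolds^*(\boldx))$, so Proposition \ref{lem.dim.red} applies and yields both $r(\boldx) = r^\circ(\bolds^*(\boldx))$ for all $\boldx \in \mathcal{X}$ and $g^* = r^\circ$. Consequently $d_{\bolds^*}(\boldx) = h(r^\circ(\bolds^*(\boldx))) = h(r(\boldx))$, which is precisely the unconstrained maximizer identified in Proposition \ref{def.hrx.estimator}. Substituting into the objective of \eqref{eq.dim.red} therefore gives
\begin{align*}
\mathbbE_p[h(r^\circ(\bolds^*(\boldx)))] - \mathbbE_q[\psi_\mathrm{con}(h(r^\circ(\bolds^*(\boldx))))] = \mathbbE_p[h(r(\boldx))] - \mathbbE_q[\psi_\mathrm{con}(h(r(\boldx)))] = D_\psi[p,q].
\end{align*}

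Next, for an arbitrary measurable $\bolds$, the function $d_\bolds$ is a bona fide candidate in the unconstrained supremum \eqref{eq.fdiv.variational}, so
\begin{align*}
\mathbbE_p[h(r^\circ(\bolds(\boldx)))] - \mathbbE_q[\psi_\mathrm{con}(h(r^\circ(\bolds(\boldx))))] \le \sup_d \mathbbE_p[d(\boldx)] - \mathbbE_q[\psi_\mathrm{con}(d(\boldx))] = D_\psi[p,q].
\end{align*}
Combining the two displays shows that $\bolds^*$ attains the supremum on the right-hand side of \eqref{eq.dim.red}, which is the claim.

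The only delicate point, and the place I would spend care when writing it up formally, is the change-of-variable argument implicit in using $d_\bolds(\boldx) = h(r^\circ(\bolds(\boldx)))$ inside an expectation under $p$ and $q$: one must verify that $\mathbbE_p[h(r^\circ(\bolds(X_p)))] = \mathbbE_{p^\circ}[h(r^\circ(S_p))]$ and likewise for $q$, which follows from the definitions of $p^\circ, q^\circ$ as the pushforwards of $p,q$ through $\bolds$, together with mild measurability/integrability conditions so that the variational lower bound \eqref{eq.fdiv.variational} is applicable. Apart from this bookkeeping, the proof is essentially a one-line consequence of Proposition \ref{lem.dim.red} plus Proposition \ref{def.hrx.estimator}.
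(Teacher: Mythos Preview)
Your proof is correct and rests on the same variational identity \eqref{eq.fdiv.variational} as the paper's. The organization differs slightly: you bound the objective above by $D_\psi[p,q]$ (since each $d_\bolds$ is a feasible discriminator in \eqref{eq.fdiv.variational}) and then invoke Proposition~\ref{lem.dim.red} to show that $\bolds^*$ saturates this bound via $r^\circ(\bolds^*(\cdot)) = r(\cdot)$. The paper instead introduces the joint problem $\sup_{\bolds,g}\mathbbE_p[h(g(\bolds(\boldx)))] - \mathbbE_q[\psi_\mathrm{con}(h(g(\bolds(\boldx))))]$, rewrites it over $p^\circ,q^\circ$ by the law of the unconscious statistician, and profiles out $g$ by applying Proposition~\ref{def.hrx.estimator} to $p^\circ,q^\circ$ (so the inner supremum is attained at $g=r^\circ$ for every $\bolds$); this collapses the joint problem to \eqref{eq.dim.red}, and since $(g^*,\bolds^*)$ attains the joint supremum, $\bolds^*$ attains the profiled one. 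Your route is marginally more direct and explicitly uses Proposition~\ref{lem.dim.red}, which the paper states but does not actually invoke in its own proof; the paper's profiling argument, on the other hand, makes transparent why $r^\circ$ (rather than some other $g$) appears inside the objective without appealing to Proposition~\ref{lem.dim.red}.
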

\begin{proof}
    Since $r(\boldx) = g^*(\bolds^*(\boldx))$, Proposition~\ref{def.hrx.estimator} implies that
    $(g^*, \bolds^*)$ is necessarily an $\arg\sup$ to the following optimization problem:
    \begin{align*}
        &\sup_{\bolds, g} \mathbbE_{p} \left[ h(g(\bolds(\boldx))) \right] - \mathbbE_{q} \left[ \psi_\mathrm{con}(h(g(\bolds(\boldx)))) \right].
    \end{align*}
    Due to the law of unconscious statistician, $\mathbbE_p[f(\bolds(\boldx))] = \mathbbE_{p^\circ}[f(\boldz)]$, where $\boldz = \bolds(\boldx)$. The above optimization problem can be rewritten as  
    \begin{align*}
        \sup_{\bolds} \sup_g \mathbbE_{p^\circ} \left[ h(g(\boldz)) \right] - \mathbbE_{q^\circ} \left[ \psi_\mathrm{con}(h(g(\boldz))) \right], 
    \end{align*} Proposition~\ref{def.hrx.estimator} states that for all $\bolds$, $g = r^\circ$ is an $\arg \sup$ of the inner optimization problem. Substituting this optimal solution of $g$ and rewriting the expectation using $\boldx$ again, we arrive
    \begin{align*}
        \sup_{\bolds} \mathbbE_{p} \left[ h(r^\circ(\bolds(\boldx))) \right] - \mathbbE_{q} \left[ \psi_\mathrm{con}(h(r^\circ(\bolds(\boldx)))) \right]. 
    \end{align*}
\end{proof}

Both expectations in \eqref{eq.dim.red} can be approximated using samples from $p$ and $q$. Given a fixed $\bolds$, $h( r^\circ(\boldz))$ can be approximated by an $m$-dimensional local linear interpolation 
\begin{align}
\label{eq.opt.wb}
    h(r^\circ(\boldz)) \approx \hat{d}(\boldz) :=
    \langle  \hat{\boldw}\left(\boldz\right), \boldz\rangle + \hat{b}\left(\boldz\right)
    , \left(\hat{\boldw}(\boldz), \hat{b}(\boldz)\right) := \argmin_{\boldw\in \mathbbR^{\color{red}m}, b\in \mathbbR} \ell(\boldw, b; \boldz, \mathcal{D}_{{\color{red}{p^\circ}}},\mathcal{D}_{\color{red}{q^\circ}}),
\end{align}
where $\mathcal{D}_{{\color{red}{p^\circ}}},\mathcal{D}_{\color{red}{q^\circ}}$ are sets of samples from $p^\circ$ and $q^\circ$ respectively. 

Approximating expectations in \eqref{eq.dim.red} with samples in $\mathcal{D}_p$ and $\mathcal{D}_q$ and replacing $h\circ r^\circ$ with $\hat{d}$, 
we solve the following optimization to obtain an estimate of $\bolds$:
\begin{align}
\label{eq.opt.s}
    \hat{\bolds} := \argmax_{\bolds\in \mathbb{S}} \frac{1}{n_p} \sum_{i=1}^{n_p}  \hat{d}\left[\bolds\left(\boldx_p^{(i)}\right)\right] 
    - \frac{1}{n_q} \sum_{i=1}^{n_q} \psi_\mathrm{con}\left[{ \hat{d}\left(\bolds(\boldx_q^{(i)})\right)}\right]. 
\end{align}
The optimization of 
\eqref{eq.opt.s} 
is a bi-level 
optimization problem as $\hat{d}$ depends on \eqref{eq.opt.wb}. We propose to divide the whole problem into two steps: First, let $\bolds = \hat{\bolds}$ and solve for $(\hat{\boldw},\hat{b})$.  Then, with the estimated $(\hat{\boldw},\hat{b})$, we solve for $\hat{\bolds}$. Repeat the above procedure until convergence. 
This algorithm is detailed in Algorithm \ref{alg.search.s}. 

In practice, we restrict $\mathbb{S}$ to be the set of all linear maps via a matrix $\boldS \in \mathbb{R}^{d \times m}$ whose columns are orthonormal basis, i.e., $\bolds(\boldx) := \boldS^\top \boldx$. The Jacobian $\nabla \bolds(\boldx)$ is simply $\boldS^\top$. 

After obtaining $\hat{\bolds}$, we can approximate the gradient flow using the chain rule described in \eqref{eq.dimred}: 
\[\nabla (h\circ r)(\boldx^\star) \approx \nabla^\top \hat{\bolds}(\boldx^\star) \hat{\boldw}(\boldz^\star),\]
where $\hat{\boldw}(\boldz^\star)$ is approximated by an $m$-dimensional local linear interpolation  
$$
\left(\hat{\boldw}(\boldz^\star), \hat{b}(\boldz^\star)\right) := \argmin_{\boldw\in \mathbbR^{\color{red}m}, b\in \mathbbR} \ell(\boldw, b; \boldz^\star, \mathcal{D}_{{\color{red}{p^\circ}}},\mathcal{D}_{\color{red}{q^\circ}})\mid_{\bolds = \hat{\bolds}}.
$$

\section{Discussion: Sufficient Condition of Density Ratio Preserving Map}
\label{sec.density.ratio.pres.proof}

In this Section, we provide a sufficient condition for $\bolds$ to be a density ratio preserving map.
\begin{lemma}
If there exists some $g:\mathbbR^m \to \mathbb{R}_+$, such that 
$r(\boldx) =  g(\bolds(\boldx)), $
then $\bolds$ is a density ratio preserving map and
$
g(\bolds(\boldx)) = r^{\circ} (\bolds(\boldx)).
$
\end{lemma}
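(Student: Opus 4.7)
The plan is to verify the defining identity $r(\boldx) = r^\circ(\bolds(\boldx))$ by showing that the hypothesized $g$ must coincide with $r^\circ$ on the range of $\bolds$. Since $r^\circ = p^\circ/q^\circ$ is built from the pushforward measures $P^\circ := \bolds_{\#} P$ and $Q^\circ := \bolds_\# Q$, the natural route is to compare the two measures obtained by (i) pushing $P$ forward through $\bolds$ directly, and (ii) reweighting $Q$ by $g\circ\bolds$ and then pushing forward.

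First, I would fix an arbitrary measurable set $A \subseteq \mathbb{R}^m$ and compute
\begin{align*}
P^\circ(A) \;=\; P(\bolds^{-1}(A)) \;=\; \int_{\bolds^{-1}(A)} p(\boldx)\,\dx \;=\; \int_{\bolds^{-1}(A)} r(\boldx)\,q(\boldx)\,\dx \;=\; \int_{\bolds^{-1}(A)} g(\bolds(\boldx))\,q(\boldx)\,\dx,
\end{align*}
using the assumption $r = g\circ \bolds$ in the last step. Then the law of the unconscious statistician (change of variables under the pushforward) gives
\begin{align*}
\int_{\bolds^{-1}(A)} g(\bolds(\boldx))\,q(\boldx)\,\dx \;=\; \mathbb{E}_q\!\left[g(\bolds(\boldx))\,\mathbf{1}_A(\bolds(\boldx))\right] \;=\; \int_A g(\boldz)\,q^\circ(\boldz)\,\mathrm{d}\boldz.
\end{align*}
Combining the two displays yields $\int_A p^\circ(\boldz)\,\mathrm{d}\boldz = \int_A g(\boldz)\,q^\circ(\boldz)\,\mathrm{d}\boldz$ for every measurable $A$, which forces $p^\circ(\boldz) = g(\boldz)\,q^\circ(\boldz)$ for $q^\circ$-a.e. $\boldz$. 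Dividing through gives $r^\circ(\boldz) = g(\boldz)$ on the support of $q^\circ$, and in particular
\[
r^\circ(\bolds(\boldx)) = g(\bolds(\boldx)) = r(\boldx),
\]
which is exactly Definition~\ref{def.drpreservingmap}, establishing that $\bolds$ is a density-ratio preserving map. Both conclusions of the lemma then follow at once.

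The only subtle step is the second equality, which is really the change-of-variables/pushforward identity; some care may be needed because $\bolds$ need not be injective (so one cannot invoke a Jacobian formula directly), but the abstract pushforward version of the change-of-variables theorem handles this with no extra work. Modulo that observation, the argument is a short measure-theoretic manipulation; I expect no serious obstacle.
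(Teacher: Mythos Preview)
Your argument is correct and is in fact more direct than the paper's. The paper proceeds variationally: it observes that $r = g\circ\bolds$ is equivalent to $\mathrm{KL}[p,\,q\cdot(g\circ\bolds)]=0$, so $g$ minimizes $\mathrm{KL}[p,\,q\cdot(g\circ\bolds)]$ subject to $\int q\,(g\circ\bolds)=1$; it then rewrites this optimization in the pushforward variables via the law of the unconscious statistician to obtain the identical problem $\min_g \mathrm{KL}[p^\circ,\,q^\circ g]$, whose minimizer is $r^\circ$, and concludes $g=r^\circ$. Your route instead computes $P^\circ(A)=\int_A g\,q^\circ$ for every measurable $A$ directly from the pushforward definition, yielding $p^\circ=g\,q^\circ$ and hence $g=r^\circ$ in one step. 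Your version is shorter and avoids the slightly delicate ``same solution sets'' identification (which tacitly uses uniqueness of the KL minimizer); the paper's version, on the other hand, keeps the argument in the variational language that pervades the rest of the paper.
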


\begin{proof}
    The statement $g(\bolds(\boldx))$ being a density ratio $\frac{p(\boldx)}{q(\boldx)}$ is equivalent to asserting that $\mathrm{KL}[p, q \cdot (g\circ \bolds)] = 0$.
    Since KL divergence is always non-negative, it means
    \begin{align}
    \label{eq.opt1}
         g = \argmin_{g: \int q(\boldx) 	 g(\bolds(\boldx))  \dx = 1, g \ge 0 } \mathbbE_p \left[ \log \frac{p(\boldx)}{q(\boldx)g(\bolds(\boldx))} \right] = \mathbbE_q [ \log g(\bolds(\boldx))] + C_1, 
    \end{align}    
    i.e., $g$ is a minimizer of $\mathrm{KL}[p, q \cdot (g \circ \bolds)]$, where $g \ge 0$ is constrained in a domain where $q \cdot (g \circ \bolds)$ is normalized to 1.

    Similarly, $r^\circ(\boldz)$ being a density ratio $\frac{p^\circ(\boldz)}{q^\circ(\boldz)}$ is the same as asserting that $\mathrm{KL}[p^\circ, q^\circ r^\circ] = 0$ and is equivalent to
    \begin{align}
    \label{eq.opt2}
         r^\circ = \argmin_{g: \int q^\circ(\boldz) g(\boldz) \mathrm{d} \boldz  = 1, g \ge 0} \mathbbE_{p^\circ} \left[ \log \frac{p^\circ(\boldz)}{q^\circ(\boldz)g(\boldz)} \right] = \mathbbE_{q^\circ}[ \log g(\boldz) ] + C_2,
    \end{align}
    i.e., $r^\circ$ is a minimizer of $\mathrm{KL}[p^\circ, q^\circ g ]$. 

    In fact, one can see that \eqref{eq.opt1} and \eqref{eq.opt2} are identical optimization problems due to the law of the unconscious statistician: 
    $\int q(\boldx) 	 g(\bolds(\boldx))  \dx = \mathbbE_q [ g(\bolds(\boldx)) ] = \mathbbE_{q^\circ}[  \left(g(\boldz) \right) ] = \int q^\circ(\boldz) g(\boldz) \mathrm{d} \boldz$
    and $\mathbbE_q [ \log g(\bolds(\boldx))] = \mathbbE_{q^\circ}[ \log g(\boldz) ]$, which means their solution sets are the same. Therefore, for any $g$ that minimizes \eqref{eq.opt1}, 
    it must also minimize \eqref{eq.opt2}. 
    Hence it satisfies the following equality 
    $
        r^\circ(\bolds(\boldx)) = g(\bolds(\boldx)) = r(\boldx)
    $, where the second equality is by our assumption.
\end{proof}

\section{Discussion: Stein Variational Gradient Descent}
\label{sec.app.svgd}
SVGD minimizes $\mathrm{KL}[q_{t+1}, p]$, where samples of $q_{t+1}$ is constructed using the following deterministic rule: 
\begin{align}
\label{eq.svgd.update.nai}
    \boldx_{t+1} = \boldx_{t} + \eta \boldu_t(\boldx_{t}). 
\end{align}
$\boldx_t$ are particles at iteration $t$, $\boldu_t \in \mathcal{H}^d$, a $d$-dimensional Reproducing Kernel Hilbert Space (RKHS) with a kernel function $l(\boldx, \boldx')$. 
\citet{LiuQ2016SVGD} shows
the optimal update $\boldu_t$ 
has a closed form:
\begin{align}
\label{eq.svgd.og}
    \boldu^{\mathrm{svgd}}_t := \mathbbE_{q_t}[l(\boldx_t, \cdot) \nabla \log p(\boldx_t) + \nabla l(\boldx_t, \cdot)].
\end{align} 
In practice, expectations  $\mathbbE_{q_t}[\cdot]$ can be approximated by $\widehat{\mathbbE}_{q_t}[\cdot]$, i.e., the sample average taken from the particles at time $t$.

\citet{Chewi2020svgd} links SVGD with $f$-divergence WGF: $\boldu_t^\mathrm{svgd}$ is \textit{the backward KL divergence WGF} under the coordinate-wise transform of an integral operator. 
Indeed, the $i$-th dimension of SVGD update 
    $u^{\mathrm{svgd}}_{t,i}$ can be expressed as 
\begin{align}
\label{eq.svgd.gf}
        u^{\mathrm{svgd}}_{t,i} := & \mathbbE_{q_t} \left[l(\boldx_t, \cdot) \partial_i \log p(\boldx_t) \right]  + \mathbbE_{q_t} \left[ \partial_i l(\boldx_t, \cdot) \right] \notag \\
    = &\mathbbE_{q_t} \left[l(\boldx_t, \cdot) \partial_i \log p(\boldx_t) \right]  - \mathbbE_{q_t} \left[ l(\boldx_t, \cdot ) \partial_i \log q_t(\boldx_t) \right] \notag \\
    = &\int q_t(\boldx_t) l(\boldx_t, \cdot) \partial_i \log \frac{p(\boldx_t)}{q_t(\boldx_t)} \mathrm{d} \boldx_t,
\end{align}
where the last line is an integral operator \citep{wainwright_2019} of the functional  $\partial_i \log \frac{p}{q_t}$, i.e., the $i$-th dimension of the backward KL divergence flow $\nabla \log r_t$. \footnote{Note that the second equality in \eqref{eq.svgd.gf} is due to the integration by parts, and only holds under conditions that $\lim_{\|\boldx\| \to \infty} q_t(\boldx) = 0$.}
Due to the reproducing property of RKHS, the SVGD update at some fixed point $\boldx^\star$ can be written as 
\[ \boldu^\mathrm{svgd}(\boldx^\star) = \langle \boldu^\mathrm{svgd}_t, l(\cdot, \boldx^\star) \rangle_{\mathcal{H}^d}   = \mathbbE_q[l(\boldx, \boldx^\star)\nabla \log r(\boldx)].\]

\section{Additional Experiments}

\subsection{Gradient Estimation}
\begin{figure}
    \centering
    \includegraphics[width=.45\textwidth]{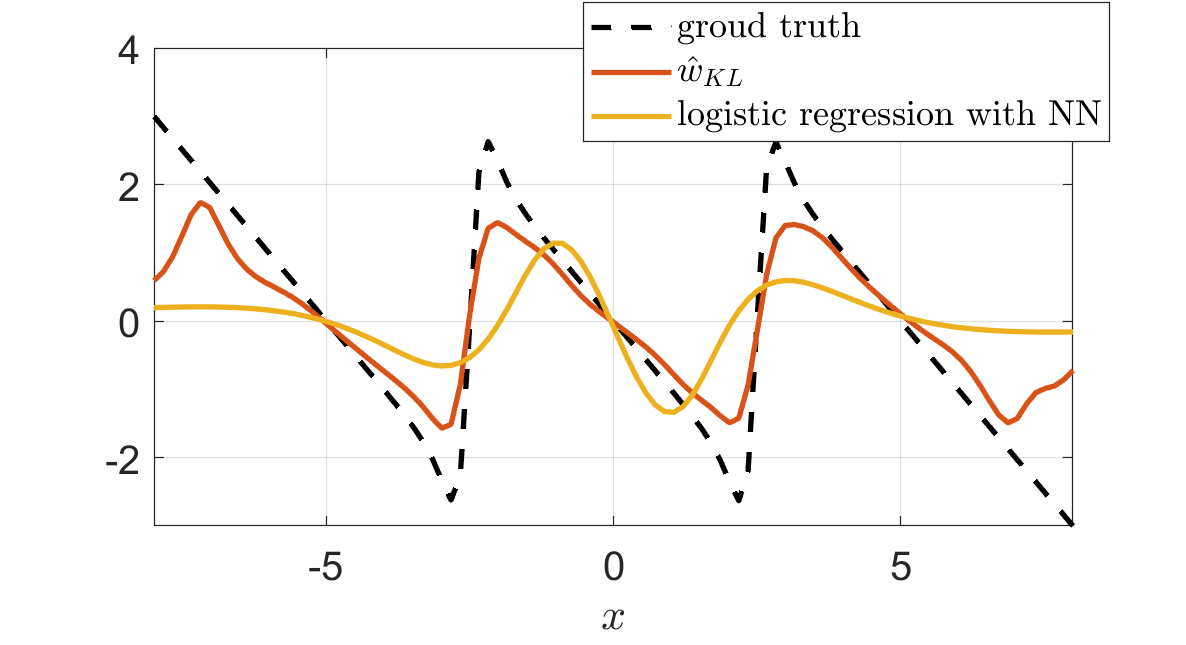}
    \includegraphics[width=.45\textwidth]{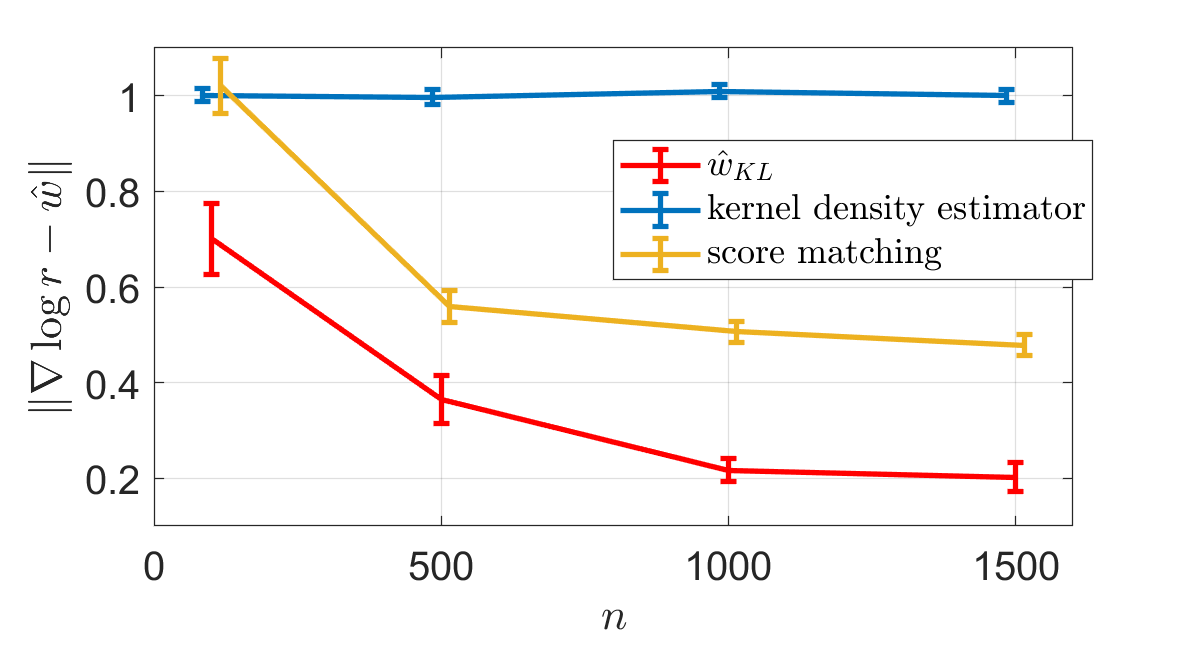}
    \caption{Left: $\nabla \log r(\boldx)$ and its approximations. Right:  Estimation error with standard error.}
    \label{fig.gradient.estimation}
\end{figure}
Now we investigate the performance of estimating $\nabla \log r(\boldx)$ using the proposed gradient estimator and an indirect estimator using logistic regression. For the indirect estimator,  we first train a Multilayer Perceptron (MLP) using a binary logistic regression to approximate $\log r$. 
Then obtain $\nabla \log r$ by auto-differentiating the estimated log ratio. The kernel bandwidth in our method is tuned by using the model selection criterion described in Section \ref{sec.model.selection}. 

To conduct the experiments, we let $p = \mathcal{N}(-5, .5)/3 + \mathcal{N}(0, .5)/3 + \mathcal{N}(5, .5)/3$  and $q = \mathcal{N}(-5, 1)/3 + \mathcal{N}(0, 1)/3 + \mathcal{N}(5, 1)/3$. From each distribution, 5000 samples are generated for approximating the gradients. 

The left plot in Figure \ref{fig.gradient.estimation} shows the true gradient and its approximations. It can be seen that the direct gradient estimation is more accurate than estimating the log ratio first then taking the gradient. 

The right plot in Figure \ref{fig.gradient.estimation}  displays the estimation errors of different methods,  comparing the proposed method with Kernel Density Estimation (KDE) and score matching, all applied to the same distributions in the previous experiment.  
KDE was previously used in approximating WGF \citep{wang2022projected}. It first estimates $p$ and $q$ with $\hat{p}$ and $\hat{q}$ separately using non-parametric kernel density estimators, then approximates $\nabla \log r$ with $\nabla \log \hat{p} - \nabla \log \hat{q}$ . 
The score matching approximates $\nabla \log \hat{p}$ and $\nabla \log \hat{q}$ with the minimizers of Fisher-divergence. It has also been used in simulating particle ODEs in a previous work \citep{maoutsa2020interacting}. 
The estimation error plot shows that the proposed estimator yields more accurate results compared to the other two kernel-based gradient estimation methods, namely KDE and score matching.  

\subsection{Wasserstein Gradient Flow for Generative Modelling}
\label{sec.wgf}
\begin{figure}
    \centering
    \includegraphics[width=.8\textwidth]{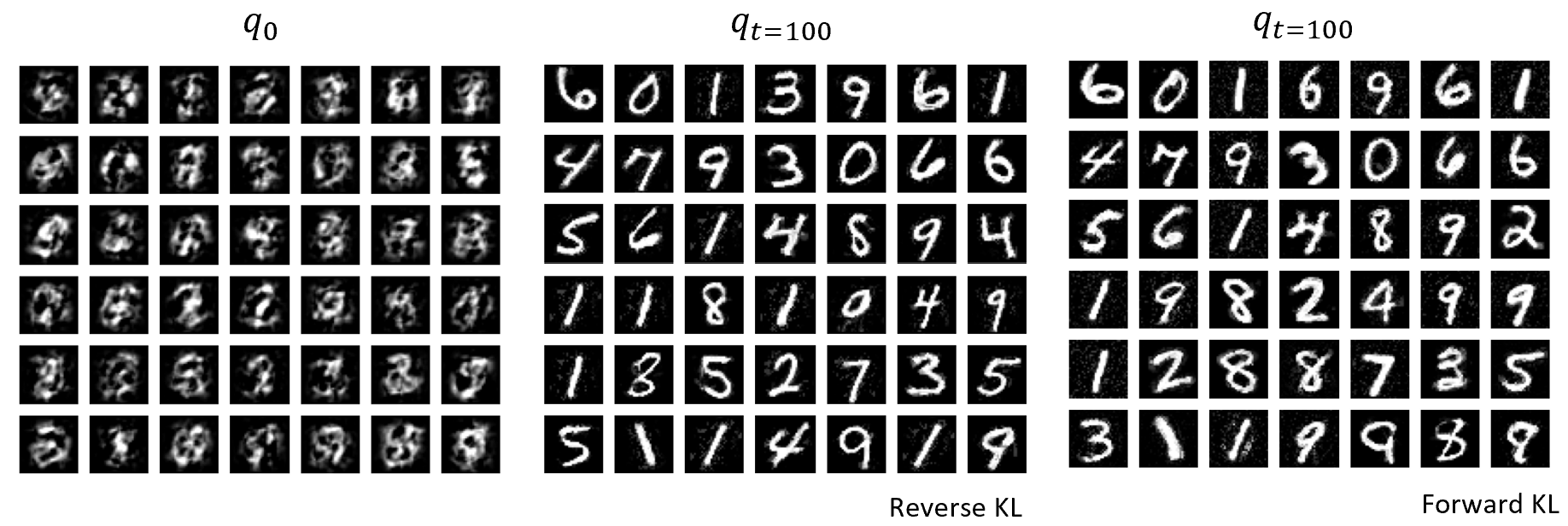}
    \caption{Samples generated using forward and backward KL velocity field.}
    \label{fig.mnist}
\end{figure}
In this experiment, we test the performance of the proposed gradient estimators by generating samples from a high dimensional target distribution (MNIST handwritten digits). 
We will check whether the quality of the particles can be improved by performing WGC using our estimated updates. Note that we do not intend to compare the generated samples with NN-based approaches, as the focus of our paper is on local estimation using kernel functions. 
We perform two different WGFs, forward KL and backward KL whose updates are approximated using $\hat{\boldw}_\rightarrow$ and $\hat{\boldw}_\leftarrow$ respectively. We let the initial particle distribution $q_0$ be $\mathcal{N}(\boldmu, \boldSigma)$, where $\boldmu, \boldSigma$ are the mean and covariance of the target dataset and fix the kernel bandwidth using ``the median trick'' (see the appendix for details).

The generated samples together with samples from the initial distribution $q_0$ are shown in Figure \ref{fig.mnist}.
Judging from the generated sample quality, it can be seen that both VGDs perform well and both have made significant improvements from the initial samples drawn from $q_0$. 

We also provide two videos showing the animation of the first 100 gradient steps:
\begin{itemize}
    \item Forward KL: \url{https://youtube.com/shorts/HZcvUykrpbc}
    \item Backward KL: \url{https://youtube.com/shorts/AgN6dsDecCM}
\end{itemize}

\subsection{Feature Space Wasserstein Gradient Flow}
\label{sec.exp.fea}
\begin{figure}
    \centering
    \includegraphics[width = .8\textwidth]{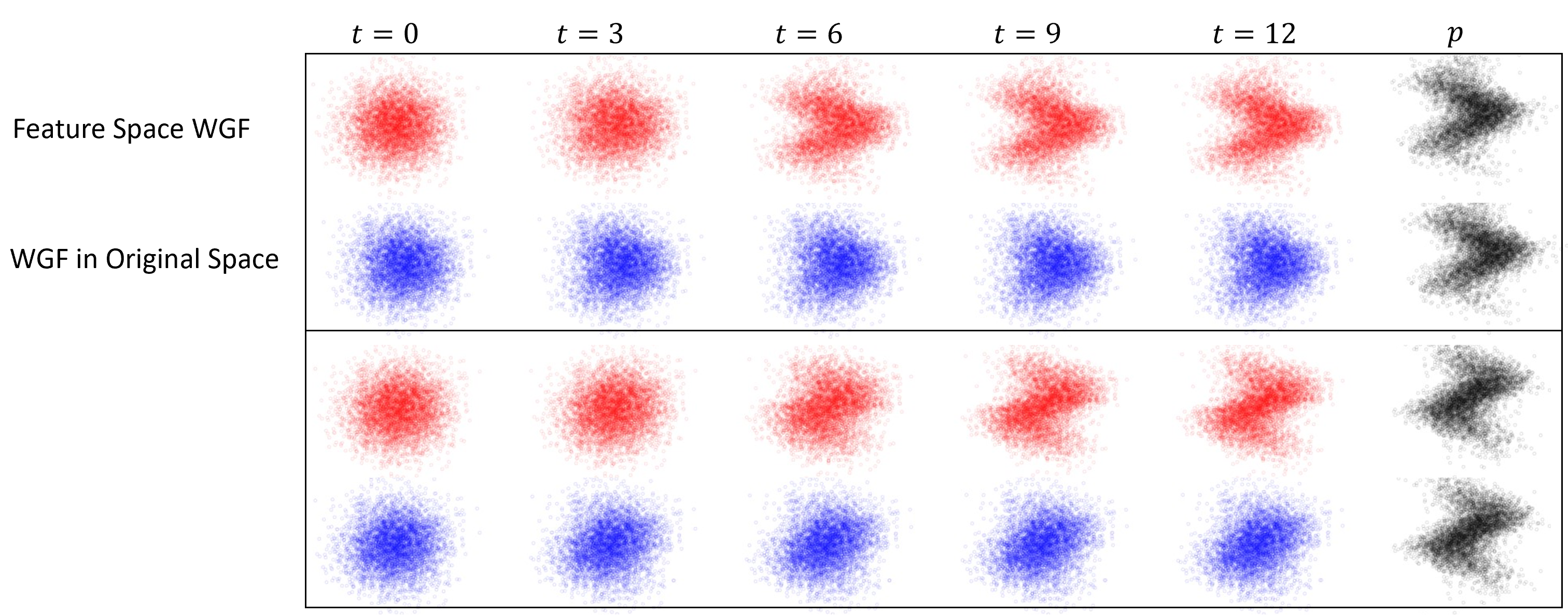}
    \caption{$\{\boldx_{q_t}\}$ in the first two dimensions $x_1, x_2$. 
    Feature space WGF converges in fewer steps compared to WGF in the original space. Above: $g = \cos$. Below: $g = \sin$. }
    \label{fig.par.fea}
\end{figure}

In this experiment, we run WGF in a 5-dimensional space. The target distribution is \[p = p_{12}(\boldx_{1,2})p_{345}(\boldx_{3,4,5}), p_{345}:=\mathcal{N}(\boldzero, \boldI)\] 
and $p_{12}$ is constructed by a generative process. We generate samples in the first two dimensions as follows 
\[
    X_{1} = g(X_{2}) + \epsilon, X_{2} \sim \mathcal{N}(0, 1), \epsilon \sim \mathcal{N}(0, 1).
\]
In our experiment, we draw 5000 samples from $p$, and 5000 samples from $q_0 := \mathcal{N}(\boldzero, \boldI)$, and run the backward KL gradient flow. 
Clearly, this WGF has a low dimensional structure since $p$ and $q$ only differs in the first two dimensions. 
We also run feature space backward KL field whose updates are calculated using \eqref{eq.dimred}. The feature function $\bolds(\boldx) = \boldS^\top\boldx$ is learned by Algorithm \ref{alg.search.s}. 

The resulting particle evolution for both processes is plotted in Figure \ref{fig.par.fea}. For visualization purposes, we only plot the first two dimensions. It can be seen that the particles converge much faster when we explicitly exploit the subspace structure using the feature space WGF. In comparison, running WGF in the original space converges much slower. 

In this experiments, we set the learning rates for both WGF and feature space WGF to be 0.1 and the kernel bandwidth $\sigma$ in our local estimators is tuned using cross validation with a candidate set ranging from 0.1 to 2. 

\subsection{NW, LL and SVGD with Different Sample Sizes}
\label{sec.svgd.diff.n}
See Figure \ref{fig:svgd-diff-n}. 
\begin{figure}
    \centering
    \includegraphics[width=.98\textwidth]{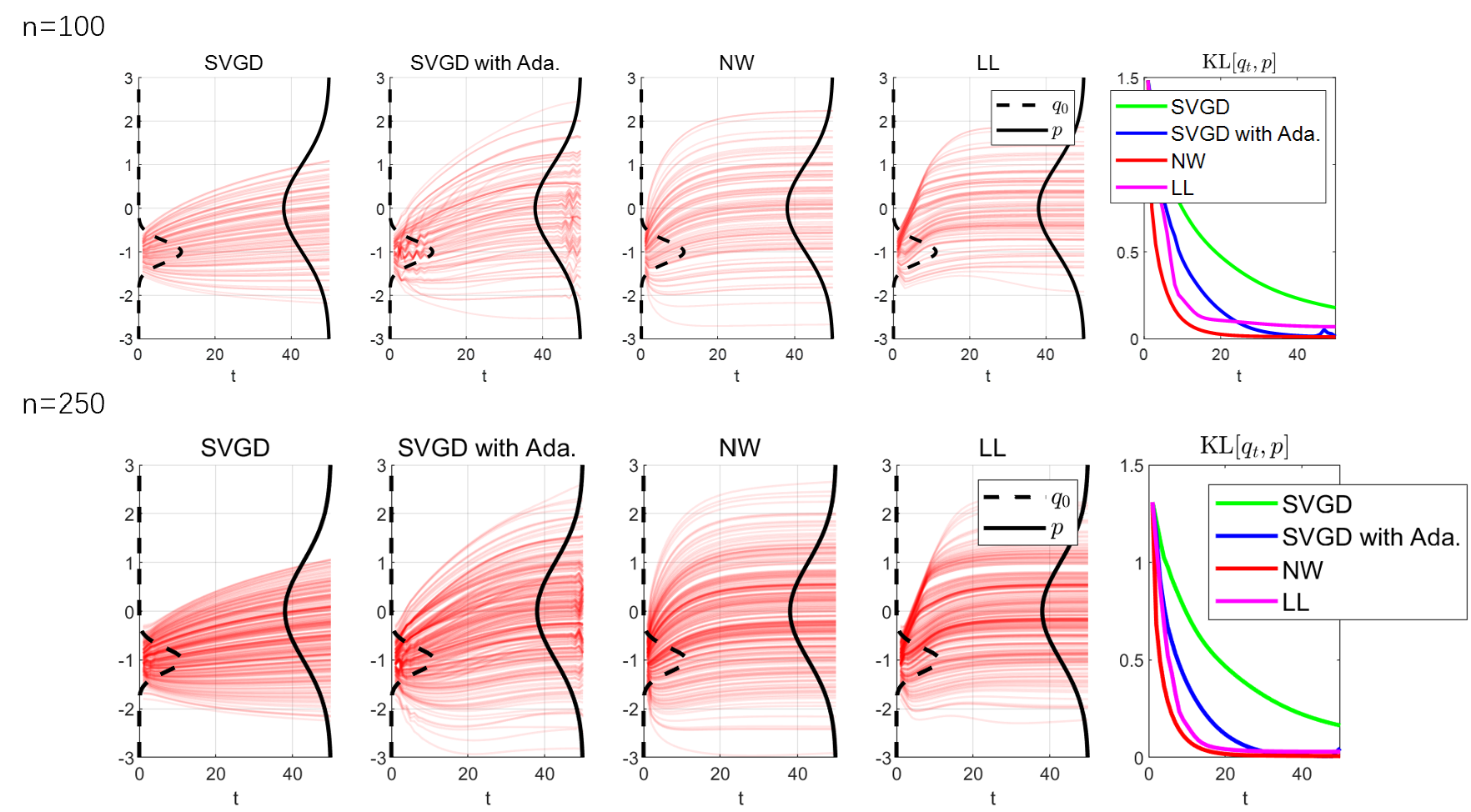}
    \caption{NW, LL, SVGD compared with sample size $n = 100, 250.$}
    \label{fig:svgd-diff-n}
\end{figure}

\end{document}